\def\eqref#1{equation~\ref{#1}}
\def\1{\bm{1}}
\def\eps{{\epsilon}}
\DeclareMathAlphabet{\mathsfit}{\encodingdefault}{\sfdefault}{m}{sl}
\SetMathAlphabet{\mathsfit}{bold}{\encodingdefault}{\sfdefault}{bx}{n}
\newcommand{\R}{\mathbb{R}}
\DeclareMathOperator*{\argmax}{arg\,max}
\DeclareMathOperator*{\argmin}{arg\,min}
\setlist[enumerate]{leftmargin=2em}
\setlist[itemize]{leftmargin=2em}
\newtheorem{theorem}{Theorem}[section]
\newtheorem{lemma}[theorem]{Lemma}
\newtheorem{corollary}[theorem]{Corollary}
\theoremstyle{definition}
\crefname{assumption}{assumption}{assumptions}
\def\ddefloop#1{\ifx\ddefloop#1\else\ddef{#1}\expandafter\ddefloop\fi}
\def\ddef#1{\expandafter\def\csname bb#1\endcsname{\ensuremath{\mathbb{#1}}}}
\def\ddef#1{\expandafter\def\csname c#1\endcsname{\ensuremath{\mathcal{#1}}}}
\def\ddef#1{\expandafter\def\csname h#1\endcsname{\ensuremath{\hat{#1}}}}
\def\ddef#1{\expandafter\def\csname v#1\endcsname{\ensuremath{\boldsymbol{#1}}}}
\def\ddef#1{\expandafter\def\csname v#1\endcsname{\ensuremath{\boldsymbol{\csname #1\endcsname}}}}
\def\R{\mathbb{R}}
\def\eR{\overline{\mathbb{R}}}
\def\1{\mathds{1}}
\newcommand{\ip}[2]{\left\langle #1, #2 \right \rangle}
\def\nf{\nabla f}
\def\cRz{\cR_{\textup{AZ}}}
\def\eps{\epsilon}
\def\T{{\scriptscriptstyle\mathsf{T}}}
\def\argmax{\mathop{\arg\,\max}}
\def\hcR{\widehat{\cR}_{\textup{A}}}
\def\nhcR{\nabla\widehat{\cR}_{\textup{A}}}
\def\hcRi{\hcR^{(i)}}
\def\barf{\bar f}
\def\sgn{\textup{sgn}}
\def\tcO{{\widetilde{\cO}}}
\def\tcOm{{\widetilde{\Omega}}}
\def\tcT{{\widetilde{\Theta}}}
\def\Rad{\textup{Rad}}
\def\barUi{U_{\!\infty}}
\def\barUie{U_{\!\infty}^\eps}
\def\radopt{R_{\textup{gd}}}
\def\la{\ell_{\textup{A}}}
\def\lak{\ell_{\textup{A}, k}}
\def\cRa{\cR_{\textup{A}}}
\def\cRez{\cR_{\textup{EAZ}}}
\def\ellc{\ell_c}
\title{On Achieving Optimal Adversarial Test Error}
\author{Justin D. Li \& Matus Telgarsky \\
University of Illinois, Urbana-Champaign \\
\texttt{\{jdli3,mjt\}@illinois.edu}
}
\begin{document}

\maketitle

\begin{abstract}
  We
  first
  elucidate various fundamental properties of optimal adversarial predictors:
  the structure of optimal adversarial convex predictors in terms of optimal adversarial zero-one predictors,
  bounds relating the adversarial convex loss to the adversarial zero-one loss,
  and the fact that continuous predictors can get arbitrarily close to the optimal adversarial error
  for both convex and zero-one losses.
  Applying these results along with new Rademacher complexity bounds for adversarial training near initialization,
we prove that for general data distributions and perturbation sets,
  adversarial training on shallow networks with early stopping and an idealized optimal adversary
  is able to achieve optimal adversarial test error.
  By contrast, prior theoretical work either considered specialized data distributions
  or only provided training error guarantees.
\end{abstract}

\section{Introduction}
Imperceptibly altering the input data in a malicious fashion
can dramatically decrease the accuracy of neural networks \citep{szegedy}.
To defend against such \emph{adversarial attacks},
maliciously altered training examples can be incorporated into the training process,
encouraging robustness in the final neural network.
Differing types of attacks used during this adversarial training, such as FGSM \citep{goodfellow2015explaining}, PGD \citep{madry}, and the C\&W attack \citep{cw-attack},
which are optimization-based procedures that try to find bad perturbations around the inputs,
have been shown to help with robustness.
While many other defenses have been proposed \citep{guo,dhillon,xie},
adversarial training is the standard approach \citep{athalye}.
Despite many advances,
a large gap still persists between the accuracies we are able to achieve
on non-adversarial and adversarial test sets.
For instance, in \citet{madry}, a wide ResNet model was able to
achieve 95\% accuracy on CIFAR-10 with standard training,
but only 46\% accuracy on CIFAR-10 images with perturbations arising from PGD bounded by $8/255$ in each coordinate,
even with the benefit of adversarial training.

In this work we seek to better understand the optimal adversarial predictors we are trying to achieve,
as well as how adversarial training can help us get there.
While several recent works have analyzed properties of optimal adversarial zero-one classifiers \citep{bhagoji,pydi2020adversarial,awasthi-bayes},
in the present work we build off of these analyses to characterize optimal adversarial convex surrogate loss classifiers.
Even though some prior works have suggested shifting away from the use of convex losses in the adversarial setting
because they are not adversarially calibrated \citep{bao,awasthi-surrogate,awasthi-finer,awasthi-h-consistency,awasthi2022multiclass},
we show the use of convex losses is not an issue as long as a threshold is appropriately chosen.

We will also show that under idealized settings,
adversarial training can achieve the optimal adversarial test error.
In prior work, guarantees on the adversarial test error have been elusive, except in the specialized case of linear regression \citep{fanny,hassani-linear,hassani2022curse}.
Our analysis is in the Neural Tangent Kernel (NTK)
or near-initialization regime,
which is the dominant setting throughout the mathematical analysis of gradient descent
on neural networks
\citep{jacot,du}.
Of many such works, our analysis is closest to that of \citet{ji-li-telgarsky},
which provides a general test error analysis, but for standard (non-adversarial) training.

Recent work of \citet{rice-wong-kolter} suggests that early stopping helps with adversarial training,
as otherwise the network enters a robust overfitting phase
in which the adversarial test error quickly rises while the adversarial training error continues to decrease.
The present work uses a form of early stopping, and so is in the earlier regime where there is little to no overfitting.

\subsection{Our Contributions}

\begin{figure}[t]
  \centering
  \includegraphics[width = 0.95\textwidth]{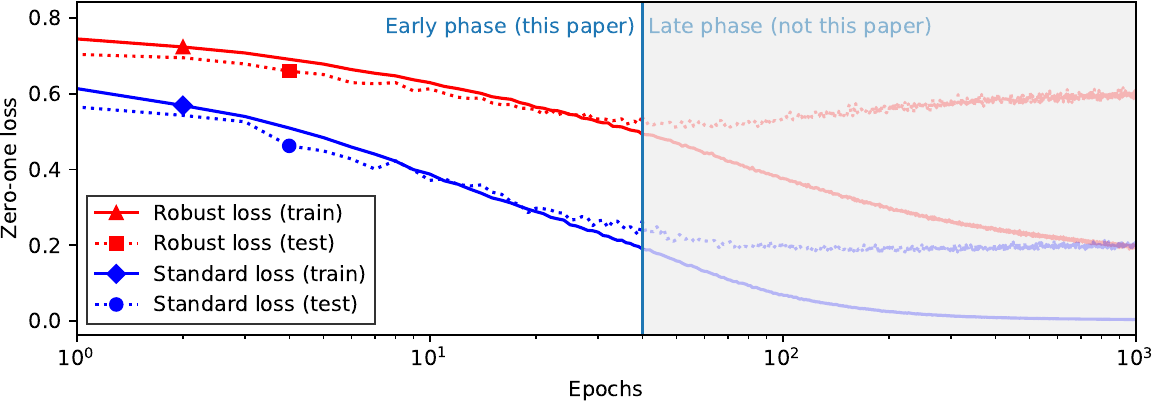}
  \caption{A plot of the (robust/standard) zero-one (training/test) loss throughout training for an adversarially trained network,
using code due to \citet{rice-wong-kolter}, with a constant step size of $0.01$.
      The present work is set within the early phase of training,
      where we can get arbitrarily close to the optimal adversarial test error.
      In fact,
our analysis will be further restricted to an even earlier portion of this phase,
      as we remain within the near-initialization/NTK regime.
      As noted in prior work, adversarial training, as compared with standard training,
      seems to have more fragile test-time performance
      and quickly enters a phase of severe overfitting,
      but we do not consider this issue here.}
  \label{fig:fig}
\end{figure}

In this work, we prove structural results on the nature of predictors that are close to,
or even achieve, optimal adversarial test error.
In addition, we prove adversarial training on shallow ReLU networks
can get arbitrarily close to the optimal adversarial test error
over all measurable functions.
This theoretical guarantee requires the use of optimal adversarial attacks during training,
meaning we have access to an oracle that gives, within an allowed set of perturbations,
the data point which maximizes the loss.
We also use early stopping so that we remain in the near-initialization regime
and ensure low model complexity.
The main technical contributions are as follows.
\begin{enumerate}
\item \textbf{Optimal adversarial predictor structure (\Cref{sec:losses}).}
  In contrast to prior work that suggests avoiding convex losses as they are not adversarially calibrated
  \citep{bao,awasthi-surrogate,awasthi-finer,awasthi-h-consistency,awasthi2022multiclass},
  we show that for predictors whose adversarial convex loss is almost optimal,
  \emph{when an appropriate threshold is chosen} its adversarial zero-one loss is also almost optimal (cf. \Cref{thm:loss:error-gap}).
  This \namecref{thm:loss:error-gap} translates bounds on adversarial convex losses,
  such as those in \Cref{sec:training},
  into bounds on adversarial zero-one losses when optimal thresholds are chosen.
  We prove this and other fundamental results about optimal adversarial predictors
  by relating the global adversarial convex loss to global adversarial zero-one losses (cf. \Cref{lemma:loss:global}).
  We show that optimal adversarial convex loss predictors are directly related to
  optimal adversarial zero-one loss predictors (cf. \Cref{lemma:loss:optimal-convex:optimal-0-1}).
  Using our structural results of optimal adversarial predictors,
  we prove that continuous functions can get arbitrarily close to the optimal test error
  given by measurable functions (cf. \Cref{lemma:loss:cts-vs-meas}).

  \item \textbf{Adversarial training (\Cref{sec:training}).}
    Prior work analyzing adversarial training does so under restrictive settings,
    such as considering linear models \citep{hassani-linear},
    handling nonconstant-sized perturbations \citep{jason},
    or imposing strong separability conditions on the training data \citep{zhang}.
    In this work we analyze adversarial training under much more general settings,
    considering shallow ReLU networks,
    using constant-sized perturbations,
    and handling general data distributions.
    For idealized settings, we show adversarial training leads to optimal adversarial predictors.
    \begin{enumerate}
      \item \textbf{Generalization bound.}
      We prove a near-initialization generalization bound for adversarial risk (cf. \Cref{fact:generalization}).
      To do so, we provide a Rademacher complexity bound for linearized functions around initialization (cf. \Cref{fact:rademacher}).
The overall bound scales directly with the parameter's distance from initialization,
      and $1/\sqrt{n}$, where $n$ is the number of training points.
      Included in the bound is a perturbation term which depends on the width of the network,
      and in the worst case scales like $\tau^{1/4}$,
      where $\tau$ bounds the $\ell_2$ norm of the perturbations.
\item \textbf{Optimization bound.}
      We show that using an optimal adversarial attack during gradient descent training results
      in a network which is adversarially robust on the training set,
      in the sense that it is not much worse compared to an arbitrary reference network (cf. \Cref{lemma:optimization}).
      Comparing to a reference network instead of just ensuring low training error (as in prior work)
      will be key to obtaining a good generalization analysis,
      as the optimal adversarial test error may be high.
      \item \textbf{Optimal test error.}
      As the generalization and optimization bounds are both in a near-initialization setting,
      these two bounds can be used in conjunction.
      We first bound the test error of our trained network
      in terms of its training error using the generalization bound,
      and then apply the optimization bound to compare against training error of an arbitrary reference network.
      Another application of our generalization bound then allows us to compare against
      the test error of an arbitrary reference network (cf. \Cref{thm:main}).
      Applying approximation bounds and \Cref{lemma:loss:cts-vs-meas} then lets us
      bound our trained network's test error in terms of
      the optimal test error over all measurable functions (cf. \Cref{cor:clean}).
    \end{enumerate}
\end{enumerate}

\section{Related Work}\label{related}

We highlight several papers in the adversarial and near-initialization communities
that are relevant to this work.
\paragraph{Optimal adversarial predictors.}
Several works study the properties of optimal adversarial predictors
when considering the zero-one loss \citep{bhagoji,pydi2020adversarial,awasthi-bayes}.
In this work, we are able to understand optimal adversarial predictors under convex losses
in terms of those under zero-one losses,
although we will not make use of any properties of optimal zero-one adversarial predictors
other than the fact that they exist.
Other works study the inherent tradeoff between robust and standard accuracy \citep{tsipras2019robustness,trades},
but these are complementary to this work as we only focus on the adversarial setting.

\paragraph{Convex losses.}
Several works explore the relationship between convex losses and zero-one losses
in the non-adversarial setting \citep{zhang-convex,bartlett-convex}.
Whereas the optimal predictor in the non-adversarial setting can be understood locally
at individual points in the input domain,
it is difficult to do so in the adversarial setting
due to the possibility of overlapping perturbation sets.
As a result, our analysis will be focused on the global structure of optimal adversarial predictors.
Convex losses as an integral over reweighted zero-one losses have appeared before \citep{savage,schervish,hernandez-orallo},
and we will adapt and make use of this representation in the adversarial setting.

\paragraph{Adversarial surrogate losses.}
Several works have suggested convex losses are inappropriate in the adversarial setting because they are not calibrated,
and instead propose using non-convex surrogate losses \citep{bao,awasthi-surrogate,awasthi-finer,awasthi-h-consistency,awasthi2022multiclass}.
In this work, we show that with appropriate thresholding convex losses are calibrated,
and so are an appropriate choice for the adversarial setting.

\paragraph{Near-initialization.}
Several works utilize the properties of networks in the near-initialization regime to obtain bounds on the test error
when using gradient descent \citep{li-liang,arora,cao2019generalization,nitanda2020gradient,polylog,chen,ji-li-telgarsky}.
In particular, this paper most directly builds upon the prior work of \citet{ji-li-telgarsky}, which showed that shallow neural networks
could learn to predict arbitrarily well. We adapt their analysis to the adversarial setting.

\paragraph{Adversarial training techniques.}
Adversarial training initially used FGSM \citep{goodfellow2015explaining} to find adversarial examples.
Numerous improvements have since been proposed, such as iterated FGSM \citep{iterFGSM} and PGD \citep{madry},
which strives to find even stronger adversarial examples.
These works are complementary to ours,
because here we assume that we have an optimal adversarial attack,
and show that with such an algorithm we can get optimal adversarial test error.
Some of these alterations \citep{trades,quanquan,miyato2018virtual,kannan}
do not strictly attempt to find a maximal adversarial attack at every iteration,
but instead use some other criteria.
However, \citet{rice-wong-kolter} proposes that many of the advancements to adversarial training since PGD
can be matched with early stopping.
Our work corroborates the power of the early stopping in adversarial training
as we use it in our analysis.

\paragraph{Adversarial training error bounds.}
Several works are able to show convergence of the adversarial \emph{training} error.
\citet{jason} did so for networks with smooth activations,
but is unable to handle constant-sized perturbations as the width increases.
Meanwhile \citet{zhang} uses ReLU activations,
but imposes a strong separability condition on the training data.
Our training error bounds use ReLU activations,
and in contrast to these previous works simultaneously hold for constant-sized perturbations
and consider general data distributions.
However, we note that the ultimate goals of these works differ,
as we focus on adversarial \emph{test} error.

\paragraph{Adversarial generalization bounds.}
There are several works providing adversarial generalization bounds.
They are not tailored to the near-initialization setting,
and so they are either looser or require assumptions that are not satisfied here.
These other approaches include SDP relaxation based bounds \citep{bartlett},
tree transforms \citep{khim-loh},
and covering arguments \citep{tu2019theoretical,awasthi,balda}.
Our generalization bound also uses a covering argument,
but pairs this with a near-initialization decoupling.
There are a few works that are able to achieve adversarial test error bounds in specialized cases.
They have been analyzed when the data distribution is linear,
both when the model is linear too \citep{fanny,hassani-linear},
and for random features \citep{hassani2022curse}.
In the present work, we are able to handle general data distributions.

\section{Properties of Optimal Adversarial Predictors}\label{sec:losses}

This section builds towards \Cref{thm:loss:error-gap},
relating zero-one losses to convex surrogate losses.

\subsection{Setting}

We consider a distribution $\cD$ with associated measure $\mu$ that is Borel measurable over $X \times Y$,
where $X \subseteq \R^d$ is compact, and $Y = \{-1,1\}$.
For simplicity, throughout we will take $X = B_1$ to be the closed Euclidean ball of radius 1 centered at the origin.
We allow arbitrary $P(y=1|x)\in[0,1]$ --- that is, the true labels may be noisy.

We will consider general adversarial perturbations.
For $x \in B_1$, let $\cP(x)$ be the closed set of allowed perturbations.
That is, an adversarial attack is allowed to change the input $x$ to any $x' \in \cP(x)$.
We will impose the natural restrictions
that $\emptyset \neq \cP(x) \subseteq B_1$ for all $x \in B_1$.
That is,
there always exists at least one perturbed input,
and perturbations cannot exceed the natural domain of the problem.
In addition, we will assume the set-valued function $\cP$ is upper hemicontinuous.
That is, for any $x \in X$ and any open set $U \supseteq \cP(x)$,
there exists an open set $V \ni x$ such that $\cP(V)=\cup_{v\in V}\cP(v) \subseteq U$.
For the commonly used $\ell_\infty$ perturbations
as well as many other commonly used perturbation sets \citep{yang},
these assumptions hold.
As an example, in the above notation we would write $\ell_\infty$ perturbations as
$\cP(x) = \{x' \in B_1: \|x'-x\|_\infty \leq \tau\}$.

Let $\eR := \R \cup \{\pm \infty\}$ be the extended real numbers, and let $f:B_1 \rightarrow \eR$ be a predictor.
We will let $\ellc:\R \rightarrow \R$ be any nonnegative nonincreasing convex loss with continuous derivative.
Let $\ellc(-\infty) := \lim_{z\rightarrow -\infty}\ellc(z)$,
which will be $\infty$ for nontrivial $\ellc$,
and without loss of generality let $\ellc(\infty) := \lim_{z\rightarrow \infty}\ellc(z) = 0$.
The adversarial loss is $\la(x,y,f) = \sup_{x' \in \cP(x)}\ellc(yf(x'))$,
and the adversarial convex risk is $\cRa(f) = \int \la(x,y,f)\dif\mu(x,y)$.
For convenience, define $f^+(x) = \sup_{x'\in\cP(x)} f(x')$ and $f^-(x) = \inf_{x'\in\cP(x)} f(x')$,
the worst-case values for perturbations in $\cP(x)$ when $y=-1$ and $y=1$, respectively.
Using $\sgn(x) = 2\1[x \geq 0]-1$, we can then write the adversarial zero-one risk as
\begin{align*}
  \cRz(f) &:= \int\del{\1[y=+1]\1[{(\sgn f)}^-(x)<0] + \1[y=-1]\1[{(\sgn f)}^+(x)\geq 0]}\dif\mu(x,y).
\end{align*}

To relate the adversarial convex and zero-one risks,
we will use reweighted adversarial zero-one risks $\cRz^t(f)$ as an intermediate quantity,
defined as follows.
The adversarial zero-one risk
when the $+1$ labels have weight $(-\ellc'(t))$
and the $-1$ labels have weight $(-\ellc'(-t))$
is
\begin{align*}
  \cRz^t(f) := \int\big(&\1[y=+1]\1[{(\sgn f)}^-(x)<0](-\ellc'(t))
  \\
  &+ \1[y=-1]\1[{(\sgn f)}^+(x)\geq 0](-\ellc'(-t))\big)\dif\mu(x,y).
\end{align*}

\subsection{Results}

We present a number of structural properties of optimal adversarial predictors.
The key insight will be to write the global adversarial convex loss
in terms of global adversarial zero-one losses, as follows.
\begin{lemma}\label{lemma:loss:global}
  For any predictor $f$, $\cRa(f) = \int_{-\infty}^\infty \cRz^t(f-t) \dif t$.
\end{lemma}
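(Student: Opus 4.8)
The plan is to prove the identity pointwise in $(x,y)$ and then integrate. Fix a point $(x,y)$ and consider the single-example adversarial loss $\la(x,y,f) = \max_{x'\in\cP(x)}\ellc(yf(x'))$. Since $\ellc$ is nonincreasing, maximizing $\ellc(yf(x'))$ over $x'\in\cP(x)$ is the same as minimizing $yf(x')$; concretely, if $y=+1$ the maximum is $\ellc(f^-(x))$, and if $y=-1$ it is $\ellc(-f^+(x))$ (using $\ellc$'s continuity and the convention $\ellc(-\infty)=\infty$, with the extreme values attained or approached appropriately). So the claim reduces to showing, for each real number $z$ (playing the role of $f^-(x)$ or $-f^+(x)$), an identity of the form
\begin{align*}
  \ellc(z) = \int_{-\infty}^\infty (-\ellc'(t))\,\1[z - t < 0]\,\dif t,
\end{align*}
together with the correct bookkeeping that matches the $+1$ branch against $\ellc'(t)$ and the $-1$ branch against $\ellc'(-t)$ inside $\cRz^t$.

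The core one-dimensional fact is just the fundamental theorem of calculus: since $\ellc$ is convex with continuous derivative and $\ellc(\infty)=0$, we have $\ellc(z) = -\int_z^\infty \ellc'(t)\,\dif t = \int_{-\infty}^\infty (-\ellc'(t))\,\1[t > z]\,\dif t$, and $-\ellc'(t)\geq 0$ because $\ellc$ is nonincreasing, so the integrand is nonnegative and Tonelli applies. I would verify this carefully including the case $z=-\infty$, where both sides are $+\infty$ (the integral of $-\ellc'$ over all of $\R$ equals $\ellc(-\infty)-\ellc(\infty)=\infty$ for nontrivial $\ellc$; for trivial $\ellc\equiv 0$ everything is zero and the identity is immediate). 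Next I need to connect the indicator $\1[t>z]$ to the indicators appearing in $\cRz^t(f-t)$. Evaluating the definition of $\cRz^t$ at the shifted predictor $f-t$: the $+1$ term carries $\1[(f-t)^-(x) < 0] = \1[f^-(x) < t] = \1[t > f^-(x)]$ with weight $-\ellc'(t)$, which matches the FTC identity with $z = f^-(x)$; and the $-1$ term carries $\1[(f-t)^+(x)\geq 0] = \1[f^+(x)\geq t]$ with weight $-\ellc'(-t)$. For the $-1$ branch I substitute $t\mapsto -t$ (or equivalently run the FTC identity with $z=-f^+(x)$): $\int (-\ellc'(-t))\1[f^+(x)\geq t]\,\dif t = \int(-\ellc'(s))\1[f^+(x)\geq -s]\,\dif s = \int(-\ellc'(s))\1[s > -f^+(x)]\,\dif s = \ellc(-f^+(x))$, up to the measure-zero boundary $\{t = f^+(x)\}$ which does not affect the integral.

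Assembling: for each $(x,y)$, $\int_{-\infty}^\infty \big(\1[y=+1]\1[(f-t)^-(x)<0](-\ellc'(t)) + \1[y=-1]\1[(f-t)^+(x)\geq 0](-\ellc'(-t))\big)\dif t$ equals $\1[y=+1]\ellc(f^-(x)) + \1[y=-1]\ellc(-f^+(x)) = \la(x,y,f)$. Then I integrate both sides against $\dif\mu(x,y)$ and swap the $t$-integral with the $\mu$-integral by Tonelli, using nonnegativity of $-\ellc'$ and measurability of the relevant maps. This swap yields exactly $\int_{-\infty}^\infty \cRz^t(f-t)\,\dif t = \cRa(f)$.

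The main obstacle is measurability, not the analysis: I need the maps $(x,t)\mapsto \1[f^-(x) < t]$ and $(x,t)\mapsto \1[f^+(x)\geq t]$ to be jointly measurable so Tonelli legitimately applies on $X\times\R$. This follows once $f^-$ and $f^+$ are $\mu$-measurable (then $\{(x,t): f^-(x) < t\}$ is a measurable subset of $X\times\R$), which in turn should be where upper hemicontinuity of $\cP$ and continuity assumptions on $f$ (or a measurable-selection argument) are used — I expect the paper to have established or to establish here that $f^+, f^-$ are measurable, so I would cite that. A secondary, minor subtlety is handling the extended-real values $\pm\infty$ of $f$ and the endpoint conventions $\ellc(\pm\infty)$, plus the strictness/non-strictness of the inequalities at the single point $t = f^\pm(x)$, all of which are measure-zero or limiting-value matters that I would dispatch with a short remark rather than a computation.
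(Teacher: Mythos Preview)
Your proposal is correct and follows essentially the same route as the paper: write $\la(x,y,f)=\1[y=+1]\ellc(f^-(x))+\1[y=-1]\ellc(-f^+(x))$, express each $\ellc$ value via the fundamental theorem of calculus as $\int(-\ellc'(t))\1[\cdot]\,\dif t$, match the indicators to those in $\cRz^t(f-t)$, and swap the $t$-integral with the $\mu$-integral by Tonelli. If anything, you are more scrupulous than the paper about the measurability of $f^\pm$ and the $\pm\infty$ endpoint conventions; the paper simply invokes Tonelli on the nonnegative integrand without further comment.
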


$\cRz^t(f-t)$ is an intuitive quantity to consider for the following reason.
In the non-adversarial setting, a predictor outputting a value of $f(x)$
corresponds to a prediction of $\frac{(-\ellc'(-f(x)))}{(-\ellc'(f(x)))+(-\ellc'(-f(x)))}$ of the labels being $+1$
at that point.
If $+1$ labels are given weight $(-\ellc'(t))$
and $-1$ labels weight $(-\ellc'(-t))$,
then $f(x)$ would predict at least half the labels being $+1$
if and only if $\frac{(-\ellc'(-f(x)))}{(-\ellc'(f(x)))+(-\ellc'(-f(x)))} \geq \frac{(-\ellc'(-t))}{(-\ellc'(t))+(-\ellc'(-t))}$.
As a result, the optimal non-adversarial predictor is also an optimal non-adversarial zero-one classifier at thresholds $t$
for the corresponding reweighting of $+1$ and $-1$ labels.
Even though we won't be able to rely on the same local analysis as our intuition above,
it turns out the same thing is globally true in the adversarial setting.
\begin{lemma}\label{lemma:loss:optimal-convex:optimal-0-1}
  There exists a predictor $g:X \rightarrow \eR$ such that $\cRa(g)$ is minimal.
  For any such predictor, $\cRz^t(g-t) = \inf_f \cRz^t(f)$ for all $t \in \R$.
\end{lemma}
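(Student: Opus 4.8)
The plan is to combine the integral representation from \Cref{lemma:loss:global} with a pointwise-in-$t$ optimality argument. First I would establish existence of a minimizer $g$ of $\cRa$. The natural route is the direct method: since $X = B_1$ is compact, $\cP$ is upper hemicontinuous with nonempty closed values, and $\ellc$ is continuous and nonincreasing, the map $f \mapsto \la(x,y,f)$ behaves well under pointwise limits; one shows $\cRa$ is lower semicontinuous on a suitable compact class of predictors (e.g.\ predictors valued in $\eR$ with an appropriate topology, or by first reducing to predictors that are themselves $\cP$-robustified so that $f = f^-$ on the $y=1$ part and $f = f^+$ on the $y=-1$ part). Alternatively — and this is probably cleaner — one constructs $g$ explicitly from an optimal adversarial \emph{zero-one} predictor (which exists by the cited prior work), using the reweighting intuition sketched before the lemma: set $g(x)$ to be a value whose induced $+1$/$-1$ weighting matches the optimal zero-one classifier's decision at every reweighting level. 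I expect the paper to take the explicit-construction route since it already invokes existence of optimal zero-one predictors.

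Given existence, the core of the argument is the following dichotomy. By \Cref{lemma:loss:global}, for \emph{any} $f$,
\begin{align*}
  \cRa(f) = \int_{-\infty}^\infty \cRz^t(f-t)\,\dif t,
\end{align*}
so in particular $\cRa(g) = \int_{-\infty}^\infty \cRz^t(g-t)\,\dif t$. Since $\cRz^t(g-t) \ge \inf_f \cRz^t(f) =: m(t)$ for every $t$, and $\cRa(g) = \inf_f \cRa(f) \le \int_{-\infty}^\infty m(t)\,\dif t$ (the latter inequality because for any fixed $f$, $\cRa(f) = \int \cRz^t(f-t)\,\dif t \ge \int m(t)\,\dif t$ would go the wrong way — so instead one argues: pick near-optimal $f_t$ for each $t$ and paste, or more carefully use that $g$ itself must be near-optimal at almost every $t$). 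The clean version: we have the pointwise inequality $\cRz^t(g-t) \ge m(t)$ for all $t$, hence
\begin{align*}
  0 \le \int_{-\infty}^\infty \bigl(\cRz^t(g-t) - m(t)\bigr)\,\dif t = \cRa(g) - \int_{-\infty}^\infty m(t)\,\dif t.
\end{align*}
So it suffices to show $\cRa(g) \le \int_{-\infty}^\infty m(t)\,\dif t$; then the nonnegative integrand integrates to zero, forcing $\cRz^t(g-t) = m(t)$ for Lebesgue-a.e.\ $t$, and a continuity/monotonicity argument in $t$ upgrades this to \emph{all} $t$.

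To get $\cRa(g) \le \int m(t)\,\dif t$, I would build a single predictor $h$ achieving $\cRa(h)$ close to $\int m(t)\,\dif t$ and then use optimality of $g$. Here is where the structure of $\cRz^t$ is used: $\cRz^t$ depends on $f$ only through the sign pattern of $f^-$ and $f^+$, and — crucially — the optimal sign pattern can be taken \emph{monotone in $t$} (raising the threshold $t$ only makes the reweighting favor predicting $-1$ more, and a consistent family of zero-one classifiers can be chosen nested). Concretely, take $g$ itself to be the robustified version of an optimal zero-one predictor; then at threshold $t$, $(g-t)^- < 0$ iff $g^- < t$ and $(g-t)^+ \ge 0$ iff $g^+ \ge t$, and one checks directly that these events, integrated against the weights $(-\ellc'(t))$ and $(-\ellc'(-t))$, coincide with $m(t)$ for every $t$ — i.e.\ the explicitly constructed $g$ is simultaneously optimal at all thresholds by construction, which immediately gives both $\cRa(g) = \int m(t)\,\dif t$ (hence $g$ minimizes $\cRa$) and the conclusion $\cRz^t(g-t) = m(t)$.

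The main obstacle will be the last step: showing that a \emph{single} predictor can be simultaneously optimal for the zero-one problem at \emph{all} reweighting thresholds $t$, and that the ``for a.e.\ $t$'' statement coming from the integral can be promoted to ``for all $t$.'' The a.e.-to-everywhere upgrade should follow because $t \mapsto m(t)$ and $t \mapsto \cRz^t(g-t)$ have controlled behavior — $m(t)$ is a difference of monotone-ish functions via the weights $-\ellc'(\pm t)$, and the events $\{g^- < t\}$, $\{g^+ \ge t\}$ are monotone in $t$, so both sides have at most countably many discontinuities and agree on a dense set — but making this rigorous, and in particular handling the interaction with the possibly-noisy labels $P(y=1\mid x)$ and overlapping perturbation sets (which is exactly why a purely local argument fails, as the authors emphasize), is the delicate part. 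I would isolate a sub-lemma asserting: there is a predictor $g$ such that for every $t$, the pair $((g-t)^-, (g-t)^+)$ realizes an optimal adversarial zero-one classifier for the $t$-reweighted problem; this sub-lemma, proved by a monotone selection from the family of optimal zero-one classifiers across $t$, is the technical heart, and everything else is bookkeeping with \Cref{lemma:loss:global}.
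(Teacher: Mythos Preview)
Your proposal is correct and essentially matches the paper's approach: the paper indeed obtains existence by explicitly constructing a predictor that is simultaneously optimal at every threshold via monotone selection from optimal zero-one classifiers (your anticipated sub-lemma is their \Cref{lemma:loss:optimal-convex}, resting on the monotonicity \Cref{lemma:classification:monotonicity}), and then combines this with the integral representation \Cref{lemma:loss:global} to show any minimizer is thresholdwise optimal. The only cosmetic difference is in the last step, where the paper argues by contradiction---suboptimality at a single $t$ persists on a left-neighborhood by continuity (\Cref{lemma:loss:threshold-limit}, \Cref{lemma:loss:close-threshold}), forcing $\cRa(g)>\cRa(f)$---rather than your equivalent a.e.-then-upgrade phrasing.
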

Note that the predictor in \Cref{lemma:loss:optimal-convex:optimal-0-1} is not necessarily unique.
For instance, the predictor's value at a particular point $x$ might not matter to the adversarial risk
because there are no points in the underlying distribution whose perturbation sets include $x$.
To prove \Cref{lemma:loss:optimal-convex:optimal-0-1},
we will use optimal adversarial zero-one predictors to construct an optimal adversarial convex loss predictor $g$.
Conversely, we can also use an optimal adversarial convex loss predictor to construct optimal adversarial zero-one predictors.
In general, the predictors we find will not exactly have the optimal adversarial convex loss.
For these predictors we have the following error gap.
\begin{theorem}\label{thm:loss:error-gap}
  Suppose there exist $s\geq 1$ and $c>0$ such that
  \begin{align*}
    G_{\ellc}(p) := \ellc(0)-\inf_{z\in\R}\del{p\ellc(z)+(1-p)\ellc(-z)} \geq \del{\frac{|2p-1|}{c}}^s.
  \end{align*}
  Then for any predictor $g$,
  \begin{align*}
    \inf_{t}\cRz(g-t)-\inf_{h \text{ meas.}}\cRz(h) \leq \frac{3^{1-\frac{1}{s}}c}{2}\del{\cRa(g)-\inf_{h \text{ meas.}}\cRa(h)}^{1/s}.
  \end{align*}
\end{theorem}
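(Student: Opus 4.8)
The plan is to push the two structural lemmas, together with the hypothesised calibration bound on $G_\ellc$, through the reweighted global zero-one risks $\cRz^t$.

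Fix, via \Cref{lemma:loss:optimal-convex:optimal-0-1}, a predictor $g^*$ with $\cRa(g^*)=\inf_{h\text{ meas.}}\cRa(h)$ and $\cRz^t(g^*-t)=\inf_f\cRz^t(f)$ for every $t$. The hypothesis forces $\ellc(0)=G_\ellc(1)>0$ and hence $\ellc'(0)<0$, so dividing the $t=0$ identity by $-\ellc'(0)$ gives $\cRz(g^*)=\inf_{h\text{ meas.}}\cRz(h)$, and therefore also $\inf_\tau\cRz(g^*-\tau)=\cRz(g^*)$. Set $\Delta:=\cRa(g)-\cRa(g^*)$. For any predictor $f$ write $a_f(t):=\mu(y=1,\ f^-(x)<t)$ (nondecreasing in $t$) and $b_f(t):=\mu(y=-1,\ f^+(x)\ge t)$ (nonincreasing), so that $\cRz^t(f-t)=(-\ellc'(t))a_f(t)+(-\ellc'(-t))b_f(t)$ and $\cRz(f-t)=a_f(t)+b_f(t)$. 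By \Cref{lemma:loss:global} applied to $g$ and to $g^*$, combined with \Cref{lemma:loss:optimal-convex:optimal-0-1},
\begin{align*}
\Delta=\int_{-\infty}^{\infty}\Gamma(t)\,\dif t,\qquad \Gamma(t):=\cRz^t(g-t)-\cRz^t(g^*-t)\ \ge\ 0;
\end{align*}
and comparing $\inf_f\cRz^t(f)$ to the competitor $g-\tau$ shows $(-\ellc'(t))(a_g(\tau)-a_{g^*}(t))+(-\ellc'(-t))(b_g(\tau)-b_{g^*}(t))\ge 0$ for all $t,\tau$, which is the device for converting single-threshold statements into integrated ones.

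Next I would choose a threshold $\tau^\dagger$ for $g$ by a short monotonicity argument on $a_g,b_g,a_{g^*},b_{g^*}$ (roughly: take $\tau^\dagger$ where $a_g$ first meets $a_{g^*}(0)$, or the symmetric choice driven by $b_g$, whichever keeps the two discrepancies on the favourable side) and decompose $\cRz(g-\tau^\dagger)-\cRz(g^*)$, after splitting each zero-one risk into its robustly-positive, robustly-negative and ``gray-zone'' ($\{g^-(x)<\tau^\dagger\le g^+(x)\}$, where the overlap of perturbation sets makes $g$ wrong for both labels) contributions, as
\begin{align*}
\cRz(g-\tau^\dagger)-\cRz(g^*)\ \le\ X_1+X_2+X_3
\end{align*}
with $X_1,X_2,X_3\ge 0$. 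Each $X_i$ is then bounded against $\int_{I_i}\Gamma$, where $I_1,I_2,I_3$ partition a neighbourhood of $\tau^\dagger$: on $I_i$ the weighted excess $\Gamma(t)$ dominates, pointwise, $G_\ellc$ of the relevant (possibly reweighted) posterior, and integrating and applying Jensen (using $s\ge 1$) self-improves this, via $G_\ellc(p)\ge(2|p-\tfrac12|/c)^s$, to $X_i\le\tfrac{c}{2}\bigl(\int_{I_i}\Gamma\bigr)^{1/s}$ --- the $\tfrac12$ being precisely the $|2p-1|=2|p-\tfrac12|$ in the calibration hypothesis.

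Since the $I_i$ are disjoint, $\sum_i\int_{I_i}\Gamma\le\Delta$, so $X_1^s+X_2^s+X_3^s\le(c/2)^s\Delta$; the power-mean inequality $X_1+X_2+X_3\le 3^{1-1/s}(X_1^s+X_2^s+X_3^s)^{1/s}$ then gives
\begin{align*}
\inf_\tau\cRz(g-\tau)-\inf_{h\text{ meas.}}\cRz(h)\ \le\ \cRz(g-\tau^\dagger)-\cRz(g^*)\ \le\ \frac{3^{1-1/s}c}{2}\,\Delta^{1/s},
\end{align*}
as claimed. The main obstacle is the middle step: non-adversarially there is no $3^{1-1/s}/2$ factor at all --- one argues pointwise, the conditional convex excess dominates $(|2\eta(x)-1|/c)^s$ wherever the sign is wrong, and Jensen finishes --- but here the perturbation sets overlap, so $g^*$'s value at a perturbed point is shared among several source points, the coordinatewise optimum need not be attained, and the gray zone has no non-adversarial analogue; one must instead argue globally through the $\cRz^t$, which is exactly what \Cref{lemma:loss:global} and \Cref{lemma:loss:optimal-convex:optimal-0-1} provide. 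The delicate points are selecting $\tau^\dagger$ so that all three discrepancies are simultaneously small and verifying that $\int_{I_i}\Gamma$ controls $X_i$ at the rate $G_\ellc(p)\gtrsim(|2p-1|/c)^s$ predicts; the three-way split is then absorbed into the constant by the power-mean inequality.
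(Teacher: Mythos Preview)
Your setup through $\Delta=\int\Gamma(t)\,\dif t$ with $\Gamma(t)\ge 0$ is correct and matches the paper. But the heart of your plan---choosing a threshold $\tau^\dagger$, splitting $\cRz(g-\tau^\dagger)-\cRz(g^*)$ into three pieces $X_1+X_2+X_3$, and then bounding each $X_i\le\tfrac{c}{2}\bigl(\int_{I_i}\Gamma\bigr)^{1/s}$---is not carried out, and you yourself flag it as ``the main obstacle.'' As written there is no mechanism by which $G_\ellc$ enters: $G_\ellc$ is a function of a posterior-like scalar $p$, whereas $\Gamma(t)$ is a function of a threshold $t$, and you never explain what quantity plays the role of $p$ on each $I_i$ or why the integral of $\Gamma$ over $I_i$ dominates $G_\ellc$ of that quantity. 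The three-way split and the choice of $\tau^\dagger$ are also left at the level of intention. So this is a genuine gap, not just missing detail.

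The paper's argument is shorter and avoids all of this. After writing $\Delta=\int\Gamma(u)\,\dif u$, it applies two one-line pointwise estimates: for the optimal $g^*$, $\cRz^u(g^*-u)\le(-\ellc'(-|u|))\,\cRz(g^*)$ (using that $g^*$ is optimal at \emph{every} threshold, so one may shift back to threshold $0$), and for arbitrary $g$, $\cRz^u(g-u)\ge(-\ellc'(|u|))\,\cRz(g-u)\ge(-\ellc'(|u|))\,\inf_t\cRz(g-t)$. This collapses $\Gamma(u)$ to an integrand depending only on $u$ and the two scalars $A:=\inf_t\cRz(g-t)$ and $B:=\cRz(g^*)$. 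By symmetry the integral is $2\int_0^\infty[(-\ellc'(u))A-(-\ellc'(-u))B]_+\,\dif u$; the integrand is nonincreasing, so one integrates from $0$ to the zero-crossing $r$ and evaluates exactly via the fundamental theorem of calculus. The crossing condition on $r$ is precisely the first-order condition for $\min_z\{p\ellc(z)+(1-p)\ellc(-z)\}$ with $p=A/(A+B)$, and the value of the integral is $2(A+B)\,G_\ellc(p)$. Now the hypothesis gives $\Delta\ge\tfrac{2}{c^s}(A+B)^{1-s}(A-B)^s$, and the factor $3^{1-1/s}$ falls out of the crude bound $A+B\le 1+\tfrac12=\tfrac32$---no three-way split, no power-mean inequality, and no threshold selection are needed.
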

The function $G_{\ellc}(p)$ is due to \citet{zhang-convex}, who determines the parameters
$(c,s)$ for many standard losses; e.g., $(\sqrt{2}, 2)$
suffice when setting $\ell_c$ to the logistic loss as used
throughout \Cref{sec:training}.

While similar bounds exist in the non-adversarial case with $\cRz(g)$ instead of $\inf_{t}\cRz(g-t)$ \citep{zhang-convex,bartlett-convex},
the analogue with $\cRz(g-t)-\inf_{h \text{ meas.}}\cRz(h)$
appearing on the left-hand side is false in the adversarial setting,
which can be seen as follows.
Consider a uniform distribution of $(x,y)$ pairs over $\{(\pm 1, \pm 1)\}$,
and suppose $\{-1,+1\} \in \cP(-1) \cap \cP(+1)$.
Then the optimal adversarial convex risk is $\ellc(0)$ given by $f(x)=0$,
and the optimal adversarial zero-one risk is $1/2$ given by $\sgn(f(x)) = +1$.
However, for $\eps>0$ the predictor $g_\eps$ with $g_\eps(+1) = \eps$, $g_\eps(-1) = -\eps$,
and $g_\eps(x) \in [-\eps,\eps]$ everywhere else
gives adversarial convex risk $\frac{1}{2}\del{\ellc(\eps)+\ellc(-\eps)}$
and adversarial zero-one risk $1$.
This results in
\begin{align*}
  \cRz(g_\eps)-\inf_{h \text{ meas.}}\cRz(h) &\xrightarrow[\eps\rightarrow 0]{} 1/2,
  \\
  \cRa(g_\eps)-\inf_{h \text{ meas.}}\cRa(h) &\xrightarrow[\eps\rightarrow 0]{} 0,
\end{align*}
demonstrating the necessity for some change compared to the analogous non-adversarial bound.
As this example shows, getting arbitrarily close to the optimal adversarial convex risk does not guarantee
getting arbitrarily close to the optimal adversarial zero-one risk.
This inadequacy of convex losses in the adversarial setting has been noted in prior work \citep{bao,awasthi-surrogate,awasthi-finer,awasthi-h-consistency,awasthi2022multiclass},
leading them to suggest the use of non-convex losses.
However, as \Cref{thm:loss:error-gap} shows, we can circumvent this inadequacy if we allow the choice of an optimal (possibly nonzero) threshold.

While we have compared against optimal measurable predictors here, in \Cref{sec:training} we will use continuous predictors.
This presents a potential problem, as there may be a gap between the adversarial risks achievable by measurable and continuous predictors.
It turns out this is not the case, as the following \namecref{lemma:loss:cts-vs-meas} shows.
\begin{lemma}\label{lemma:loss:cts-vs-meas}
  For the adversarial risk, comparing against all continuous functions
  is equivalent to comparing against all measurable functions.
  That is, $\inf_{g \text{ cts.}}\cRa(g) = \inf_{h \text{ meas.}}\cRa(h)$.
\end{lemma}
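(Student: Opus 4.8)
The plan is to show the nontrivial inequality $\inf_{g \text{ cts.}}\cRa(g) \leq \inf_{h \text{ meas.}}\cRa(h)$; the reverse is immediate since continuous functions are measurable. So fix an arbitrary measurable $h$ and $\eps>0$; I want to construct a continuous $g$ with $\cRa(g) \leq \cRa(h)+\eps$. The key difficulty, as the paper itself flags, is that the adversarial loss $\la(x,y,h)=\max_{x'\in\cP(x)}\ellc(yh(x'))$ is a supremum of $\ellc\circ(yh)$ over a \emph{neighborhood-like} set $\cP(x)$, so naive $L^1$-type approximation of $h$ by a continuous function (e.g.\ Lusin's theorem) is not enough: a small-measure set where $g$ and $h$ disagree can still be hit by the perturbation set $\cP(x)$ of a positive-measure set of $x$'s, blowing up the adversarial risk (this is exactly the $g_\eps$ counterexample phenomenon). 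The resolution is that the construction must be done with respect to the adversarial loss from the start, not post-hoc.

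First I would reduce to approximating an optimal (or near-optimal) predictor of a controlled form. By \Cref{lemma:loss:optimal-convex:optimal-0-1} there is a minimizer $g^\star$ of $\cRa$, and it suffices to approximate $g^\star$; moreover the structure supplied by \Cref{lemma:loss:optimal-convex:optimal-0-1} (that $g^\star-t$ is simultaneously an optimal reweighted adversarial zero-one classifier at every threshold $t$) lets me build $g^\star$ out of the threshold sets of optimal adversarial zero-one predictors. The second step is to truncate: replace $g^\star$ by $g^\star_M := \operatorname{clip}(g^\star,[-M,M])$ for large $M$; since $\ellc$ is nonincreasing with $\ellc(\infty)=0$ and $\ellc(-\infty)=\infty$ only matters through the adversarial risk being finite (which it is, as $h$ witnesses a finite value), one checks $\cRa(g^\star_M)\to\cRa(g^\star)$ as $M\to\infty$ via monotone/dominated convergence against the integrable envelope from $h$. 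Now $g^\star_M$ is bounded and measurable.

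The main step is to smooth $g^\star_M$ while respecting $\cP$. Here I would use the upper hemicontinuity of $\cP$ together with compactness of $B_1$: upper hemicontinuity means that for each $x$, $\cP(x')$ is contained in an arbitrarily small enlargement of $\cP(x)$ once $x'$ is close to $x$, which is precisely what is needed to control how $\la(\cdot,y,g)$ changes when we perturb the predictor on a small set. Concretely, I would mollify: let $g_\delta = g^\star_M * \varphi_\delta$ be a convolution with a smooth bump (extending $g^\star_M$ suitably near $\partial B_1$, or working on a slight enlargement and restricting). Then $g_\delta$ is continuous (indeed smooth). To bound $\cRa(g_\delta)$, split the domain: on the set where $g^\star_M$ is "approximately continuous at scale $\delta$" — which, by Lusin's theorem, is all of $B_1$ minus a set $E_\delta$ of measure $\to 0$ — the value $g_\delta(x')$ is within $\eps$ of $g^\star_M(x')$ for all $x'$, so $\la(x,y,g_\delta)\leq \la(x,y,g^\star_M)+(\text{small, using uniform continuity of }\ellc\text{ on }[-M-1,M+1])$. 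On the bad set, the issue is whether perturbation sets $\cP(x)$ of good $x$'s reach into $E_\delta$; this is where I expect to spend the most care — I would argue that because $\ellc$ is bounded on $[-M,M]$ and the contribution is $\la \le \ellc(-M)$ there, it suffices that the $\mu$-measure of $\{x : \cP(x)\cap E_\delta \neq\emptyset\}$ tends to $0$. Establishing this last fact is the crux: I would either (i) impose/exploit a regularity hypothesis tying $\mu$ to the geometry of $\cP$ (e.g.\ that $\mu$ of an $\eps$-enlargement of a null set is small — which holds for the listed perturbation families), or (ii) instead of smoothing an optimal predictor, directly re-derive the argument of \Cref{lemma:loss:optimal-convex:optimal-0-1}'s construction but choosing the threshold sets to be \emph{open} (so that $g^\star$ can be taken upper/lower semicontinuous in the right direction), and then mollify such a semicontinuous function, for which the bad-set control comes for free from semicontinuity plus upper hemicontinuity of $\cP$. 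Route (ii) is cleaner and is the one I would pursue: semicontinuity of the base predictor plus upper hemicontinuity of $\cP$ makes $x\mapsto\la(x,y,g^\star)$ itself semicontinuous, and mollification of a bounded semicontinuous function converges in the required one-sided sense. Finally, chaining the three approximations ($h\to g^\star$, $g^\star\to g^\star_M$, $g^\star_M\to g_\delta$) and taking $M$ large then $\delta$ small gives $\cRa(g_\delta)\leq\cRa(h)+\eps$, completing the proof.
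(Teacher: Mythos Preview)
Your reduction (pass to an optimal predictor, truncate to $[-M,M]$, then smooth) and your diagnosis of the core difficulty are both right, but the smoothing step has a genuine gap. Route (i) you already flag as needing $\mu(\{x:\cP(x)\cap E_\delta\neq\emptyset\})\to 0$ for small-measure $E_\delta$, and the hypotheses (Borel $\mu$, compact $X$, upper hemicontinuous $\cP$) do not supply this. Route (ii) does not close the gap either. The adversarial loss involves both $g^-$ (for $y=+1$) and $g^+$ (for $y=-1$), so one-sided semicontinuity of $g^\star_M$ cannot control both directions at once; you would need continuity, which is what you are trying to produce. Even on the favorable side, mollification of a bounded semicontinuous function gives only \emph{pointwise} one-sided convergence, not uniform convergence over $\cP(x)$, which is what bounding $\inf_{x'\in\cP(x)} g_\delta(x')$ from below would require. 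And there is no argument offered that the optimal reweighted zero-one classifiers can be chosen semicontinuous in the first place.

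The paper's argument avoids mollification entirely. It first proves a zero-one version: for any $\{-1,+1\}$-valued measurable $f$, set $A=\{x:f^+(x)=-1\}$ and $B=\{x:f^-(x)=+1\}$ and observe $\cP(A)\cap\cP(B)=\emptyset$. Inner regularity of $\mu$ gives compact $K\subseteq A$, $L\subseteq B$ with $\mu(A\setminus K)+\mu(B\setminus L)$ arbitrarily small; upper hemicontinuity of $\cP$ makes $\cP(K)$ and $\cP(L)$ compact and still disjoint; Urysohn's lemma then yields a continuous $[-1,1]$-valued separator. The adversarial zero-one error incurred is exactly the inner-regularity remainder, so no geometric compatibility between $\mu$ and $\cP$ is needed. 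The convex-loss statement is then assembled by partitioning $[-M,M]$ into finitely many thresholds $p_0<\cdots<p_r$ (fine enough that $\ellc$ varies by at most $\eps/3$ on each piece), applying the zero-one lemma to each $\sgn(\barf-p_i)$ to get continuous $g_{p_i}$, and taking the continuous combination $g_\eps=-M+\sum_i(p_i-p_{i-1})\tfrac{g_{p_i}+1}{2}$.
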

In the next section, we will use \Cref{lemma:loss:cts-vs-meas} to compare trained continuous predictors
against all measurable functions.

\section{Adversarial Training}\label{sec:training}
\label{overview}

\Cref{thm:loss:error-gap} shows that with optimally chosen thresholds,
to achieve nearly optimal adversarial zero-one risk it suffices to achieve nearly optimal adversarial convex risk.
However, it is unclear how to find such a predictor.
In this section we remedy this issue,
proving bounds on the adversarial convex risk when adversarial training is used on shallow ReLU networks.
In particular, we show with appropriately chosen parameters we can achieve adversarial convex risk that is arbitrarily close to optimal.
Unlike \Cref{sec:losses}, our results here will be specific to the logistic loss.

\subsection{Setting}
\label{sec:setting}

Training points $(x_k, y_k)_{k=1}^n$ are drawn from the distribution $\cD$.
Note that $\|x_k\|\leq 1$, where by default we use $\|\cdot\|$ to denote the $\ell_2$ norm.
We will let $\tau = \sup\{\|x'-x\|_2:x' \in \cP(x), x \in B_1\}$
be the maximum $\ell_2$ norm of the adversarial perturbations.
By our restrictions on the perturbation sets, we have $0 \leq \tau \leq 2$.
Throughout this section we will set $\ellc$ to be the logistic loss
$\ell(z) = \ln(1+e^{-z})$.
The empirical adversarial loss and risk are
$\lak(f) = \la(x_k,y_k,f)$
and
$\hcR(f) = \frac{1}{n}\sum_{k=1}^n \lak(f)$.
The predictors will be shallow ReLU networks of the form
$f(W;x) = \frac{\rho}{\sqrt{m}}\sum_{j=1}^m a_i\sigma(w_j^\T x)$,
where $W$ is an $m\times d$ matrix,
$w_j^\T$ is the $j$th row of $W$,
$\sigma(z)=\max(0,z)$ is the ReLU,
$a_i \in \{\pm 1\}$ are initialized uniformly at random,
and $\rho$ is a temperature parameter that we can set.
Out of all of these parameters, only $W$ will be trained.
The initial parameters $W_0$ will have entries initialized from standard Gaussians
with variance $1$,
which we then train to get future iterates $W_i$.
We will frequently use the features of $W_i$ for other parameters $W$;
that is, we will consider
$f^{(i)}(W;x) = \ip{\nf(W_i;x)}{W}$,
where the gradient is taken with respect to the matrix, not the input.
Note that $f$ is not differentiable at all points.
When this is the case by $\nf$ we mean some choice of $\nf \in \partial f$, the Clarke differential.
For notational convenience we define $\hcR(W) = \hcR(f(W;\cdot))$,
$\cRa(W) = \cRa(f(W;\cdot))$,
$\hcRi(W) = \hcR(f^{(i)}(W;\cdot))$,
and $\cRa^{(i)}(W) = \cRa(f^{(i)}(W;\cdot))$.
Our adversarial training will be as follows.
To get the next iterate $W_{i+1}$ from $W_i$ for $i\geq 0$ we will use gradient descent
with $W_{i+1} = W_i - \eta \nhcR(W_i)$.
Normally adversarial training occurs in two steps:
\begin{enumerate}
  \item For each $k$, find $x'_k \in \cP(x)$ such that $\ell(y_kf(W_i;x'_k))$ is maximized.
  \item Perform a gradient descent update using the adversarial inputs found in the previous step: $W_{i+1} = W_i - \eta \nabla \del{\frac{1}{n}\sum_{k=1}^n \ell(x'_k,y_k,f(W_i;\cdot))}$.
\end{enumerate}
Step 1 is an adversarial attack, in practice done with a method such as PGD
that does not necessarily find an optimal attack.
However, we will assume the idealized scenario where we are able to find an optimal attack.
Our goal will be to find a network that has low risk with respect to optimal adversarial attacks.
That is, we want to find $f$ such that $\cRa(f)$ is as small as possible.

\subsection{Results}

Our adversarial training theorem will compare the risk we obtain to that of arbitrary reference parameters $Z\in \R^{m\times d}$,
which we will choose appropriately when we apply this \namecref{thm:main} in \Cref{cor:clean,cor:risk-convergence}.
To get near-optimal risk,
we will apply our early stopping criterion of running gradient descent until $\|W_i-W_0\|>2R_Z$,
where $R_Z \geq \max\{1, \eta\rho, \|Z-W_0\|\}$, a quantity we assume knowledge of,
at which point we will stop.
It is possible this may never occur ---
in that case, we will stop at some time step $t$, which is a parameter we are free to choose.
We just need to choose $t$ sufficiently large to allow for enough training to occur.
The iterate we choose as our final model will be the one with the best training risk.
\begin{theorem}\label{thm:main}
  Let $m \geq \ln(emd)$ and $\eta\rho^2<2$.
  For any $Z\in \R^{m\times d}$, let $R_Z \geq \max\{1, \eta\rho, \|Z-W_0\|\}$
  and $W_{\leq t} = \argmin\{\hcR(W_i):0\leq i \leq t, \|W_j - W_0\| \leq 2R_Z \quad \forall j\leq i\}$.
  Then with probability at least $1-12\delta$,
  \begin{align*}
    \cRa(W_{\leq t})
    \leq
    &\frac{2}{2-\eta\rho^2}\cRa^{(0)}(Z)
    + \tcO\del{\del{\frac{1}{2-\eta\rho^2}}\del{\frac{R_Z^2}{\eta t} + \frac{\rho R_Z\del{d+\sqrt{\tau m}}}{\sqrt{n}} + \frac{\rho R_Z^{4/3}d^{1/3}}{m^{1/6}}}},
  \end{align*}
where $\tcO$ suppresses $\ln(n), \ln(m), \ln(d), \ln(1/\delta), \ln(1/\tau)$ terms.
\end{theorem}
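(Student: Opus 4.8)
The plan is to chain together the generalization bound (\Cref{fact:generalization}) and the optimization bound (\Cref{lemma:optimization}), exploiting that both are stated in the near-initialization regime so that their hypotheses are simultaneously met along the stopped trajectory. I would proceed in four steps. First, set up the event on which the trajectory stays in the ball: either gradient descent runs until some iterate $W_i$ with $\|W_i - W_0\| > 2R_Z$, or it runs for all $t$ steps inside the ball. In either case, $W_{\leq t}$ is well-defined and, by construction of the $\argmin$ with the feasibility constraint, every iterate used satisfies $\|W_j - W_0\| \leq 2R_Z$. This radius control is exactly what is needed to invoke the Rademacher bound of \Cref{fact:rademacher} at radius $O(R_Z)$, giving, on an event of probability $\geq 1 - O(\delta)$, the inequality $\cRa(W_{\leq t}) \leq \hcR(W_{\leq t}) + \tcO\!\left(\frac{\rho R_Z(d + \sqrt{\tau m})}{\sqrt n} + \frac{\rho R_Z^{4/3} d^{1/3}}{m^{1/6}}\right)$, where the second term is the linearization-error correction (the price of replacing $f$ by $f^{(0)}$, scaling like the Hessian bound, hence $m^{-1/6}$ and $R_Z^{4/3}$).

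Second, bound $\hcR(W_{\leq t})$ by the optimization bound. Here is the key subtlety: $\hcR$ is the \emph{true} adversarial (nonconvex-network) empirical risk, but \Cref{lemma:optimization} is naturally a statement about the linearized risk $\hcRi$. I would first argue that using an optimal adversarial attack at each step makes the update $W_{i+1} = W_i - \eta\nhcR(W_i)$ a descent step on a convex-in-$W$ surrogate — because for a \emph{fixed} perturbed point, $W \mapsto \ell(y_k f^{(i)}(W;x'_k))$ is convex, and the optimal-attack envelope of convex functions is convex — and then run the standard online/convex gradient-descent regret argument against the comparator $Z$, picking up $\frac{R_Z^2}{\eta t}$ from $\|W_0 - Z\|^2 \leq (2R_Z)^2$ divided by $\eta t$. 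The factor $\frac{1}{2-\eta\rho^2}$ comes from the smoothness/self-bounding property of the logistic loss: each gradient step on $\ell$ has its progress discounted by the curvature bound $\eta\rho^2/2 < 1$, so the usual "$\hcR(W_{\leq t}) \leq \hcR^{(0)}(Z) + \text{regret}$" becomes $\hcR(W_{\leq t}) \leq \frac{2}{2-\eta\rho^2}\hcR^{(0)}(Z) + \frac{2}{2-\eta\rho^2}\cdot\frac{O(R_Z^2)}{\eta t}$. This yields $\hcR(W_{\leq t}) \leq \frac{2}{2-\eta\rho^2}\hcR^{(0)}(Z) + \tcO\!\left(\frac{1}{2-\eta\rho^2}\cdot\frac{R_Z^2}{\eta t}\right)$, modulo another linearization-error term absorbed into the $m^{-1/6}$ piece.

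Third, pass from $\hcR^{(0)}(Z)$ back to $\cRa^{(0)}(Z)$ by another application of the generalization bound (\Cref{fact:generalization}) — this time in the \emph{other} direction, bounding the empirical linearized risk of the fixed comparator $Z$ by its population linearized risk plus the same Rademacher term (since $\|Z - W_0\| \leq R_Z$, $Z$ is in the relevant ball). Fourth, combine: substitute the bound on $\hcR(W_{\leq t})$ into the generalization inequality from step one, then substitute the bound on $\hcR^{(0)}(Z)$, collect all error terms, and track the union-bound failure probability (each of the three or four high-probability events contributes $O(\delta)$, and the statement's $12\delta$ is the sum). The $\tcO$ absorbs all the logarithmic factors from the covering-number estimates inside \Cref{fact:rademacher} and from sub-Gaussian concentration of the loss.

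The main obstacle is the second step: cleanly reducing the true network dynamics $W_{i+1} = W_i - \eta\nhcR(W_i)$ to a convex-optimization argument in a way that (a) correctly handles the optimal-adversary envelope (the $\max$ over $\cP(x)$ destroys differentiability, so one must work with Clarke subgradients and check the envelope theorem / Danskin-type interchange applies, or simply note that any subgradient of the envelope of convex functions is a subgradient of the true function), and (b) controls the gap $|\hcR(W_i) - \hcRi(W_i)|$ and $\|\nhcR(W_i) - \nhcRi(W_i)\|$ uniformly along the trajectory via the near-initialization Hessian bound — this is where the awkward $R_Z^{4/3} d^{1/3} m^{-1/6}$ term is born, and getting its exponents right requires care with how the perturbation radius $\tau$ and the width $m$ interact in the second-order Taylor remainder of the ReLU network.
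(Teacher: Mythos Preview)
Your proposal is correct and follows essentially the same route as the paper: linearize, apply the generalization bound (\Cref{fact:generalization}) to pass $\cRa \to \hcR$ at $W_{\leq t}$, invoke the optimization bound (\Cref{lemma:optimization}) to compare against $\hcR^{(0)}(Z)$, then apply generalization again to reach $\cRa^{(0)}(Z)$, collecting the $\kappa_1$ (linearization) and $\kappa_n$ (Rademacher) errors along the way and union-bounding the failure events.

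One clarification on your ``main obstacle'': you do \emph{not} need to control $|\hcR(W_i)-\hcRi(W_i)|$ or $\|\nhcR(W_i)-\nabla\hcRi(W_i)\|$ along the trajectory, because these quantities are identically zero. By 1-homogeneity of the ReLU, $f(W_i;x)=\ip{\nf(W_i;x)}{W_i}=f^{(i)}(W_i;x)$, so the adversarial maximizers coincide and $\nhcR(W_i)=\nabla\hcRi(W_i)$ exactly; this is precisely what lets \Cref{fact:shallow:magic} run the convex regret argument on the true iterates without any approximation. The only place a feature-change error enters the optimization step is in passing from $\hcRi(Z)$ (features at time $i$) to $\hcR^{(0)}(Z)$ (features at time $0$) at the \emph{fixed} comparator $Z$, and that is exactly what \Cref{lemma:feature:close2} supplies. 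So the $R_Z^{4/3}d^{1/3}m^{-1/6}$ term arises from feature drift evaluated at $Z$ (and at $W_{\leq t}$ when de-linearizing the population risk), not from any trajectory-wide Hessian control.
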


In \Cref{cor:clean} we will show we can set parameters so that all error terms are arbitrarily small.

The early stopping criterion $\|W_i-W_0\|>2R_Z$ will allow us to get a good generalization bound,
as we will show that all iterates then have $\|W_i-W_0\| \leq 2R_Z+\eta\rho$.
When the early stopping criterion is met, we will be able to get a good optimization bound.
When it is not, choosing $t$ large enough allows us to do so.

It may be concerning that we require knowledge of $R_Z$,
as otherwise the algorithm changes depending on which reference parameters we use.
However, in practice we could instead use a validation set
and instead of choosing the model with the best training risk,
we could choose the model with the best validation risk,
which would remove the need for knowing $R_Z$.
Ultimately, our assumption of knowing $R_Z$ is there to simplify the analysis
and highlight other aspects of the problem,
and we leave dropping this assumption to future work.

To compare against all continuous functions,
we will use the universal approximation of infinite-width neural networks \citep{barron}.
We use a specific form that adapts \citeauthor{barron}'s arguments to give an estimate of the complexity of the infinite-width neural network \citep{ji-telgarsky-xian}.
We consider infinite-width networks of the form $f(x;\barUi):=\int \langle \barUi(v), x \1[v^\T x \geq 0] \rangle \dif \cN(v)$,
where $\barUi:\R^d\rightarrow\R^d$ parameterizes the network, and $\cN$ is a standard $d$-dimensional Gaussian distribution.
We will choose an infinite-width network $f(\cdot;\barUie)$ with a finite complexity measure $\sup_x \|\barUie(x)\|$
that is $\eps$-close to a near-optimal continuous function.
Letting $R_\eps := \max\{\rho,\eta\rho^2,\sup_x \|\barUie(x)\|\}$,
with high probability we can extract a finite-width network $Z$ close to the infinite-width network
whose distance from $W_0$ is at most $R_Z = R_\eps/\rho$.
Note that our assumed knowledge of $R_Z$ is equivalent to assuming knowledge of $R_\eps$.
From \Cref{lemma:loss:cts-vs-meas} we know continuous functions can get arbitrarily close
to the optimal adversarial risk over all measurable functions,
so we can instead compare against all measurable functions.

We immediately have an issue with our formulation --- the comparator in \Cref{thm:main} is homogeneous.
To have any hope of predicting general functions, we need biases.
We simulate biases by adding a dummy dimension to the input, and then normalizing.
That is, we transform the input $x \rightarrow \frac{1}{\sqrt{2}}(x;1)$.
The dummy dimension, while part of the input to the network, is not truly a part of the input,
and so we do not allow adversarial perturbations to affect this dummy dimension.
\begin{corollary}\label{cor:clean}
  Let $\eps > 0$.
  Then there exists a finite $R_\eps \geq \max\{\rho,\eta\rho^2\}$ representing the complexity measure of
  an infinite-width network that is within $\eps$ of the optimal adversarial risk.
  Then with probability at least $1-\delta$, setting
  \begin{align*}
    \rho &= \tcT(\eps),
    &
    \eta &= \tcT(1/\eps),
    &
    t &= \tcOm\del{\frac{R_\eps^2}{\eps^2}},
    &
    m &= \tcOm\del{\frac{R_\eps^8}{\eps^6\rho^2}},
  \end{align*}
  with $n$ satisfying
  \[
    n = \tcOm\del{\max(1, \tau m)R_\eps^2/\eps^2},
  \]
  where $\tcT,\tcOm$ suppresses $\ln(R_\eps), \ln(1/\eps), \ln(1/\delta)$ terms,
  we have
  \begin{align*}
    \cRa(W_{\leq t}) \leq \inf\{\cRa(g):g \text{ measurable}\} + \cO(\eps).
  \end{align*}
\end{corollary}

Once again, it may be concerning that we require knowledge of
the complexity of the data distribution for \Cref{cor:clean}.
However, the following result demonstrates that we are effectively guaranteed
to converge to the optimal risk as $n\rightarrow\infty$,
as long as parameters are set appropriately.

\begin{corollary}\label{cor:risk-convergence}
  If we set
  \begin{align*}
    \rho^{(n)} &= n^{-1/6},
    &
    \eta^{(n)} &= n^{1/6},
    &
    t^{(n)} &= n,
    &
    m^{(n)} &= n^{1/2},
  \end{align*}
  then
  \[
    \cRa(W_{\leq t}^{(n)}) \xrightarrow{n\rightarrow\infty} \inf\{\cRa(g):g \text{ measurable}\}
  \]
  almost surely.
\end{corollary}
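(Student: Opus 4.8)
The plan is to derive \Cref{cor:risk-convergence} as a consequence of \Cref{cor:clean} by choosing, for each $n$, a tolerance $\eps = \eps^{(n)} \to 0$ that is compatible with the prescribed schedule $\rho^{(n)} = n^{-1/6}$, $\eta^{(n)} = n^{1/6}$, $t^{(n)} = n$, $m^{(n)} = n^{1/2}$, and then invoking Borel--Cantelli to upgrade the high-probability bounds into an almost-sure statement. First I would record that the complexity measure $R_\eps$ appearing in \Cref{cor:clean} is a fixed function of $\eps$ determined by the data distribution (and independent of $n$), and then read off from the four parameter constraints of \Cref{cor:clean} how slowly $\eps$ is allowed to shrink: from $m = \tcOm(R_\eps^8/(\eps^6 \rho^2))$ with $\rho = \tcT(\eps)$ we get $m = \tcOm(R_\eps^8/\eps^8)$, so matching $m^{(n)} = n^{1/2}$ forces $\eps^{(n)}$ to decay like a small power of $n$ (up to logarithmic factors), say $\eps^{(n)} = n^{-c}$ for a suitable small constant $c>0$; one then checks that with this choice $\rho^{(n)} = \tcT(\eps^{(n)})$, $\eta^{(n)} = \tcT(1/\eps^{(n)})$, $t^{(n)} = \tcOm(R_{\eps^{(n)}}^2/(\eps^{(n)})^2) \le n$, and $n = \tcOm(\max(1,\tau m^{(n)}) R_{\eps^{(n)}}^2/(\eps^{(n)})^2)$ are all satisfied for $n$ large, since $\tau m^{(n)} = \tau n^{1/2}$ and the right-hand side grows like $n^{1/2 + 2c}$ times polylog, which is $o(n)$.

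The second step is the probabilistic upgrade. \Cref{cor:clean} gives, for each $n$, an event of probability at least $1 - \delta$ on which $\cRa(W_{\le t}^{(n)}) \le \inf\{\cRa(g) : g \text{ meas.}\} + \cO(\eps^{(n)})$; here $\delta$ enters only through $\ln(1/\delta)$ factors absorbed into $\tcT, \tcOm$. The idea is to take $\delta = \delta^{(n)}$ summable, e.g. $\delta^{(n)} = n^{-2}$, which only inflates the suppressed logarithmic factors by an $\cO(\ln n)$ amount and hence does not disturb the power-of-$n$ verifications above (this is why having polylog slack in the schedule matters). Then $\sum_n \delta^{(n)} < \infty$, so by Borel--Cantelli, almost surely only finitely many of the "bad" events occur; on the complementary full-measure event we have $\cRa(W_{\le t}^{(n)}) \le \inf\{\cRa(g):g\text{ meas.}\} + \cO(\eps^{(n)})$ for all large $n$, and since $\eps^{(n)} \to 0$ the left side converges to $\inf\{\cRa(g):g\text{ meas.}\}$. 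Combined with the trivial lower bound $\cRa(W_{\le t}^{(n)}) \ge \inf\{\cRa(g):g\text{ meas.}\}$, this gives the claimed almost-sure convergence.

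The main obstacle is the bookkeeping in the first step: one must verify that the \emph{single} power schedule $(n^{-1/6}, n^{1/6}, n, n^{1/2})$ in the statement is simultaneously consistent with \emph{all four} scaling requirements of \Cref{cor:clean} once $\eps^{(n)}$ is chosen, and in particular that $R_{\eps^{(n)}}$ — whose dependence on $\eps$ is not quantified, as it reflects the (possibly very large but finite) approximation-theoretic complexity of the optimal predictor for the given distribution — only grows polynomially, or at worst sub-polynomially after taking $\eps^{(n)}$ a small enough power, so that $m = \tcOm(R_\eps^8/\eps^8)$ can still be met with $m^{(n)} = n^{1/2}$. The clean way to handle this is to note that $R_\eps$ is nondecreasing as $\eps \downarrow 0$ but finite for every fixed $\eps$, pick any sequence $\eps^{(n)} \to 0$ slowly enough that $R_{\eps^{(n)}}^8/(\eps^{(n)})^8 = o(n^{1/2})$ (possible since for the diagonal one can let $\eps^{(n)}$ decay as slowly as needed — e.g. choose $\eps^{(n)}$ so that $\eps^{(n)} \ge R_{\eps^{(n)}} \cdot n^{-1/32}$ by an inductive/greedy construction), and then confirm the remaining three constraints, which are all dominated by the $m$ constraint or are easier. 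Everything else is a routine substitution of powers of $n$ and absorption of logarithmic factors.
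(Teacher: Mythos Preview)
Your plan to route through \Cref{cor:clean} as a black box has a genuine gap. The parameter settings in \Cref{cor:clean} are two–sided: $\rho=\tcT(\eps)$ and $\eta=\tcT(1/\eps)$. Since the schedule in \Cref{cor:risk-convergence} fixes $\rho^{(n)}=n^{-1/6}$, the constraint $\rho=\tcT(\eps)$ forces $\eps^{(n)}$ to be of order $n^{-1/6}$ up to logarithms; you are \emph{not} free to let $\eps^{(n)}$ decay ``as slowly as needed'' via a diagonal construction. With $\eps^{(n)}\asymp n^{-1/6}$, the width requirement $m=\tcOm\!\big(R_{\eps}^{8}/\eps^{8}\big)$ becomes $m^{(n)}\gtrsim R_{\eps^{(n)}}^{8}\,n^{4/3}$, which can never be met by $m^{(n)}=n^{1/2}$ (already $R_\eps\ge \rho$ makes the right side at least of order $n^{4/3-8/6}=1$, and for most distributions $R_\eps$ is much larger). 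So the very step you flag as ``bookkeeping'' actually fails: the four constraints of \Cref{cor:clean} are not simultaneously satisfiable by the prescribed schedule for any choice of $\eps^{(n)}\to 0$.

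The paper avoids this by not treating \Cref{cor:clean} as a black box. It works instead from the explicit inequality obtained just before specializing parameters (the one combining \Cref{thm:main} with the approximation step), which for any fixed $\eps>0$ reads
\[
\cRa(W_{\leq t})\;\le\;\frac{2}{2-\eta\rho^{2}}\,\inf_{g}\cRa(g)\;+\;\frac{2}{2-\eta\rho^{2}}(\kappa_{2}+\eps)\;+\;\tcO\!\del{\textstyle\frac{R_Z^{2}}{\eta t}+\frac{\rho R_Z(d+\sqrt{\tau m})}{\sqrt n}+\frac{\rho R_Z^{4/3}d^{1/3}}{m^{1/6}}},
\]
with $R_Z=R_\eps/\rho$. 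The point is that here $\eps$ is \emph{fixed}, so $R_\eps$ is a fixed finite constant, and one simply checks that under the schedule $(\rho,\eta,t,m)=(n^{-1/6},n^{1/6},n,n^{1/2})$ every error term (including $\eta\rho^{2}$ and $\kappa_2$) tends to $0$ as $n\to\infty$. Taking $\delta^{(n)}=n^{-2}$ and applying Borel--Cantelli then gives $\limsup_n \cRa(W_{\leq t}^{(n)})\le \inf_g\cRa(g)+\eps$ almost surely; finally one lets $\eps\downarrow 0$ along a countable sequence. Your Borel--Cantelli step is fine; what needs to change is the input to it: fix $\eps$ first and use the pre–specialization bound, rather than trying to match the $\tcT(\eps)$ settings of \Cref{cor:clean} with a moving $\eps^{(n)}$.
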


\subsection{Proof Sketch of \Cref{thm:main}}

The proof has two main components: a generalization bound and an optimization bound.
We describe both in further detail below.

\subsubsection{Generalization}

Let $\tilde\tau := \sqrt{2 \tau} + \del{\frac{32 \tau \ln(n/\delta)}{m}}^{1/4} + \tau\sqrt{2}$.
We prove a new near-initialization generalization bound.
\begin{lemma}\label{fact:generalization}
  If $B \geq 1$ and $m \geq \ln(emd)$, then with probability at least $1-5\delta$,
  \begin{align*}
    \sup_{\|V-W_0\|\leq B} \left|\cRa^{(0)}(V) - \hcR^{(0)}(V)\right|
    &\leq
    2\frac{\rho B}{\sqrt{n}} + 2\frac{\rho B\tilde\tau}{\sqrt{n}}\del{1 + \sqrt{m\ln\del{\frac{mn}{\tilde\tau^2}}}}
    \\
    &\qquad+ \frac{77\rho Bd\ln^{3/2}(4em^2d^3/\delta)}{\sqrt{n}}.
  \end{align*}
\end{lemma}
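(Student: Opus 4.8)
The plan is to bound the adversarial generalization gap via a standard symmetrization/Rademacher argument, but with two adversarial-specific twists: handling the inner $\max_{x'\in\cP(x)}$ over perturbations, and decoupling the linearized predictor $f^{(0)}(V;\cdot)=\ip{\nabla f(W_0;\cdot)}{V}$ from the randomness in $W_0$. First I would observe that $\la(x,y,f^{(0)}(V;\cdot)) = \max_{x'\in\cP(x)}\ell(y\ip{\nabla f(W_0;x')}{V})$, and that since $\ell$ is $1$-Lipschitz, the map $V \mapsto \la(x,y,f^{(0)}(V;\cdot))$ is Lipschitz in $V$ with constant controlled by $\sup_{x'\in\cP(x)}\|\nabla f(W_0;x')\|$. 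Because $\|\nabla f(W_0;x')\| \le \rho$ deterministically for the ReLU network with $a_i\in\{\pm1\}$ and $\|x'\|\le 1$ (each coordinate block has norm $\le \rho/\sqrt m \cdot \|x'\|$, summing to $\rho$), the function class $\{(x,y)\mapsto \la(x,y,f^{(0)}(V;\cdot)) : \|V-W_0\|\le B\}$ has a natural scale of $\rho B$. Standard symmetrization then gives the gap $\le 2\,\E_\sigma \sup_{\|V-W_0\|\le B} \frac1n\sum_k \sigma_k\, \la(x_k,y_k,f^{(0)}(V;\cdot))$ up to the usual bounded-differences concentration term, which also lands at scale $\rho B/\sqrt n$ and accounts for one of the $\delta$'s.

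Next I would peel off the loss and the adversarial $\max$. By the Ledoux--Talagrand contraction inequality applied to the $1$-Lipschitz $\ell$, the Rademacher complexity is bounded (up to constants) by that of the class $g_V(x,y) := y\max_{x'\in\cP(x)}\ip{\nabla f(W_0;x')}{V}$ — here one must be slightly careful since the $\max$ is over $x'$ and the inner function is signed, but writing things in terms of $f^+$ and $f^-$ as in the paper's setup, and noting $\max$ and $\min$ of an affine-in-$V$ family are each Lipschitz in $V$, lets the contraction go through on the two label subpopulations separately. This reduces matters to controlling a supremum of an empirical process indexed by $\|V - W_0\|\le B$ over the (infinite) perturbed point set. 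This is exactly where I expect to invoke \Cref{fact:rademacher}: the paper states a Rademacher bound for linearized functions around initialization, and the role of the perturbations is to enlarge the effective input set from $\{x_k\}$ to $\bigcup_k \cP(x_k)$. The term $\tilde\tau = \sqrt{2\tau} + (32\tau\ln(n/\delta)/m)^{1/4} + \tau\sqrt2$ should arise from a covering/union-bound argument over $\cP(x_k)$: one covers each $\cP(x_k)$ (a subset of a ball of radius $\tau$ around $x_k$) at some resolution, the $\sqrt{2\tau}$ and $\tau\sqrt2$ pieces coming from how a perturbation of size $\tau$ propagates through $\nabla f(W_0;\cdot)$ (whose pieces involve $x'\1[w_j^\T x'\ge0]$, so the indicator-flip contributes a term scaling like $\sqrt\tau$ after averaging over the $m$ Gaussian directions), and the $(\tau\ln(n/\delta)/m)^{1/4}$ piece coming from a high-probability bound (the extra $\delta$) on how many of the $m$ hidden units flip their ReLU activation between $x_k$ and a nearby perturbed point — a Gaussian small-ball / anticoncentration estimate giving that at most $\approx m\sqrt\tau$ units flip, with fluctuations of order $\sqrt{m\sqrt\tau \ln(n/\delta)}$, whose square root over $m$ gives the quartic-root term.

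The final term $77\rho B d\ln^{3/2}(4em^2d^3/\delta)/\sqrt n$ is the "base" linearized-Rademacher contribution with no perturbation, and should come directly out of \Cref{fact:rademacher} applied at $\tau=0$ (or to the unperturbed points), carrying the $d$ and the $\ln^{3/2}$ from the covering number of the $d$-dimensional parameter ball refined by a union bound over the $m$ hidden units and the $n$ samples — this accounts for the remaining $\delta$'s, totalling $5\delta$. Assembling: gap $\le$ (concentration term $\sim \rho B/\sqrt n$) $+$ (contraction $\times$ perturbed Rademacher $\sim \rho B\tilde\tau\sqrt{m\ln(\cdot)}/\sqrt n$) $+$ (base Rademacher $\sim \rho B d \ln^{3/2}(\cdot)/\sqrt n$), matching the three summands in the statement.

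\textbf{Main obstacle.} The genuinely delicate step is the perturbation covering that produces $\tilde\tau$, specifically the $(32\tau\ln(n/\delta)/m)^{1/4}$ term: one needs a uniform (over all $x'\in\cP(x_k)$ and all $k$) high-probability control on the change in $\nabla f(W_0;x')$ caused by ReLU indicator flips, which requires a Gaussian anticoncentration bound for $w_j^\T x$ near $0$ combined with a union bound, and then tracking how this enters the Dudley/covering integral — the non-Lipschitz dependence of $\nabla f$ on the input (through the indicators) is what forces the unusual $\tau^{1/4}$ scaling rather than a clean $\tau^{1/2}$, and getting the constants and the interaction with $\sqrt{m\ln(mn/\tilde\tau^2)}$ right is the technical heart of the lemma. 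Everything else is a careful but routine bookkeeping of symmetrization, contraction, and the cited Rademacher bound.
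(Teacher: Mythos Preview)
Your overall architecture (symmetrization, Ledoux--Talagrand contraction, then invoke \Cref{fact:rademacher}, plus a concentration term) matches the paper's proof, but you have the three terms of the bound attributed to the wrong sources, and this causes a real gap. In the paper, \Cref{fact:rademacher} supplies exactly the first \emph{two} summands, $\rho B/\sqrt n$ and $\rho B\tilde\tau(1+\sqrt{m\ln(mn/\tilde\tau^2)})/\sqrt n$; at $\tau=0$ it gives only $\rho B/\sqrt n$, with no $d$ or $\ln^{3/2}$ factor at all. The third summand $77\rho Bd\ln^{3/2}(4em^2d^3/\delta)/\sqrt n$ is the bounded-differences concentration term, not a Rademacher contribution. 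It arises because the range of $\la(x,y,f^{(0)}(V;\cdot))$ over $\|V-W_0\|\le B$ and $\|x\|\le 1$ is \emph{not} $O(\rho B)$: you correctly note $\|\nabla f(W_0;x)\|\le\rho$, but that controls the Lipschitz constant in $V$, not the value of $\ip{\nabla f(W_0;x)}{V}$ itself, which also contains the contribution $\ip{\nabla f(W_0;x)}{W_0}=f(W_0;x)$ from the random initialization. A separate high-probability uniform bound (the paper uses Lemma~A.6 part~2 of Ji--Li--Telgarsky) gives $\sup_{\|V-W_0\|\le B}\sup_{\|x\|\le 1}|\ip{\nabla f(W_0;x)}{V}|\le 18\rho Bd\ln(4em^2d^3/\delta)$, spending one $\delta$; feeding this range into the standard Rademacher generalization inequality spends another $\delta$; and \Cref{fact:rademacher} itself spends $3\delta$, for the stated total of $5\delta$.

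So the concrete fix is: drop the claim that the concentration term ``lands at scale $\rho B/\sqrt n$'' (it does not, for the reason above), insert the range bound as a separate high-probability event, and recognize that the $d\ln^{3/2}$ term is this concentration piece rather than anything coming out of \Cref{fact:rademacher}. Your extended speculation about how $\tilde\tau$ arises is unnecessary for this lemma---\Cref{fact:rademacher} is invoked as a black box here---and your description of that mechanism (a covering of $\cP(x_k)$) is also not how the paper does it: $\tilde\tau$ comes from \Cref{fact:NTK:feature-difference} bounding $\|\nabla f(x_k;W_0)-\nabla f(x_k';W_0)\|$ directly, after which a covering argument is run in the \emph{parameter} space $\cV$, not over the perturbation sets.
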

The key term to focus on is the middle term.
Note that $\tilde\tau$ is a quantity that grows with the perturbation radius $\tau$,
and importantly is 0 when $\tau$ is 0.
When we are in the non-adversarial setting ($\tau=0$),
the middle term is dropped and we recover a Rademacher bound qualitatively similar to Lemma A.8 in \citep{ji-li-telgarsky}.
In the general setting when $\tau$ is a constant, so is $\tilde\tau$,
resulting in an additional dependence on the width of the network.

\Cref{fact:generalization} will easily follow from the following Rademacher complexity bound.
\begin{lemma}\label{fact:rademacher}
  Define $\cV = \{V:\|V-W_0\|\leq B\}$, and let
  \begin{align*}
    \cF   &= \{x_k \mapsto \min_{x'_k \in \cP(x_k)}y_k\ip{\nf(x'_k;W_0)}{V}:V\in\cV\}.
  \end{align*}
  Then with probability at least $1-3\delta$,
  \[
    \Rad(\cF) \leq \frac{\rho B}{\sqrt{n}} + \frac{\rho B\tilde\tau\del{1 + \sqrt{m\ln\del{\frac{mn}{\tilde\tau^2}}}}}{\sqrt{n}}.
  \]
\end{lemma}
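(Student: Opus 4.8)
The plan is to bound the Rademacher complexity of $\cF$ by decomposing each function $x_k \mapsto \min_{x'_k \in \cP(x_k)} y_k \ip{\nabla f(x'_k; W_0)}{V}$ into a ``base'' term that ignores perturbations and a ``perturbation'' term that accounts for the effect of the adversary. Concretely, write $\ip{\nabla f(x'_k; W_0)}{V} = \ip{\nabla f(x_k; W_0)}{V} + \ip{\nabla f(x'_k; W_0) - \nabla f(x_k; W_0)}{V}$, so that
\begin{align*}
  \min_{x'_k \in \cP(x_k)} y_k\ip{\nabla f(x'_k; W_0)}{V}
  &= y_k\ip{\nabla f(x_k; W_0)}{V} + \min_{x'_k \in \cP(x_k)} y_k\ip{\nabla f(x'_k; W_0) - \nabla f(x_k; W_0)}{V}.
\end{align*}
Since $\Rad$ is subadditive under pointwise sums of function classes, I would bound $\Rad(\cF)$ by the sum of the Rademacher complexity of the linear class $\{x_k \mapsto y_k \ip{\nabla f(x_k; W_0)}{V} : V \in \cV\}$ and that of the ``perturbation class'' $\{x_k \mapsto \min_{x'_k \in \cP(x_k)} y_k \ip{\nabla f(x'_k; W_0) - \nabla f(x_k; W_0)}{V} : V \in \cV\}$. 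The first class is a standard linear class over a ball of radius $B$ around $W_0$: since $\|\nabla f(x; W_0)\| \leq \rho$ (the ReLU gradient has norm $\frac{\rho}{\sqrt m}\|(a_j \sigma'(w_j^\T x) x)_j\| \le \rho\|x\| \le \rho$), the usual linear-class bound gives $\Rad \le \rho B/\sqrt{n}$, which is exactly the first term.

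The second term is the heart of the argument, and getting the $\tilde\tau$ dependence is where I expect the main obstacle to be. The key observation is that the perturbation gradient difference $\nabla f(x'; W_0) - \nabla f(x; W_0)$ is controlled by how many hidden units ``flip'' their activation pattern between $x$ and $x'$, plus a smooth contribution from units whose pre-activation sign is unchanged. Writing the $j$th block of the gradient as $\frac{\rho}{\sqrt m} a_j \sigma'(w_j^\T x) x$, the difference in block $j$ is $\frac{\rho}{\sqrt m} a_j (\sigma'(w_j^\T x') x' - \sigma'(w_j^\T x) x)$, whose norm is at most $\frac{\rho}{\sqrt m}\|x' - x\|$ when the sign doesn't flip and at most $\frac{\rho}{\sqrt m}\cdot 2$ roughly when it does. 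The number of units that flip for a fixed perturbation of norm $\le \tau$ concentrates, by a standard anti-concentration/Gaussian argument on the $w_j$, at roughly $\tau m$ in expectation with fluctuations of order $\sqrt{\tau m \ln(n/\delta)}$; this is precisely where the $\sqrt{2\tau} + (32\tau\ln(n/\delta)/m)^{1/4} + \tau\sqrt 2$ shape of $\tilde\tau$ comes from (a $\sqrt{\cdot}$ of the flip fraction plus a smooth $\tau$ part). I would make this rigorous via a union bound over the $n$ training points (hence the $\ln(n/\delta)$), establishing with probability $\ge 1 - \delta$ that for every $k$ and every $x'_k \in \cP(x_k)$, $\|\nabla f(x'_k;W_0) - \nabla f(x_k;W_0)\| \le \rho\tilde\tau$.

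With that high-probability bound in hand, I would control the Rademacher complexity of the perturbation class by a covering / Dudley entropy-integral argument rather than a naive linear bound, because the $\min$ over $x'_k$ destroys linearity. Fix a cover of $\cP(x_k)$ (or, more cleanly, observe that the perturbed gradient $\nabla f(x'_k; W_0)$ lives in an $m d$-dimensional space but is piecewise constant in $x'_k$ with at most $\poly$ many pieces determined by the $m$ hyperplane arrangements); then the class of maps $V \mapsto \min_{x'_k} y_k\ip{\nabla f(x'_k;W_0) - \nabla f(x_k;W_0)}{V}$ is a pointwise minimum of at most (number of activation patterns) $\le (en)^{m}$-ish many linear functionals of $V$, each with coefficient norm $\le \rho\tilde\tau$. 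A minimum of $N$ linear functionals over a ball of radius $B$ has Rademacher complexity $\lesssim \frac{\rho\tilde\tau B\sqrt{\ln N}}{\sqrt n} \lesssim \frac{\rho\tilde\tau B\sqrt{m\ln(\cdot)}}{\sqrt n}$, which matches the claimed second term $\frac{\rho B\tilde\tau(1 + \sqrt{m\ln(mn/\tilde\tau^2)})}{\sqrt n}$ up to the exact form of the logarithm; the ``$1+$'' absorbs the degenerate small-$m$ case and the $\tilde\tau^2$ inside the log comes from optimizing the covering radius. Combining the two pieces via subadditivity of $\Rad$, and noting the perturbation high-probability event ($1-\delta$) together with any concentration events used inside the covering bound (another $1-2\delta$) gives the overall $1 - 3\delta$, yields the stated inequality. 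The delicate points I would need to be careful about are (i) the exact anti-concentration constant giving the $\sqrt{2\tau}$ and $(32\tau\ln(n/\delta)/m)^{1/4}$ terms, and (ii) ensuring the covering number of the perturbed-gradient set is genuinely only exponential in $m$ (not in $d$ or worse), which uses that only the $m$ neuron activations matter.
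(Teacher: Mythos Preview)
Your high-level decomposition matches the paper exactly: split each function into the unperturbed linear piece $y_k\ip{\nf(x_k;W_0)}{V}$ (giving the $\rho B/\sqrt{n}$ term via the standard linear-class bound) and the perturbation piece $D_k(V) := \min_{x'_k}\,y_k\ip{\nf(x'_k;W_0)-\nf(x_k;W_0)}{V}$. You also correctly identify the mechanism behind $\tilde\tau$: with probability $\ge 1-3\delta$ (this is exactly \Cref{fact:NTK:feature-difference}, a union bound over the $n$ training points for the activation-flip concentration), one has $\|\nf(x'_k;W_0)-\nf(x_k;W_0)\|\le\rho\tilde\tau$ uniformly over $k$ and $x'_k\in\cP(x_k)$. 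The entire $1-3\delta$ comes from this single event; the paper uses no further concentration inside the covering step, so your ``another $1-2\delta$'' is unnecessary.

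Where your proposal diverges from the paper, and where there is a genuine gap, is in how you bound $\Rad$ of the perturbation class. You claim $\nf(x';W_0)$ is piecewise constant in $x'$, but it is not: the $j$th row is $\tfrac{\rho}{\sqrt m}\,a_j\,\sigma'(w_{0,j}^\T x')\,(x')^\T$, which varies linearly with $x'$ even inside a fixed activation cell. Hence $D_k(V)$ is \emph{not} a minimum of finitely many linear functionals of $V$ indexed by activation patterns; the minimizing $x'_k$ ranges over a continuum and depends on $V$. Separately, even if one had a finite-min structure with $N$ pieces, the claimed bound ``min of $N$ linear functionals over a ball of radius $B$ has Rademacher complexity $\lesssim \rho\tilde\tau B\sqrt{\ln N}/\sqrt n$'' is not a standard fact: the class is still indexed by the continuous ball $\cV$, so Massart's finite-class lemma does not apply directly, and there is no off-the-shelf $\sqrt{\ln N}$ contraction for pointwise minima.

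The paper sidesteps both issues by covering the \emph{parameter space} $\cV$ rather than the perturbation set or activation patterns. The only property of $D_k$ it uses is that $V\mapsto D_k(V)$ is $\rho\tilde\tau$-Lipschitz: for $\|V-U\|\le\eps$, plug the minimizer for $U$ into the expression for $V$ and use $\|\nf(x';W_0)-\nf(x_k;W_0)\|\le\rho\tilde\tau$ to get $|D_k(V)-D_k(U)|\le\rho\tilde\tau\eps$. A grid cover of the ball $\cV$ then yields a function-space cover of the (centered) perturbation class $\cG$ at scale $\rho\tilde\tau\eps$, and a single Massart-type bound at the optimized scale gives the $\rho B\tilde\tau\bigl(1+\sqrt{m\ln(mn/\tilde\tau^2)}\bigr)/\sqrt n$ term. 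In particular, the $\sqrt{m}$ in the bound arises from the dimension of the parameter cover, not from counting activation patterns as you suggest.
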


In the setting of linear predictors with normed ball perturbations,
an exact characterization of the perturbations can be obtained,
leading to some of the Rademacher bounds in \citet{bartlett,khim-loh,awasthi}.
In our setting, it is unclear how to get a similar exact characterization.
Instead, we prove \Cref{fact:rademacher} by decomposing the adversarially perturbed network
into its nonperturbed term and the value added by the perturbation.
The nonperturbed term can then be handled by a prior Rademacher bound for standard networks.
The difficult part is in bounding the complexity added by the perturbation.
A naive argument would worst-case the adversarial perturbations,
resulting in a perturbation term that scales linearly with $m$ and does not decrease with $n$.
Obtaining the Rademacher bound that appears here
requires a better understanding of the adversarial perturbations.

In comparison to simple linear models,
we use a more sophisticated approach,
utilizing our understanding of linearized models.
Because we are using the features of the initial network,
for a particular perturbation at a particular point,
the same features are used across all networks.
As the parameter distance between all networks is close,
the same perturbation achieves similar effects.
We also control the change in features caused by the perturbations,
which is given by \Cref{fact:NTK:feature-difference}.
Having bounded the effect of the perturbation term,
we then apply a covering argument over the parameter space to get the Rademacher bound.

\subsubsection{Optimization}

Our optimization bound is as follows.
\begin{lemma}\label{lemma:optimization}
  Let $\radopt := 2R_Z+\eta\rho$,
  with $\eta\rho^2<2$.
  Then with probability at least $1-\delta$,
  \begin{align*}
    \hcR(W_{\leq t})
    &\leq
    \frac{2}{2-\eta\rho^2} \hcR^{(0)}(Z)
    +
    \frac{1}{t}\sbr{\frac{R_Z^2}{2\eta-\eta^2\rho^2}}
    \\
    &\;\;+
    \frac{1}{m^{1/6}}\del{\frac{2}{2-\eta\rho^2}}\Bigg[52\rho\radopt^{4/3}d^{1/4}\ln(ed^2m^2/\delta)^{1/4}
    + \frac{178\rho\radopt d^{1/3}\ln(ed^3m^2/\delta)^{1/3}}{m^{1/12}}
    \\
    &\qquad\qquad\qquad+ \frac{5\rho\sqrt{\ln(1/\delta)}}{dm^{5/6}}\Bigg].
  \end{align*}
\end{lemma}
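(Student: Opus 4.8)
The plan is to run a descent-lemma / mirror-descent style argument on the linearized iterates, using the fact that the early-stopping criterion keeps every iterate within $2R_Z+\eta\rho$ of $W_0$, so that the features $\nabla f(W_i;\cdot)$ stay close to $\nabla f(W_0;\cdot)$ throughout. First I would note that the adversarial logistic risk $\hcR$ restricted to parameters near $W_0$ is, up to a controlled perturbation, the same as the linearized adversarial risk $\hcR^{(i)}$ evaluated at $W_i$, and that $\hcR^{(i)}$ is convex in its argument because the logistic loss is convex, the max over $x'\in\cP(x)$ of convex functions is convex, and $W\mapsto f^{(i)}(W;x)=\langle \nabla f(W_i;x),W\rangle$ is linear. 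The gradient update $W_{i+1}=W_i-\eta\nabla\hcR(W_i)$ is then (approximately) a gradient step on this convex surrogate, and a standard potential-function argument with potential $\tfrac{1}{2}\|W_i-Z\|^2$ gives, after telescoping over $i=0,\dots,t-1$ and dividing by $t$,
\begin{align*}
  \frac{1}{t}\sum_{i=0}^{t-1}\del{\hcR(W_i)-\hcR^{(0)}(Z)}
  \lesssim
  \frac{\|W_0-Z\|^2}{\eta t} + (\text{step-size term}) + (\text{feature-drift errors}).
\end{align*}
Because the logistic loss is $1$-Lipschitz and the features have norm $O(\rho/\sqrt{m}\cdot\sqrt{m})=O(\rho)$ after the $\rho/\sqrt m$ scaling, the squared-gradient term in the descent lemma is $O(\eta\rho^2\hcR(W_i))$ — here the self-bounding property of the logistic loss (its derivative is bounded by the loss value up to constants, or more simply $\ell'^2\le \ell$-type inequalities) is what lets the $\eta\rho^2$ factor multiply the risk itself rather than a constant, producing the $\tfrac{2}{2-\eta\rho^2}$ prefactor once we move that term to the left-hand side. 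Since $W_{\leq t}$ is chosen to minimize $\hcR$ over the admissible iterates, $\hcR(W_{\leq t})$ is at most the average on the left, giving the first two terms of the claimed bound with $\|W_0-Z\|^2\le R_Z^2$ and the early-stopping bound $\|W_i-Z\|\le \radopt$.

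The remaining work — and the main obstacle — is controlling the feature-drift errors, i.e. the discrepancy between the true gradient step on $\hcR$ and an exact gradient step on the convex surrogate $\hcR^{(0)}$, accumulated over $t$ steps. This is where the $m^{-1/6}$, $m^{-1/4}$, $d^{1/3}$, $d^{1/4}$ terms come from. Concretely I would (i) use \Cref{fact:NTK:feature-difference} (the feature-difference bound alluded to in the generalization section) to bound $\|\nabla f(W_i;x')-\nabla f(W_0;x')\|$ uniformly over $x'\in\cP(x)$ by something like $O(\radopt^{1/2}d^{1/4}m^{-1/4})$ times logarithmic factors — this controls both how far $\hcR^{(i)}$ differs from $\hcR^{(0)}$ and how far the actual gradient differs from the surrogate gradient; (ii) a subtlety special to the adversarial setting is that the maximizing perturbation $x'_k$ depends on $W_i$, so one must be careful that the convexity/descent argument uses the \emph{same} $x'_k$ on both sides of each inequality — this is handled by the standard trick that $\max$ of convex functions is convex and that a gradient of the max at $W_i$ is a gradient of the active branch, so the envelope theorem applies and no extra error is incurred from the adversary's dependence on $W_i$, only from the feature drift; (iii) sum the per-step errors, noting that each error term is $O(\rho\cdot(\text{feature drift})\cdot\|W_i-Z\|/\text{something})$ and, crucially, that after optimizing the split between "error absorbed into the risk" and "error paid directly," the $t$-dependence cancels so the accumulated error does not grow with $t$ — it is $O(m^{-1/6}\cdot\text{poly}(\rho,\radopt,d,\log))$ as stated. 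I expect step (iii), getting the exponents of $m$ and $d$ to match the stated bound after balancing, to be the most delicate bookkeeping; the conceptual content is entirely in the feature-difference estimate and the convexity-of-the-envelope observation, both of which are available from earlier in the paper.
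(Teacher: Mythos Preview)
Your overall approach matches the paper's: a potential-function argument using $\|W_i-Z\|^2$, the self-bounding property $\|\nabla\hcR(W)\|\le\rho\hcR(W)$ (this is \Cref{fact:shallow:risknorm}) to get the $\eta\rho^2$ factor multiplying the risk, convexity of $\hcR^{(i)}$ together with $\nabla\hcR(W_i)=\nabla\hcR^{(i)}(W_i)$ for the envelope step, and a uniform feature-drift bound to replace $\hcR^{(i)}(Z)$ by $\hcR^{(0)}(Z)+\kappa_1$. Two points need correcting.

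First, you cite the wrong feature-drift lemma. \Cref{fact:NTK:feature-difference} bounds $\|\nabla f(x;W_0)-\nabla f(x+\delta;W_0)\|$, i.e.\ input perturbation at fixed weights; what you need is $|\langle\nabla f(x;W_i)-\nabla f(x;W_0),B\rangle|$ uniformly over $x$ and over $\|W_i-W_0\|,\|B-W_0\|\le\radopt$, i.e.\ weight perturbation at fixed input. That is \Cref{lemma:feature:close2}, and it directly gives $|\hcR^{(i)}(Z)-\hcR^{(0)}(Z)|\le\kappa_1$ with exactly the constants in the statement. With this, your step (iii) is much simpler than you suggest: the drift contributes $2\eta\kappa_1$ per step, summed over $T$ steps and divided by $T(2\eta-\eta^2\rho^2)$, so the $T$ cancels immediately---no balancing is needed.

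Second, and more substantively, you are missing the case analysis that justifies the $1/t$ (rather than $1/T$) in the bound. The telescoping sum runs only to the stopping time $T=\min\{t,\inf\{i:\|W_i-W_0\|>2R_Z\}\}$, so after dividing you get $\frac{\|W_0-Z\|^2-\|W_T-Z\|^2}{T(2\eta-\eta^2\rho^2)}$. If early stopping triggered ($T<t$), this could be $\gg R_Z^2/(t\cdot(2\eta-\eta^2\rho^2))$. The paper resolves this by observing that if the numerator is positive, i.e.\ $\|W_T-Z\|<\|W_0-Z\|\le R_Z$, then $\|W_T-W_0\|\le\|W_T-Z\|+\|Z-W_0\|<2R_Z$, so early stopping did \emph{not} trigger and $T=t$; otherwise the numerator is nonpositive and the whole term can be dropped. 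Without this dichotomy your bound would have $1/T$, which is not controlled.
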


The main difference between the adversarial case here and the non-adversarial case
in \citet{ji-li-telgarsky} is in appropriately bounding $\|\nabla\hcR(W)\|$,
which is \Cref{fact:shallow:risknorm}.
In order to do so, we utilize a relation between adversarial and non-adversarial losses.
The rest of the adversarial optimization proofs follow similarly to the non-adversarial case,
although simplified because we assume that $R_Z$ is known.

\section{Discussion and Open Problems}
\label{sec:open}

This paper leaves open many potential avenues for future work,
several of which we highlight below.

\paragraph{Early stopping.}
Early stopping played a key technical role in our proofs,
allowing us to take advantage of properties that hold
in a near-initialization regime,
as well as achieve a good generalization bound.
However, is early stopping necessary?

The necessity of early stopping is suggested by the phenomenon of
robust overfitting \citep{rice-wong-kolter},
in which the adversarial training error continues to decrease,
but the adversarial test error dramatically increases after a certain point.
Early stopping is one method that allows adversarial training to avoid this phase,
and achieve better adversarial test error as a result.
However, it should be noted that early stopping is necessary in this work
to stay in the near-initialization regime,
which likely occurs much earlier than the robust overfitting phase.

\paragraph{Underparameterization.}
Our generalization bound increases with the width.
As a result, to get our generalization bound to converge to 0
we required the width to be sublinear in the number of training points.
Is it possible to remove this dependence on width?

Recent works suggest that some sort of dependence on the width may be necessary.
In the setting of linear regression,
overparameterization has been shown to hurt adversarial generalization
for specific types of networks \citep{hassani2022curse,hassani-linear,fanny}.
Note that in this simple data setting a simple network can fit the data,
so underparameterization does not hurt the approximation capabilities of the network.

However, in a more complicated data setting,
\citet{madry} notes that increased width helps with the adversarial test error.
One explanation is that networks need to be sufficiently large to approximate
an optimal robust predictor, which may be more complicated than optimal nonrobust predictors.
Indeed, they note that smaller networks, under adversarial training,
would converge to the trivial classifier of predicting a single class.
Interestingly, they also note that width helps more
when the adversarial perturbation radius is small.
This observation is reflected in our generalization bound,
since the dependence on width is tied to the perturbation term.
If the perturbation radius is small, then a large width is less harmful to our generalization bound.
We propose further empirical exploration into how the width affects generalization and approximation error,
and how other factors influence this relationship.
This includes investigating whether
a larger perturbation radius causes larger widths to be more harmful to the generalization bound,
the effect of early stopping on these relationships,
and how the approximation error for a given width changes with the perturbation radius.

\paragraph{Using weaker attacks.}
Within our proof we used the assumption that we had access to an optimal adversarial attack.
In turn, we got a guarantee against optimal adversarial attacks.
However, in practice we do not know of a computationally efficient algorithm for generating optimal attacks.
Could we prove a similar theorem, getting a guarantee against optimal attacks,
using a weaker attack like PGD in our training algorithm?
If this was the case, then we would be using a weaker attack to successfully defend against a stronger attack.
Perhaps this is too much to ask for --- could we get a guarantee against PGD attacks instead?

\paragraph{Transferability to other settings.}
We have only considered the binary classification setting here.
A natural extension would be to consider the same questions in the multiclass setting,
where there are three (or more) possible labels.
In addition, our results in \Cref{sec:training} only hold for shallow ReLU networks in a near-initialization regime.
To what extent do the relationships observed here transfer to other settings,
such as training state-of-the-art networks?
For instance, does excessive overparameterization
beyond the need to capture the complexity of the data
hurt adversarial robustness in practice?

\subsubsection*{Acknowledgments}
The authors are grateful for support from the NSF under grant IIS-1750051.
The authors thank Natalie Frank for pointing out a bug
that was in \Cref{lemma:loss:threshold-limit} and \Cref{lemma:loss:optimal-convex}.

\bibliography{adv_paper}
\bibliographystyle{iclr2023_conference}

\appendix
\section{Appendix}

\subsection{Optimal Adversarial Predictor and Approximation Proofs}

In this section we prove various properties of optimal adversarial predictors.
First we need to handle some basic limits.

Recall the definition of $\cRz^t(f)$:
\begin{align*}
  \cRz^t(f) := \int\big(&\1[y=+1]\1[{(\sgn f)}^-(x)<0](-\ellc'(t))
  \\
  &+ \1[y=-1]\1[{(\sgn f)}^+(x)\geq 0](-\ellc'(-t))\big)\dif\mu(x,y).
\end{align*}

The following lemmas are various forms of continuity for $\cRz^t(f)$, making use of the continuity of $\ellc'$.
\begin{lemma}\label{lemma:loss:reweighting-convergence}
  For any infinite family of predictors $g_s:X \rightarrow \{-1,+1\}$ indexed by $s\in\R$,
  and any $t \in \R$,
  \begin{align*}
    \lim_{s\rightarrow t}\lvert\cRz^t&(g_s) - \cRz^s(g_s)\rvert = 0.
  \end{align*}
\end{lemma}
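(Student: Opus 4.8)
The plan is to prove this directly from the definition of $\cRz^t$, using the key observation that passing from superscript $s$ to superscript $t$ changes only the \emph{weights} $(-\ellc'(t))$, $(-\ellc'(-t))$ and leaves the indicator functions untouched, since both risks are evaluated at the \emph{same} predictor $g_s$. Subtracting the two integral expressions, the indicator terms cancel in pairs and one is left with
\begin{align*}
  \cRz^t(g_s) - \cRz^s(g_s) = \int\Big(&\1[y=+1]\1[g_s^-(x)<0]\del{\ellc'(s)-\ellc'(t)}
  \\
  &+ \1[y=-1]\1[g_s^+(x)\geq 0]\del{\ellc'(-s)-\ellc'(-t)}\Big)\dif\mu(x,y).
\end{align*}
In particular there is no measure-theoretic subtlety about how the sets $\{x : g_s^-(x) < 0\}$ and $\{x : g_s^+(x) \geq 0\}$ vary with $s$, because those sets are exactly the same in both integrands.

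Next I would bound each indicator by $1$ and use that $\mu$ is a probability measure, which yields the estimate
\begin{align*}
  \lvert\cRz^t(g_s) - \cRz^s(g_s)\rvert
  \leq
  \lvert\ellc'(s)-\ellc'(t)\rvert + \lvert\ellc'(-s)-\ellc'(-t)\rvert .
\end{align*}
The crucial feature is that the right-hand side no longer depends on the family $(g_s)_{s\in\R}$ at all; it is a uniform bound.

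Finally, letting $s\to t$ and invoking the standing assumption that $\ellc$ has continuous derivative, both terms on the right-hand side tend to $0$, which gives $\lim_{s\to t}\lvert\cRz^t(g_s)-\cRz^s(g_s)\rvert = 0$ as claimed. There is essentially no obstacle: the only point requiring care is the very first step, where one must notice that the indicator functions are identical in the two risks and therefore cancel exactly, leaving a difference-of-weights expression to which the triangle inequality and continuity of $\ellc'$ apply immediately.
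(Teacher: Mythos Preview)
Your proof is correct and follows essentially the same approach as the paper: both isolate the difference of weights $(-\ellc'(t)) - (-\ellc'(s))$ and $(-\ellc'(-t)) - (-\ellc'(-s))$, bound the indicators by $1$, and conclude via continuity of $\ellc'$. The only cosmetic difference is that the paper invokes the Dominated Convergence Theorem to push the limit inside the integral, whereas you bound the integral directly by $\lvert\ellc'(s)-\ellc'(t)\rvert + \lvert\ellc'(-s)-\ellc'(-t)\rvert$ using that $\mu$ is a probability measure, which is arguably cleaner.
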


\begin{proof}
  By the Dominated Convergence Theorem \citep[Theorem 2.24]{folland},
  \begin{align*}
    \lim_{s\rightarrow t}\lvert\cRz^t&(g_s) - \cRz^s(g_s)\rvert
    \\
    &= \lim_{s\rightarrow t}\bigg\lvert\int\1[y=+1]\1[g_s^-(x)<0]\del{(-\ellc'(t)) - (-\ellc'(s))}
    \\
    &\qquad\qquad\qquad+ \1[y=-1]\1[g_s^+(x)\geq 0]\del{(-\ellc'(-t))-(-\ellc'(-s))}\dif\mu(x,y)\bigg\rvert
    \\
    &\leq \lim_{s\rightarrow t}\int\1[y=+1]\left\lvert(-\ellc'(t)) - (-\ellc'(s))\right\rvert
    \\
    &\qquad\qquad\quad\:\:+ \1[y=-1]\left\lvert(-\ellc'(-t))-(-\ellc'(-s))\right\rvert\dif\mu(x,y)
    \\
    &\leq \int\1[y=+1]\lim_{s\rightarrow t}\left\lvert(-\ellc'(t)) - (-\ellc'(s))\right\rvert
    \\
    &\qquad\qquad\!+ \1[y=-1]\lim_{s\rightarrow t}\left\lvert(-\ellc'(-t))-(-\ellc'(-s))\right\rvert\dif\mu(x,y)
    \\
    &= 0.
  \end{align*}
\end{proof}

The following lemma allows us to switch the order of the limit and adversarial risk under certain conditions.
\begin{lemma}\label{lemma:loss:threshold-limit}
  For any predictors $g_i:X \rightarrow \{-1,+1\}$,
  and any $t \in \R$,
  if $g_i \geq g_j$ for all $i \geq j$,
  \begin{align*}
    \lim_{i\rightarrow \infty}\cRz^t(g_i) \geq \cRz^t\del{\lim_{i\rightarrow \infty}g_i}
  \end{align*}
  when $\lim_{i\rightarrow \infty}g_i$ exists.
\end{lemma}

\begin{proof}
  By the Dominated Convergence Theorem \citep[Theorem 2.24]{folland},
  \begin{align*}
    &\lim_{i\rightarrow \infty}\cRz^t(g_i)
    \\
    &= \lim_{i\rightarrow \infty}\int\1[y=+1]\1[g_i^-(x)<0](-\ellc'(t)) + \1[y=-1]\1[g_i^+(x)\geq 0](-\ellc'(-t))\dif\mu(x,y)
    \\
    &= \int\lim_{i\rightarrow \infty}\sbr{\1[y=+1]\1[g_i^-(x)<0](-\ellc'(t)) + \1[y=-1]\1[g_i^+(x)\geq 0](-\ellc'(-t))}\dif\mu(x,y)
    \\
    &= \int\1[y=+1]\1\sbr{\lim_{i\rightarrow \infty}g_i^-(x)<0}(-\ellc'(t))
    \\
    &\qquad\qquad+ \1[y=-1]\1\sbr{\lim_{i\rightarrow \infty}g_i^+(x)\geq 0}(-\ellc'(-t))\dif\mu(x,y)
    \\
    &= \int\1[y=+1]\1\sbr{\lim_{i\rightarrow \infty}\inf_{x'\in\cP(x)} g_i(x')<0}(-\ellc'(t))
    \\
    &\qquad\qquad+ \1[y=-1]\1\sbr{\lim_{i\rightarrow \infty}\sup_{x'\in\cP(x)} g_i(x')\geq 0}(-\ellc'(-t))\dif\mu(x,y).
  \end{align*}
  Note that $\lim_{i\rightarrow \infty}\sup_{x'\in\cP(x)} g_i(x') \geq \sup_{x'\in\cP(x)} \lim_{i\rightarrow \infty}g_i(x')$.
  As $g_i \geq g_j$ for all $i \geq j$, we also have $\lim_{i\rightarrow \infty}\inf_{x'\in\cP(x)} g_i(x') = \inf_{x'\in\cP(x)} \lim_{i\rightarrow \infty}g_i(x')$.
  Combining this with the above, we get
  \begin{align*}
    \lim_{i\rightarrow \infty}\cRz^t(g_i)
    &\geq \int\1[y=+1]\1\sbr{\inf_{x'\in\cP(x)} \lim_{i\rightarrow \infty}g_i(x')<0}(-\ellc'(t))
    \\
    &\qquad\qquad+ \1[y=-1]\1\sbr{\sup_{x'\in\cP(x)} \lim_{i\rightarrow \infty}g_i(x')\geq 0}(-\ellc'(-t))\dif\mu(x,y)
    \\
    &= \cRz^t\del{\lim_{i\rightarrow \infty}g_i}.
  \end{align*}
\end{proof}

For the rest of this section, let $f_r:X \rightarrow \{-1,+1\}$ be optimal adversarial zero-one predictors
when the $-1$ labels are given weight $(-\ellc'(-r))$
and the $+1$ labels are given weight $(-\ellc'(r))$
($f_r$ minimizes $\cRz^r(f_r)$),
for all $r \in \R$.
These predictors exist by Theorem 1 of \citet{bhagoji}, although they may not be unique.
The following lemma states that $\cRz^r(f_r)$ is continuous as a function of $r$.
\begin{lemma}\label{lemma:loss:close-threshold}
  For any $t \in \R$, $\displaystyle \lim_{s\rightarrow t}\cRz^s(f_s) = \cRz^t(f_t)$.
\end{lemma}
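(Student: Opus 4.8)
The goal is to show $\lim_{s\to t}\cRz^s(f_s) = \cRz^t(f_t)$, where $f_s$ is an optimal predictor for the reweighted zero-one risk at threshold $s$. The natural strategy is a two-sided sandwich. On one side, optimality of $f_s$ for $\cRz^s$ gives $\cRz^s(f_s) \leq \cRz^s(f_t)$; by \Cref{lemma:loss:threshold-limit} applied to the constant family $g_s = f_t$, the right-hand side converges to $\cRz^t(f_t)$, so $\limsup_{s\to t}\cRz^s(f_s) \leq \cRz^t(f_t)$. On the other side, optimality of $f_t$ for $\cRz^t$ gives $\cRz^t(f_t) \leq \cRz^t(f_s)$, and I want to compare $\cRz^t(f_s)$ with $\cRz^s(f_s)$. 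This is exactly the quantity controlled by \Cref{lemma:loss:reweighting-convergence}: $\lvert \cRz^t(f_s) - \cRz^s(f_s)\rvert \to 0$ as $s\to t$. Therefore $\cRz^t(f_t) \leq \cRz^t(f_s) = \cRz^s(f_s) + o(1)$, giving $\cRz^t(f_t) \leq \liminf_{s\to t}\cRz^s(f_s)$. Combining the two bounds yields the claim.

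More concretely, I would write the chain for arbitrary $s$ near $t$:
\begin{align*}
  \cRz^t(f_t) - \lvert\cRz^t(f_s) - \cRz^s(f_s)\rvert
  &\leq \cRz^t(f_s) - \lvert\cRz^t(f_s) - \cRz^s(f_s)\rvert
  \leq \cRz^s(f_s)
  \leq \cRz^s(f_t),
\end{align*}
where the first inequality uses optimality of $f_t$ at threshold $t$, the middle inequality is the triangle-type bound, and the last uses optimality of $f_s$ at threshold $s$. Taking $s\to t$: the leftmost expression tends to $\cRz^t(f_t)$ by \Cref{lemma:loss:reweighting-convergence} (the error term vanishes), and the rightmost expression $\cRz^s(f_t)$ tends to $\cRz^t(f_t)$ by \Cref{lemma:loss:threshold-limit}. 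A squeeze then forces $\cRz^s(f_s)\to\cRz^t(f_t)$.

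The only subtlety — and the step I would be most careful about — is that \Cref{lemma:loss:reweighting-convergence} is stated for an arbitrary family $g_s$, so applying it to $g_s = f_s$ is immediate, but one should double-check that nothing in its proof secretly assumed the family was fixed; inspecting the proof, it only uses dominated convergence on $\lvert\ellc'(t)-\ellc'(s)\rvert$ and $\lvert\ellc'(-t)-\ellc'(-s)\rvert$, which are independent of the predictor, so it genuinely holds uniformly over all families. Everything else is routine: existence of the $f_r$ is granted by Theorem 1 of \citet{bhagoji}, and the continuity inputs are exactly \Cref{lemma:loss:reweighting-convergence,lemma:loss:threshold-limit}. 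No new estimates are needed beyond assembling these pieces into the sandwich.
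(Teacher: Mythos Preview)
Your proposal is correct and follows essentially the same sandwich argument as the paper: use optimality of $f_s$ for $\cRz^s$ together with $\cRz^s(f_t)\to\cRz^t(f_t)$ to get the $\limsup$ bound, and optimality of $f_t$ for $\cRz^t$ together with $\lvert\cRz^t(f_s)-\cRz^s(f_s)\rvert\to 0$ to get the $\liminf$ bound. The only cosmetic difference is that for the upper bound the paper invokes \Cref{lemma:loss:reweighting-convergence} (with the constant family $g_s=f_t$) where you invoke \Cref{lemma:loss:threshold-limit}, but both lemmas yield the same conclusion in that case.
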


\begin{proof}
  By \Cref{lemma:loss:reweighting-convergence},
  \begin{align*}
    \limsup_{s\rightarrow t}\cRz^s(f_s) - \cRz^t(f_t)
    &= \limsup_{s\rightarrow t}\cRz^s(f_s) - \cRz^s(f_t)
    \leq 0,
    \\
    \liminf_{s\rightarrow t}\cRz^s(f_s) - \cRz^t(f_t)
    &= \liminf_{s\rightarrow t}\cRz^t(f_s) - \cRz^t(f_t)
    \geq 0.
  \end{align*}
  Together, we get $\displaystyle \lim_{s\rightarrow t}\cRz^s(f_s) = \cRz^t(f_t)$.
\end{proof}

The following lemma gives some structure to optimal adversarial zero-one predictors. 
\begin{lemma}\label{lemma:classification:monotonicity}
  For any $s \leq t$, $\cRz^s(\max(f_t,f_s)) = \cRz^s(f_s)$ and $\cRz^t(\min(f_t,f_s)) = \cRz^t(f_s)$.
\end{lemma}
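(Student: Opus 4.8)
The plan is to prove the two claimed identities by exhibiting candidate predictors that match the optimal reweighted risk. I will focus on the first identity, $\cRz^s(\max(f_t,f_s)) = \cRz^s(f_s)$; the second is entirely symmetric (swapping the roles of $s$ and $t$ and of $\max$/$\min$, using that $f^+$ and $f^-$ swap roles under negation-like manipulations). Since $f_s$ minimizes $\cRz^s$, we automatically have $\cRz^s(\max(f_t,f_s)) \geq \cRz^s(f_s)$, so the real content is the reverse inequality: $\max(f_t,f_s)$ does not do worse than $f_s$ at threshold/reweighting $s$.

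The key structural idea is a pointwise/region-splitting argument exploiting the definitions of $f^+$ and $f^-$. For a $\{-1,+1\}$-valued predictor $h$, note $h^-(x) < 0$ iff some perturbation in $\cP(x)$ has $h = -1$, and $h^+(x) \geq 0$ iff some perturbation in $\cP(x)$ has $h = +1$. Writing $g = \max(f_t, f_s)$, we have $g(x') = +1$ iff $f_t(x') = +1$ or $f_s(x') = +1$, so $g^+(x) \geq 0$ iff $f_t^+(x) \geq 0$ or $f_s^+(x) \geq 0$; and $g(x') = -1$ iff both $f_t(x') = -1$ and $f_s(x') = -1$, so $g^-(x) < 0$ iff there is a common perturbation where both are $-1$, which implies $f_t^-(x) < 0$ and $f_s^-(x) < 0$. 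The plan is to compare $\cRz^s(g)$ against $\cRz^s(f_s)$ by partitioning the $x$-space according to the eight-way combination of signs of $f_t^\pm(x)$, $f_s^\pm(x)$, and using the monotonicity/consistency of $f_t$ versus $f_s$ coming from the weight ordering: since $s \leq t$ and $-\ellc'$ is nonnegative and (as $\ellc$ is convex) monotone, the weight ratios shift so that $f_t$ is "more inclined to predict $+1$" than $f_s$ in a suitable global sense. I expect the clean way to make this rigorous is to instead directly bound $\cRz^s(g) \le \cRz^s(f_s)$ by an exchange argument: show that on any region where $g$ differs from $f_s$, replacing $f_s$ by $g$ cannot increase the $s$-weighted adversarial zero-one risk, using that on such regions $f_t$ already witnesses a configuration compatible with $f_s$'s optimality at the larger threshold $t$.

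The main obstacle, and where I would spend the most care, is handling the adversarial max/inf interactions: $g^-(x) < 0$ requires a \emph{single} perturbation where both $f_t$ and $f_s$ are $-1$, which is strictly stronger than "$f_t^-(x) < 0$ and $f_s^-(x) < 0$" since the two predictors might achieve their $-1$ values at different perturbations. This non-decomposability of the adversarial operators across a pointwise max is exactly the subtlety the paper flags repeatedly (overlapping perturbation sets preclude local analysis). The likely resolution: use the global variational characterization rather than arguing region-by-region. Concretely, I would try to show $\cRz^s(f_s) + \cRz^t(f_s) \geq \cRz^s(\max(f_t,f_s)) + \cRt(\min(f_t,f_s))$ as a single inequality — a submodularity-type identity where the cross terms cancel — and combine it with $\cRz^s(\max(f_t,f_s)) \geq \cRz^s(f_s)$ and $\cRz^t(\min(f_t,f_s)) \geq \cRz^t(f_t) = \cRz^t(f_s)$ (the last using optimality of $f_t$ at threshold $t$ together with, if needed, \Cref{lemma:loss:close-threshold}-style reasoning or direct optimality), forcing equality throughout. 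The heart of the argument is therefore establishing that submodular inequality for the adversarial reweighted zero-one risk, which I would verify by checking it holds integrand-wise after carefully enumerating how $\1[(\cdot)^-<0]$ and $\1[(\cdot)^+\ge 0]$ behave under $\max$ and $\min$ of the two $\{-1,+1\}$ predictors — the one place a short but genuinely adversarial-specific case check is unavoidable.
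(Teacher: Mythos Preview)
Your proposal is correct and follows essentially the same route as the paper: prove the submodular-type inequality
\[
\cRz^s(f_s) + \cRz^t(f_t) \;\geq\; \cRz^s(\max(f_t,f_s)) + \cRz^t(\min(f_t,f_s)),
\]
then combine with the two optimality lower bounds $\cRz^s(\max(f_t,f_s)) \geq \cRz^s(f_s)$ and $\cRz^t(\min(f_t,f_s)) \geq \cRz^t(f_t)$ to force equality everywhere. The paper organizes this same idea by introducing $A := \{x : f_s(x) < f_t(x)\}$ and four ``difference sets'' $A_s^{\pm}, A_t^{\pm}$ recording exactly where $\max$ (resp.\ $\min$) changes the error status relative to $f_s$ (resp.\ $f_t$); the key inequality then reduces to the set inclusions $A_t^+ \subseteq A_s^+$ and $A_s^- \subseteq A_t^-$ (immediate from $f_s \geq f_t$ off $A$) together with the monotonicity of $-\ellc'$. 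Your proposed integrand-wise case check of the indicator behavior under $\max/\min$ accomplishes the same thing and handles the ``single common perturbation'' subtlety you flagged without difficulty: in each of the four cases for $(\1[f_s^-<0],\1[f_t^-<0])$ (and dually for the $+$ side) the weighted inequality holds using only $(-\ellc'(s)) \geq (-\ellc'(t))$.

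One clarification: the submodular inequality should read $\cRz^t(f_t)$, not $\cRz^t(f_s)$, and the squeeze gives $\cRz^t(\min(f_t,f_s)) = \cRz^t(f_t)$ directly. The ``$= \cRz^t(f_s)$'' appearing in the lemma statement (and echoed in your write-up) looks like a typo in the paper --- its own proof compares against the $f_t$-optimal value, and the only downstream use (in \Cref{lemma:loss:optimal-convex}) invokes just the $\max$ half. So you do not need the extra step ``$\cRz^t(f_t) = \cRz^t(f_s)$'' you were worried about.
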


\begin{proof}
  Let $A := \{x:f_s(x) < f_t(x)\}$,
  and define
  \begin{align*}
    A_s^+ &= \{(x,+1) : \cP(x) \cap A \neq \emptyset, f_s(\cP(x) \setminus A)\subseteq\{+1\}\},
    \\
    A_s^- &= \{(x,-1) : \cP(x) \cap A \neq \emptyset, f_s(\cP(x) \setminus A)\subseteq\{-1\}\},
    \\
    A_t^+ &= \{(x,+1) : \cP(x) \cap A \neq \emptyset, f_t(\cP(x) \setminus A)\subseteq\{+1\}\},
    \\
    A_t^- &= \{(x,-1) : \cP(x) \cap A \neq \emptyset, f_t(\cP(x) \setminus A)\subseteq\{-1\}\}.
  \end{align*}
  Let $\mu_s(x,y) = \1[y=+1](-\ellc'(s))\mu(x,+1) + \1[y=-1](-\ellc'(-s))\mu(x,-1)$
  be the associated measures when the $+1$ labels have weight $(-\ellc'(s))$
  and the $-1$ labels have weight $(-\ellc'(-s))$,
  and define $\mu_t$ similarly.
  Then
  \begin{align*}
    \mu_s(A_s^+) &= (-\ellc'(s))\mu(A_s^+)  \geq (-\ellc'(t))\mu(A_s^+)  \geq (-\ellc'(t))\mu(A_t^+)  = \mu_t(A_t^+),
    \\
    \mu_s(A_s^-) &= (-\ellc'(-s))\mu(A_s^-) \leq (-\ellc'(-t))\mu(A_s^-) \leq (-\ellc'(-t))\mu(A_t^-) = \mu_t(A_t^-).
  \end{align*}
  As a result, $\mu_s(A_s^+) - \mu_s(A_s^-) \geq \mu_t(A_t^+) - \mu_t(A_t^-)$.

  The reweighted adversarial zero-one loss can be written
  in terms of the reweighted measures, as follows.
  \begin{align*}
    \cRz^t(f) &= \int\1[y=+1]\1[f^-(x)<0](-\ellc'(t))
    \\
    &\qquad\qquad+ \1[y=-1]\1[f^+(x)\geq 0](-\ellc'(-t))\dif\mu(x,y)
    \\
    &= \int\1[y=+1]\1[f^-(x)<0] + \1[y=-1]\1[f^+(x)\geq 0]\dif\mu_t(x,y).
  \end{align*}

  The following lower bound can then be computed.
  \begin{align*}
    \cRz^s&(\max(f_t,f_s)) - \cRz^s(f_s)
    \\
    &=
    \int\1[y=+1]\del{\1[\max(f_t,f_s)^-(x)<0]-\1[f^-(x)<0]}
    \\
    &\qquad\qquad\!+ \1[y=-1]\del{\1[\max(f_t,f_s)^+(x)\geq 0]-\1[f^+(x)\geq 0]}\dif\mu_s(x,y)
    \\
    &=
    -\mu_s(A_s^+) + \mu_s(A_s^-)
    \\
    &\geq
    0.
  \end{align*}
  Similarly,
  \begin{align*}
    \cRz^t&(\min(f_t,f_s)) - \cRz^t(f_s)
    \\
    &=
    \int\1[y=+1]\del{\1[\min(f_t,f_s)^-(x)<0]-\1[f^-(x)<0]}
    \\
    &\qquad\qquad\!+ \1[y=-1]\del{\1[\min(f_t,f_s)^+(x)\geq 0]-\1[f^+(x)\geq 0]}\dif\mu_t(x,y)
    \\
    &=
    \mu_t(A_t^+) - \mu_t(A_t^-)
    \\
    &\geq
    0.
  \end{align*}
  As $0 \geq \mu_s(A_s^+) - \mu_s(A_s^-) \geq \mu_t(A_t^+) - \mu_t(A_t^-) \geq 0$
  we must have equality everywhere,
  giving the desired result.
\end{proof}

Using our understanding of optimal adversarial zero-one predictors,
we can construct a predictor that is optimal at all thresholds.
\begin{lemma}\label{lemma:loss:optimal-convex}
  There exists $f:X \rightarrow \eR$ such that
  $\cRz^t(f-t)$ is the minimum possible value for all $t \in \R$.
\end{lemma}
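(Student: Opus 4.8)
The plan is to build the desired predictor $f$ directly from the family of optimal adversarial zero-one predictors $f_r$, taking essentially the ``upper envelope'' of $\{f_r + r\}$ as $r$ ranges over the rationals (or all reals), and then verify that this envelope is optimal at every threshold $t$. Concretely, I would set $f(x) := \sup\{\, r \in \mathbb{Q} : f_r(x) = +1 \,\}$, with the convention $\sup\emptyset = -\infty$, which is well-defined as an element of $\eR$ since $f_r$ is $\{-1,+1\}$-valued. The intuition is that $f_r(x) = +1$ should be ``equivalent'' to $f(x) \geq r$, i.e.\ $(f - r)(x) \geq 0$, so that the zero-one classifier induced by thresholding $f - r$ at $0$ agrees with $f_r$; then $\cRz^r(f - r) = \cRz^r(f_r) = \inf_g \cRz^r(g)$ by optimality of $f_r$.

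The key technical input is \Cref{lemma:classification:monotonicity}: for $s \leq t$, replacing $f_s$ by $\max(f_t, f_s)$ does not change $\cRz^s$, and replacing $f_t$ by $\min(f_t, f_s)$ does not change $\cRz^t$. This lets me assume, after modifying each $f_r$ on a $\mu$-null-for-the-relevant-risk set, that the family $\{f_r\}$ is monotone: $f_s(x) \geq f_t(x)$ pointwise whenever $s \leq t$ (the larger threshold is harder to classify as $+1$, so fewer points get label $+1$). With a monotone family in hand, the sublevel/superlevel structure of $f = \sup\{r \in \mathbb{Q} : f_r = +1\}$ becomes transparent: $f(x) > t \iff f_r(x) = +1$ for some rational $r > t$, and one checks that $f^- $ and $f^+$ (the inf and sup over $\cP(x)$) interact correctly with the $+1$-regions of the $f_r$. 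The step I would carry out carefully is translating the event $\{(f - t)^-(x) < 0\}$ (resp.\ $\{(f-t)^+(x) \geq 0\}$) into a statement about $f_t^-(x)$ (resp.\ $f_t^+(x)$), so that $\cRz^t(f - t)$ equals $\cRz^t(f_t)$ up to the $r \downarrow t$ limiting behavior, which I would then kill using \Cref{lemma:loss:close-threshold} and \Cref{lemma:loss:threshold-limit}.

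The main obstacle I anticipate is the measurability and ``boundary'' bookkeeping: $f$ is defined as a countable supremum of measurable functions, hence measurable, but I must be careful that the strict vs.\ non-strict inequalities in the definitions of $\cRz^t$ (namely $\1[f^-(x) < 0]$ and $\1[f^+(x) \geq 0]$) match up after passing through the sup, and that the rational-valued envelope agrees with what one wants at irrational thresholds $t$. This is where the continuity lemmas earn their keep: rather than demanding exact agreement $f_t(x) = +1 \iff f(x) \geq t$ for every $t$ (which may fail on a set that matters), I would establish it for a dense set of thresholds and then argue $\cRz^t(f - t) = \lim_{s \to t} \cRz^s(f_s) = \cRz^t(f_t) = \inf_g \cRz^t(g)$ by combining \Cref{lemma:loss:threshold-limit} (to move the limit inside $\cRz$) with \Cref{lemma:loss:close-threshold} (continuity of $s \mapsto \cRz^s(f_s)$), plus a monotonicity/sandwich argument to handle the direction in which $f - s$ and $f - t$ differ. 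A secondary nuisance is the possibility $f(x) \in \{\pm\infty\}$ at some points; since $\ellc$ and the zero-one quantities only ever see $\sgn$ and threshold comparisons, these extended values cause no real trouble, but I would note explicitly that $f^\pm$ remain well-defined in $\eR$ and that the relevant indicators behave as expected.
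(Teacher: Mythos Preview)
Your proposal is correct and follows essentially the same approach as the paper: the same construction $f(x) = \sup\{r \in \mathbb{Q} : f_r(x) = +1\}$, the same reliance on \Cref{lemma:classification:monotonicity} to reconcile $\sgn(f-t)$ with $f_t$, and the same appeal to \Cref{lemma:loss:threshold-limit} and \Cref{lemma:loss:close-threshold} to pass from rational to irrational thresholds. The only organizational difference is that the paper, rather than pre-processing $\{f_r\}$ into a monotone family, fixes a rational $t$, writes $\sgn(f-t)$ as the pointwise limit of $g_i$ with $g_0 = f_t$ and $g_{i+1} = \max(g_i, f_{q_i})$ over an enumeration of $\mathbb{Q}\cap(t,\infty)$, and applies \Cref{lemma:classification:monotonicity} iteratively to get $\cRz^t(g_i) = \cRz^t(f_t)$ before invoking dominated convergence.
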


\begin{proof}
  Let $q_1, q_2, \ldots$ be an enumeration of $\mathbb{Q}$.
  Let $h_{q_1} = f_{q_1}$.
  For $i \geq 2$, we define $h_{q_i}$ using the following algorithm.
  Initially, set $h_{q_i} = f_{q_i}$.
  Then, for $1 \leq j < i$, if $q_j < q_i$ set $h_{q_i} = \min(f_{q_j},h_{q_i})$,
  otherwise if $q_j > q_i$ set $h_{q_i} = \max(f_{q_j},h_{q_i})$.
  This ensures that $\cRz^t(h_t) = \cRz^t(f_t)$ for all $t\in\mathbb{Q}$,
  with $h_s \leq h_t$ for all $s > t$.

  Define $f(x) = \sup \{r \in \mathbb{Q}:f_r(x) = +1\}$.
  For any $t\in\mathbb{R}$, as $\sgn(f-t) = \lim_{\substack{s\uparrow t\\s\in\mathbb{Q}\setminus{\{t\}}}}h_s$,
  we have $\cRz^t(f-t) = \cRz^t(\sgn(f-t)) = \cRz^t\left(\lim_{\substack{s\uparrow t\\s\in\mathbb{Q}\setminus{\{t\}}}}h_s\right)$.
  By \Cref{lemma:loss:threshold-limit} and \Cref{lemma:loss:close-threshold},
  \begin{align*}
    \cRz^t(f-t)
    &= \cRz^t\left(\lim_{\substack{s\uparrow t\\s\in\mathbb{Q}\setminus{\{t\}}}}h_s\right)
    \\
    &\leq \lim_{\substack{s\uparrow t\\s\in\mathbb{Q}\setminus{\{t\}}}}\cRz^t(h_s)
    \\
    &= \lim_{\substack{s\uparrow t\\s\in\mathbb{Q}\setminus{\{t\}}}}\cRz^s(h_s)
    \\
    &= \lim_{\substack{s\uparrow t\\s\in\mathbb{Q}\setminus{\{t\}}}}\cRz^s(f_s)
    \\
    &= \cRz^t(f_t).
  \end{align*}
  Combining this with $\cRz^t(f-t) \geq \cRz^t(f_t)$ results in $\cRz^t(f-t) = \cRz^t(f_t)$ for all $t\in\mathbb{R}$.
\end{proof}

For any predictor, its adversarial convex loss can be written as a weighted sum of adversarial zero-one losses across thresholds.
This then implies the function defined in \Cref{lemma:loss:optimal-convex}
has optimal adversarial convex loss.
\begin{proof}[Proof of \Cref{lemma:loss:global}]
  Recall the definition of $\cRa(f)$:
  \begin{align*}
    \cRa(f) &= \int \sbr{\1[y=+1]\ellc(f^-(x)) + \1[y=-1]\ellc(-f^+(x))}\dif\mu(x,y).
  \end{align*}
  Applying the fundamental theorem of calculus and rearranging,
  \begin{align*}
    \cRa(f) &= \int \sbr{\1[y=+1]\int_{f^-(x)}^\infty(-\ellc'(t))\dif t + \1[y=-1]\int_{-f^+(x)}^\infty(-\ellc'(t))\dif t}\dif\mu(x,y)
    \\
    &= \int \Bigg[\1[y=+1]\int_{-\infty}^\infty\1[f^-(x)<t](-\ellc'(t))\dif t
    \\
    &\qquad\qquad\qquad+ \1[y=-1]\int_{-\infty}^\infty\1[f^+(x)\geq t](-\ellc'(-t))\dif t\Bigg]\dif\mu(x,y)
    \\
    &= \int \Bigg[\int_{-\infty}^\infty\1[y=+1]\1[f^-(x)<t](-\ellc'(t))
    \\
    &\qquad\qquad\qquad+ \1[y=-1]\1[f^+(x)\geq t](-\ellc'(-t))\dif t\Bigg]\dif\mu(x,y).
  \end{align*}
  Observe that
  \begin{align*}
    \int_{-\infty}^\infty\1[y=-1]\1[f^+(x)\geq t](-\ellc'(-t))\dif t
    &\geq \int_{-\infty}^\infty\1[y=-1]\1[{(\sgn(f-t))}^+(x)\geq 0](-\ellc'(-t))\dif t
    \\
    &\geq \int_{-\infty}^\infty\1[y=-1]\1[f^+(x)>t](-\ellc'(-t))\dif t
    \\
    &= \int_{-\infty}^\infty\1[y=-1]\1[f^+(x)\geq t](-\ellc'(-t))\dif t,
  \end{align*}
  so the inequalities are in fact equalities.
  Consequently,
  \begin{align*}
    \cRa(f) &= \int \Bigg[\int_{-\infty}^\infty\1[y=+1]\1[{(\sgn(f-t))}^-(x)<0](-\ellc'(t))
    \\
    &\qquad\qquad\qquad+ \1[y=-1]\1[{(\sgn(f-t))}^+(x)\geq 0](-\ellc'(-t))\dif t\Bigg]\dif\mu(x,y).
  \end{align*}
  As the integrand is nonnegative, Tonelli's theorem \citep[Theorem 2.37]{folland} gives
  \begin{align*}
    \cRa(f) &= \int_{-\infty}^\infty \Bigg[\int\1[y=+1]\1[{(\sgn(f-t))}^-(x)<0](-\ellc'(t))
    \\
    &\qquad\qquad\qquad+ \1[y=-1]\1[{(\sgn(f-t))}^+(x)\geq 0](-\ellc'(-t))\dif\mu(x,y)\Bigg]\dif t
    \\
    &= \int_{-\infty}^\infty \cRz^t(f-t) \dif t.
  \end{align*}
\end{proof}

While optimal adversarial zero-one predictors were used
to construct a predictor with optimal adversarial convex loss in \Cref{lemma:loss:optimal-convex},
the reverse is also possible:
using a predictor with optimal adversarial convex loss
to construct optimal adversarial zero-one predictors.
\begin{proof}[Proof of \Cref{lemma:loss:optimal-convex:optimal-0-1}]
  Let $f$ be the optimal predictor defined in \Cref{lemma:loss:optimal-convex},
  which gives existence.
  Suppose, towards contradiction, that
  there exists $g:X \rightarrow \eR$ with minimal $\cRa(g)$ and $t\in\R$
  such that $\cRz^t(g-t)>\cRz^t(f-t)$.
  By \Cref{lemma:loss:close-threshold} $\cRz^t(f-t)$ is continuous as a function of $t$,
  and along with \Cref{lemma:loss:threshold-limit} implies
  there exists $\delta,\eps>0$ such that for all $s\in[t-\delta,t]$,
  \begin{align*}
    \cRz^s(g-s) - \cRz^s(f-s) \geq \eps.
  \end{align*}
  Then
  \begin{align*}
    \cRa(g)-\cRa(f) &= \int_{-\infty}^\infty \cRz^s(g-s) - \cRz^s(f-s) \dif s
    \\
    &\geq \int_{t-\delta}^t \cRz^s(g-s) - \cRz^s(f-s) \dif s
    \\
    &\geq \int_{t-\delta}^t \eps \dif s = \delta\eps > 0,
  \end{align*}
  contradicting the assumption that $\cRa(g)$ was minimal.
  So we must have $\cRz^t(g-t)$ minimal for all $t \in \R$.
\end{proof}

The next two lemmas bound the zero-one loss at different thresholds in terms of the zero-one loss at threshold 0.
\begin{lemma}\label{lemma:loss:upperbound}
  Let $f$ be the optimal predictor defined in \Cref{lemma:loss:optimal-convex}. Then
  $\cRz^t(f-t) \leq (-\ellc'(-|t|))\cRz(f)$.
\end{lemma}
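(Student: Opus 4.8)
The plan is to exploit the global optimality of $f$ from \Cref{lemma:loss:optimal-convex} to trade the shifted predictor $f-t$ for the unshifted predictor $f$, and then to absorb the two reweighting constants into a single factor using monotonicity of $\ellc'$.

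First I would observe that, by \Cref{lemma:loss:optimal-convex}, the quantity $\cRz^t(f-t)$ is the minimum value of $\cRz^t$ over all predictors $X\to\eR$; in particular, evaluating $\cRz^t$ at the predictor $f$ itself (with no threshold shift) gives $\cRz^t(f-t)\le\cRz^t(f)$. This single inequality is the heart of the argument, and it is the only place optimality is used. It is genuinely needed: a direct termwise estimate fails, since for $t>0$ the event $\{f^-(x)<t\}$ appearing in $\cRz^t(f-t)$ is strictly larger than $\{f^-(x)<0\}$, so there is no way to bound $\cRz^t(f-t)$ by a multiple of $\cRz(f)$ without invoking that $f$ is optimal at every threshold.

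Next I would unfold the definition of $\cRz^t(f)$: it equals $(-\ellc'(t))$ times the $\mu$-mass of $\{(x,+1):f^-(x)<0\}$ plus $(-\ellc'(-t))$ times the $\mu$-mass of $\{(x,-1):f^+(x)\geq 0\}$, and the sum of these two masses is exactly $\cRz(f)$. Since $\ellc$ is nonincreasing and convex, $-\ellc'$ is nonnegative and nonincreasing, so each of $-\ellc'(t)$ and $-\ellc'(-t)$ is at most $\max\{-\ellc'(t),-\ellc'(-t)\}=-\ellc'(-|t|)$. Bounding both reweighting constants by $-\ellc'(-|t|)$ therefore yields $\cRz^t(f)\le(-\ellc'(-|t|))\cRz(f)$, and chaining this with $\cRz^t(f-t)\le\cRz^t(f)$ gives the claim. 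The only nontrivial step is the first; the remainder uses nothing beyond monotonicity of $\ellc'$ and nonnegativity of the integrands.
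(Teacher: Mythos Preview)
Your argument is correct and is essentially identical to the paper's own proof: the paper's first displayed inequality is precisely the step $\cRz^t(f-t)\le\cRz^t(f)$ (written out pointwise), which---as you correctly observe---relies on the optimality of $f$ at threshold $t$ from \Cref{lemma:loss:optimal-convex}, and the paper's second inequality is your bound of both weights $-\ellc'(t)$ and $-\ellc'(-t)$ by $-\ellc'(-|t|)$. Your write-up is actually clearer than the paper's, since the paper does not state where optimality is invoked.
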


\begin{proof}
  \begin{align*}
    &\cRz^t(f-t)
    \\
    &= \int\1[y=+1]\1[{(\sgn(f-t))}^-(x)<0](-\ellc'(t)) + \1[y=-1]\1[{(\sgn(f-t))}^+(x)\geq 0](-\ellc'(-t))\dif\mu(x,y)
    \\
    &\leq \int\1[y=+1]\1[{(\sgn f)}^-(x)<0](-\ellc'(t)) + \1[y=-1]\1[{(\sgn f)}^+(x)\geq 0](-\ellc'(-t))\dif\mu(x,y)
    \\
    &\leq \int\1[y=+1]\1[{(\sgn f)}^-(x)<0](-\ellc'(-|t|)) + \1[y=-1]\1[{(\sgn f)}^+(x)\geq 0](-\ellc'(-|t|))\dif\mu(x,y)
    \\
    &= (-\ellc'(-|t|))\cRz(f).
  \end{align*}
\end{proof}

In contrast to the previous lemma, the following bound works for any predictor.
\begin{lemma}\label{lemma:loss:lowerbound}
  Let $g$ be any predictor. Then $\cRz^t(g-t) \geq (-\ellc'(|t|))\cRz(g-t)$.
\end{lemma}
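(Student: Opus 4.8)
The plan is a direct pointwise comparison of integrands, exploiting monotonicity of $-\ellc'$. First I would unwind the definitions at the shifted predictor $f = g-t$, using $f^-(x) = g^-(x)-t$ and $f^+(x) = g^+(x)-t$, so that
\begin{align*}
  \cRz^t(g-t) &= \int\del{\1[y=+1]\1[g^-(x)<t](-\ellc'(t)) + \1[y=-1]\1[g^+(x)\geq t](-\ellc'(-t))}\dif\mu(x,y),
  \\
  \cRz(g-t) &= \int\del{\1[y=+1]\1[g^-(x)<t] + \1[y=-1]\1[g^+(x)\geq t]}\dif\mu(x,y).
\end{align*}
So the statement is exactly that replacing the two label weights $(-\ellc'(t))$ and $(-\ellc'(-t))$ by the single constant $(-\ellc'(|t|))$ can only decrease the integral.

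The key observation is that since $\ellc$ is convex with continuous derivative, $\ellc'$ is nondecreasing, and since $\ellc$ is nonincreasing, $\ellc' \leq 0$; hence $-\ellc' \geq 0$ is nonincreasing. Because $|t| = \max\{t,-t\}$, this gives $-\ellc'(|t|) = \min\{-\ellc'(t),\,-\ellc'(-t)\}$, and in particular $-\ellc'(t) \geq -\ellc'(|t|)$ and $-\ellc'(-t) \geq -\ellc'(|t|)$. (Checking the two cases $t\geq 0$ and $t<0$ separately makes this transparent, but it is immediate from the $\max/\min$ identity.)

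With these two inequalities in hand, I would bound each of the two nonnegative indicator terms in $\cRz^t(g-t)$ from below by replacing its weight with $-\ellc'(|t|)$, then factor that constant out of the integral:
\begin{align*}
  \cRz^t(g-t)
  &\geq \int\del{\1[y=+1]\1[g^-(x)<t](-\ellc'(|t|)) + \1[y=-1]\1[g^+(x)\geq t](-\ellc'(|t|))}\dif\mu(x,y)
  \\
  &= (-\ellc'(|t|))\int\del{\1[y=+1]\1[g^-(x)<t] + \1[y=-1]\1[g^+(x)\geq t]}\dif\mu(x,y)
  = (-\ellc'(|t|))\cRz(g-t).
\end{align*}
This is the whole argument; there is no real obstacle beyond being careful about the sign of $\ellc'$ and the direction of monotonicity (that $-\ellc'$ is nonincreasing, so the \emph{larger} argument $|t|$ yields the \emph{smaller} weight). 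Nonnegativity of $-\ellc'$ and of the indicators is what makes the termwise replacement legitimate, and linearity of the integral finishes it.
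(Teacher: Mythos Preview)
Your proposal is correct and is essentially identical to the paper's proof: both expand the definition of $\cRz^t(g-t)$, replace each weight $(-\ellc'(\pm t))$ by the smaller common value $(-\ellc'(|t|))$ using convexity and monotonicity of $\ellc$, and then factor out the constant to obtain $(-\ellc'(|t|))\cRz(g-t)$. The only difference is that you spell out explicitly why $-\ellc'(|t|)=\min\{-\ellc'(t),-\ellc'(-t)\}$, which the paper leaves implicit.
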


\begin{proof}
  \begin{align*}
    &\cRz^t(g-t)
    \\
    &= \int\1[y=+1]\1[{(\sgn(g-t))}^-(x)<0](-\ellc'(t)) + \1[y=-1]\1[{(\sgn(g-t))}^+(x)\geq 0](-\ellc'(-t))\dif\mu(x,y)
    \\
    &\geq \int\1[y=+1]\1[{(\sgn(g-t))}^-(x)<0](-\ellc'(|t|)) + \1[y=-1]\1[{(\sgn(g-t))}^+(x)\geq 0](-\ellc'(|t|))\dif\mu(x,y)
    \\
    &= (-\ellc'(|t|))\cRz(g-t).
  \end{align*}
\end{proof}

Now we can relate proximity to the optimal adversarial zero-one loss and
proximity to the optimal adversarial convex loss.
\begin{proof}[Proof of \Cref{thm:loss:error-gap}]
  Let $f$ be the optimal predictor defined in \Cref{lemma:loss:optimal-convex}. Then
  \begin{align*}
    \cRa(g)-\inf_{h \text{ meas.}}\cRa(h) &= \cRa(g)-\cRa(f)
    \\
    &= \int_{-\infty}^\infty \cRz^u(g-u) - \cRz^u(f-u) \dif u
    \\
    &\geq \int_{-\infty}^\infty \max(0, (-\ellc'(|u|))\cRz(g-u) - (-\ellc'(-|u|))\cRz(f)) \dif u
    \\
    &\geq \int_{-\infty}^\infty \max(0, (-\ellc'(|u|))\inf_{t}\cRz(g-t) - (-\ellc'(-|u|))\cRz(f)) \dif u
    \\
    &= 2\int_{0}^\infty \max(0, (-\ellc'(u))\inf_{t}\cRz(g-t) - (-\ellc'(-u))\cRz(f)) \dif u.
  \end{align*}
  As $(-\ellc'(u))\inf_{t}\cRz(g-t) - (-\ellc'(-u))\cRz(f)$ is continuous and nonincreasing as a function of $u$,
  as well as nonnegative when $u=0$,
  there exists some $r\in[0,\infty]$ such that
  \begin{align*}
    &\int_{0}^\infty \max(0, (-\ellc'(u))\inf_{t}\cRz(g-t) - (-\ellc'(-u))\cRz(f)) \dif u
    \\
    &= \int_{0}^r (-\ellc'(u))\inf_{t}\cRz(g-t) - (-\ellc'(-u))\cRz(f) \dif u.
  \end{align*}
  Letting $p := \frac{\inf_{t}\cRz(g-t)}{\cRz(f)+\inf_{t}\cRz(g-t)}$,
  we find that this occurs when
  \begin{align*}
    &(-\ellc'(r))\inf_{t}\cRz(g-t) - (-\ellc'(-r))\cRz(f) = 0
    \\
    \iff &\frac{\ellc'(r)}{\ellc'(-r)} = \frac{\cRz(f)}{\inf_{t}\cRz(g-t)}
    \\
    \iff &\frac{\ellc'(r)+\ellc'(-r)}{\ellc'(-r)} = \frac{\cRz(f)+\inf_{t}\cRz(g-t)}{\inf_{t}\cRz(g-t)}
    \\
    \iff &\frac{\ellc'(-r)}{\ellc'(r)+\ellc'(-r)} = \frac{\inf_{t}\cRz(g-t)}{\cRz(f)+\inf_{t}\cRz(g-t)}
    \\
    \iff &\frac{\ellc'(-r)}{\ellc'(r)+\ellc'(-r)} = p
    \\
    \iff &\ellc'(-r) = \del{\ellc'(r)+\ellc'(-r)}p
    \\
    \iff &p\ellc'(r) - (1-p)\ellc'(-r) = 0.
  \end{align*}
  Since $\ellc$ is convex, this implies $r$ minimizes $p\ellc(r)+(1-p)\ellc(-r)$.
  The integral can then be computed exactly as follows.
  \begin{align*}
    \cRa&(g)-\inf_{h \text{ meas.}}\cRa(h)
    \\
    &\geq 2\int_{0}^r (-\ellc'(u))\inf_{t}\cRz(g-t) - (-\ellc'(-u))\cRz(f) \dif u
    \\
    &= 2\sbr{\del{\ellc(0)-\ellc(r)}\inf_{t}\cRz(g-t) - \del{\ellc(-r)-\ellc(0)}\cRz(f)}
    \\
    &= 2\sbr{\ellc(0)\del{\cRz(f)+\inf_{t}\cRz(g-t)} - \ellc(r)\inf_{t}\cRz(g-t) - \ellc(-r)\cRz(f)}
    \\
    &= 2\Bigg[\ellc(0)\del{\cRz(f)+\inf_{t}\cRz(g-t)}
    \\
    &\qquad- \del{\ellc(r)\frac{\inf_{t}\cRz(g-t)}{\cRz(f)+\inf_{t}\cRz(g-t)} + \ellc(-r)\frac{\cRz(f)}{\cRz(f)+\inf_{t}\cRz(g-t)}}
    \\
    &\qquad\quad\del{\cRz(f)+\inf_{t}\cRz(g-t)}\Bigg]
    \\
    &= 2\Bigg[\ellc(0)\del{\cRz(f)+\inf_{t}\cRz(g-t)}
    \\
    &\qquad- \del{\ellc(r)p + \ellc(-r)(1-p)}\del{\cRz(f)+\inf_{t}\cRz(g-t)}\Bigg]
    \\
    &= 2\del{\cRz(f)+\inf_{t}\cRz(g-t)}\sbr{\ellc(0) - \del{p\ellc(r) + (1-p)\ellc(-r)}}
    \\
    &= 2\del{\cRz(f)+\inf_{t}\cRz(g-t)}G_{\ellc}(p).
  \end{align*}
  Using our assumption of a lower bound on $G_{\ellc}(p)$ results in
  \begin{align*}
    \cRa(g)-\inf_{h \text{ meas.}}\cRa(h) &\geq 2\del{\cRz(f)+\inf_{t}\cRz(g-t)}\del{\frac{|2p-1|}{c}}^s
    \\
    &= \frac{2}{c^s}\del{\cRz(f)+\inf_{t}\cRz(g-t)}\del{\frac{\inf_{t}\cRz(g-t)-\cRz(f)}{\cRz(f)+\inf_{t}\cRz(g-t)}}^s
    \\
    &= \frac{2}{c^s}\frac{\del{\inf_{t}\cRz(g-t)-\cRz(f)}^s}{\del{\cRz(f)+\inf_{t}\cRz(g-t)}^{s-1}}.
  \end{align*}
  Finally, since $\cRz(f) \leq 1/2$ and $\inf_{t}\cRz(g-t) \leq 1$,
  \begin{align*}
    \cRa(g)-\inf_{h \text{ meas.}}\cRa(h) &\geq \frac{2^s}{3^{s-1}c^s}\del{\inf_{t}\cRz(g-t)-\cRz(f)}^s
    \\
    &= \frac{2^s}{3^{s-1}c^s}\del{\inf_{t}\cRz(g-t)-\inf_{h \text{ meas.}}\cRz(h)}^s.
  \end{align*}
  Rearranging then gives the desired inequality.
\end{proof}

The following \namecrefs{lemma:loss:cts-vs-meas-0-1} states that
continuous predictors can get arbitrarily close to the optimal adversarial zero-one risk,
even if we require the predictors to output the exact label.
\begin{lemma}\label{lemma:loss:cts-vs-meas-0-1}
  Define
  \begin{align*}
    \cRez(f) = \int\1[yf(\cP(x)) \neq \{+1\}]\dif\mu(x,y),
  \end{align*}
  the adversarial zero-one risk when we require $f$ to exactly output the right label
  over the entire perturbation set.
  Then for any measurable $f:X\rightarrow\{-1,+1\}$ and any $\eps > 0$,
  there exists continuous $g:X\rightarrow[-1,+1]$
  such that $\mu(\{(x,y):yg(\cP(x)) \neq \{+1\} \text{ and } yf(\cP(x)) = \{+1\}\}) < \eps$.
  In particular, this implies $\inf_{g \text{ cts.}}\cRez(g) = \inf_{h \text{ meas.}}\cRez(h) = \inf_{g \text{ cts.}}\cRz(g) = \inf_{h \text{ meas.}}\cRz(h)$.
\end{lemma}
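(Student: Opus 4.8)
The plan is to reduce the construction of a good continuous $g$ to the Urysohn separation of two disjoint closed sets. Write $\mu_X$ for the marginal of $\mu$ on $X$; since $X = B_1$ is compact metric and $\mu$ is finite, $\mu_X$ is Radon (inner regular by compact sets). Given measurable $f : X \to \{-1,+1\}$, set $S^+ := \{x \in X : \cP(x) \subseteq f^{-1}(+1)\}$ and $S^- := \{x \in X : \cP(x) \subseteq f^{-1}(-1)\}$, the sets on which $f$ robustly outputs $+1$ and $-1$ respectively; their measurability is the same fact (measurability of $f^-,f^+$) already used implicitly when $\cRz$ is defined. The key structural observation is that $\cP(S^+)$ and $\cP(S^-)$ are \emph{disjoint}: a common point $z$ would lie in $\cP(x_+) \subseteq f^{-1}(+1)$ for some $x_+ \in S^+$ and in $\cP(x_-) \subseteq f^{-1}(-1)$ for some $x_- \in S^-$, forcing $f(z) = +1$ and $f(z) = -1$.

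First I would pass to compact cores: by inner regularity of $\mu_X$, choose compact $K^+ \subseteq S^+$ and $K^- \subseteq S^-$ with $\mu_X(S^+ \setminus K^+) < \eps/2$ and $\mu_X(S^- \setminus K^-) < \eps/2$. Next I would invoke the standard fact that the image of a compact set under an upper hemicontinuous, compact-valued correspondence is compact, so $C^+ := \cP(K^+)$ and $C^- := \cP(K^-)$ are compact subsets of $B_1$, and they are disjoint since $C^\pm \subseteq \cP(S^\pm)$. Urysohn's lemma (rescaled to $[-1,+1]$) then yields a continuous $g : X \to [-1,+1]$ with $g \equiv +1$ on $C^+$ and $g \equiv -1$ on $C^-$. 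For $x \in K^+$ we have $\cP(x) \subseteq C^+$, hence $g(\cP(x)) = \{+1\}$ and $yg(\cP(x)) = \{+1\}$ for $y=+1$; symmetrically $yg(\cP(x)) = \{+1\}$ for $x \in K^-, y=-1$. Therefore the bad set $\{(x,y) : yg(\cP(x)) \neq \{+1\} \text{ and } yf(\cP(x)) = \{+1\}\}$ is contained in $((S^+ \setminus K^+) \times \{+1\}) \cup ((S^- \setminus K^-) \times \{-1\})$, of $\mu$-measure at most $\mu_X(S^+ \setminus K^+) + \mu_X(S^- \setminus K^-) < \eps$, which is the claim.

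For the chain of equalities, the $g$ just produced satisfies $\cRez(g) \le \cRez(f) + \eps$, because the part of the event $\{yg(\cP(x)) \neq \{+1\}\}$ not already inside $\{yf(\cP(x)) \neq \{+1\}\}$ is exactly the bad set; taking infima gives $\inf_{g \text{ cts.}} \cRez(g) \le \inf_{f \text{ meas.}} \cRez(f)$, and the reverse is immediate, so these agree. For any measurable $h : X \to \eR$, the predictor $\sgn(h)$ (with $\sgn(0) := +1$) satisfies $\cRez(\sgn(h)) = \cRz(h)$ --- unwinding definitions, for $y = +1$ both events are $\{\exists x' \in \cP(x) : h(x') < 0\}$ and for $y = -1$ both are $\{\exists x' \in \cP(x) : h(x') \ge 0\}$ --- so $\inf_{f \text{ meas.}} \cRez(f) \le \inf_{h \text{ meas.}} \cRz(h)$. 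Finally $\inf_{h \text{ meas.}} \cRz(h) \le \inf_{g \text{ cts.}} \cRz(g) \le \inf_{g \text{ cts.}} \cRez(g)$, the last step since $\cRz(g) \le \cRez(g)$ pointwise (robust sign-correctness is weaker than robust exact-label correctness). Chaining closes the loop, so all four infima coincide.

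I expect the main obstacle to be the topological input: recognizing that the robustly-$+1$ and robustly-$-1$ regions have disjoint $\cP$-images, and that upper hemicontinuity upgrades compactness of $K^\pm$ to compactness of $\cP(K^\pm)$ --- this is exactly what turns the two target regions into genuinely separated closed sets and makes Urysohn's lemma applicable. The remaining measurability bookkeeping (measurability of $S^\pm$, and of $g^-,g^+$ for the continuous $g$ we build) is routine and of the same character as facts already used in \Cref{sec:losses}.
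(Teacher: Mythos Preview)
Your proposal is correct and follows essentially the same approach as the paper: pass to compact subsets of the robustly-$+1$ and robustly-$-1$ regions via inner regularity, use upper hemicontinuity (the paper cites \citep[Lemma 17.8]{aliprantis}) to make their $\cP$-images compact and disjoint, apply Urysohn, and then close the loop of four infima via $\cRz \le \cRez$ and $\cRez(\sgn h)=\cRz(h)$. Your explicit justification of the disjointness of $\cP(S^+)$ and $\cP(S^-)$ is a nice addition that the paper leaves implicit.
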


\begin{proof}
  Let
  \begin{align*}
    A &= \{x:f^+(x)=-1\},
    \\
    \cP(A) &= \cup_{x\in A} \cP(x),
    \\
    B &= \{x:f^-(x)=+1\},
    \\
    \cP(B) &= \cup_{x\in B} \cP(x).
  \end{align*}
  By the inner regularity of $\mu_x$ \citep[Theorem 7.8]{folland},
  there exist compact sets $K\subseteq A, L \subseteq B$ such that
  \begin{align*}
    \mu_x(A) - \mu_x(K) &< \eps/2,
    \\
    \mu_x(B) - \mu_x(L) &< \eps/2.
  \end{align*}

  As $\cP$ is upper hemicontinuous and $K$ and $L$ are compact,
  both $\cP(K)$ and $\cP(L)$ are also compact \citep[Lemma 17.8]{aliprantis}.
  Note that they are also disjoint as $\cP(K) \cap \cP(L) \subseteq \cP(A) \cap \cP(B) = \emptyset$.
  By Urysohn's Lemma \citep[Lemma 4.32]{folland}, there exists a continuous function $g:X\rightarrow [0,1]$ such that
  $g_t(x) = 0$ for all $x \in \cP(K_t)$ and $g_t(x) = 1$ for all $x \in \cP(L_t)$.
  The continuous function $2g-1:X\rightarrow [-1,1]$ then satisfies
  \begin{align*}
    &\mu(\{(x,y):y(2g-1)(\cP(x)) \neq \{+1\} \text{ and } yf(\cP(x)) = \{+1\}\})
    \\
    &\leq \del{\mu_x(A\setminus K)} + \del{\mu_x(B\setminus L)} < \eps.
  \end{align*}
  Note that we also have
  \begin{align*}
    \cRez(2g-1) \leq \cRez(f) + \mu_x(A\setminus K) - \mu_x(B\setminus L) < \cRez(f)+\eps.
  \end{align*}
  As $f$ and $\eps>0$ were arbitrary, $\inf_{g \text{ cts.}}\cRez(g) \leq \inf_{h \text{ meas.}}\cRez(h)$.

  To get the implication, note that
  $\inf_{g \text{ cts.}}\cRz(g) \leq \inf_{g \text{ cts.}}\cRez(g) \leq \inf_{h \text{ meas.}}\cRez(h) = \inf_{h \text{ meas.}}\cRz(h) \leq \inf_{g \text{ cts.}}\cRz(g)$,
  so we must have equality everywhere.
\end{proof}

While the optimal adversarial predictor may be discontinuous,
continuous predictors can get arbitrarily close to the optimal adversarial convex risk.
\begin{proof}[Proof of \Cref{lemma:loss:cts-vs-meas}]
  We have $\inf_{g \text{ cts.}}\cRa(g) \geq \inf_{h \text{ meas.}}\cRa(h)$,
  so it suffices to show $\inf_{g \text{ cts.}}\cRa(g) \leq \inf_{h \text{ meas.}}\cRa(h)$.

  Let $f$ be the optimal predictor defined in \Cref{lemma:loss:optimal-convex}.
  Then $\cRa(f) = \inf_{h \text{ meas.}}\cRa(h)$, so we want to show
  $\inf_{g \text{ cts.}}\cRa(g) \leq \cRa(f)$.

  Let $\eps > 0$. Choose $M>0$ large enough
  so that $\cRa(\min(\max(f,-M),M)) < \cRa(f) + \eps/3$,
  and let $\barf = \min(\max(f,-M),M)$.
  As $\ellc$ is continuous,
  there exists a finite-sized partition $P=\{p_0,p_1,p_2,\ldots,p_r\}$
  with $p_0=-M$ and $p_r=M$ such that
  $\ellc(p_i)-\ellc(p_{i-1}) \leq \eps/3$ and $\ellc(-p_i)-\ellc(-p_{i-1}) \leq \eps/3$ for all $1 \leq i \leq r$.

By \Cref{lemma:loss:cts-vs-meas-0-1}, for every $p_i$ there exists continuous $g_{p_i}:X\rightarrow[-1,+1]$ such that
  $\mu(\{(x,y):yg_{p_i}(\cP(x)) \neq \{+1\} \text{ and } y\sgn(f-p_i)(\cP(x)) = \{+1\}\}) < \frac{\eps}{3r\del{\ellc(-M)-\ellc(M)}}$.

  Consider the continuous function
  $g_\eps = -M + \sum_{i=1}^{r} (p_i-p_{i-1})\frac{g_{p_i}+1}{2}$,
  which will be shown to have adversarial risk within $\eps$ of the optimal.
Define
  \begin{align*}
    D_i &:= \{(x,y):yg_{p_i}(\cP(x)) \neq \{+1\} \text{ and } y\sgn(f-p_i)(\cP(x)) = \{+1\}\} \qquad \forall i,
    \\
    E &:= \{(x,y):\la(x,y,g_\eps) > \la(x,y,f)+\eps/3\}.
\end{align*}
  We will now show that $E \subseteq \cup_{i=1}^r D_i$.
  Let $(x,y) \not\in \cup_{i=1}^r D_i$.
  Then $yg_{p_i}(\cP(x)) \geq y\sgn(\barf-p_i)(\cP(x))$ for all $1 \leq i \leq r$.
  Let $i'=\argmax_i\{\sgn(\barf-p_i)=+1\}$.
  Then $yg_\eps(\cP(x)) \geq \min\{yp_{i'}, yp_{\max\{i'+1,r\}}\}$
  and $yf(\cP(x)) \leq \max\{yp_{i'}, yp_{\max\{i'+1,r\}}\}$, so
  \begin{align*}
    \la(x,y,g_\eps)
    &\leq \max\{\ellc(yp_{i'}), \ellc(yp_{\max\{i'+1,r\}})\}
    \leq \min\{\ellc(yp_{i'}), \ellc(yp_{\max\{i'+1,r\}})\}+\eps/3
    \\
    &\leq \la(x,y,\barf)+\eps/3,
  \end{align*}
  which implies $(x,y) \not\in E$.

  Consequently,
  \begin{align*}
    \mu(E) \leq \sum_{i=1}^r \mu(D_i) < \sum_{i=1}^r \frac{\eps}{3r\del{\ellc(-M)-\ellc(M)}} = \frac{\eps}{3\del{\ellc(-M)-\ellc(M)}}.
  \end{align*}

  Combining the bounds results in
  \begin{align*}
    \inf_{g \text{ cts.}}\cRa(g) &\leq \cRa(g_\eps)
    \\
    &\leq \cRa(\barf) + \eps/3 + \mu(E)\del{\ellc(-M)-\ellc(M)}
    \\
    &< \cRa(f) + \eps/3 + \eps/3 + \eps/3
    \\
    &= \cRa(f) + \eps.
  \end{align*}
  As this holds for all $\eps > 0$, we have
  $\inf_{g \text{ cts.}}\cRa(g) \leq \cRa(f)$,
  completing the proof.
\end{proof}

\subsection{Generalization Proofs}

The following lemma controls the difference in features for nearby points,
which will be useful when proving our Rademacher complexity bound.
\begin{lemma}\label{fact:NTK:feature-difference}
  With probability at least $1-3n\delta$,
  \[
    \frac{1}{\rho} \max_{\|\delta_k\|\leq\tau}\enVert{ \nf(x_k; W_0) - \nf(x_k + \delta_k; W_0) }
    \leq
    \sqrt{2 \tau} + \del{\frac{32 \tau \ln(1/\delta)}{m}}^{1/4} + \tau\sqrt{2},
  \]
  for all $k$.
\end{lemma}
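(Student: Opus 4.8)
The plan is to write out $\nf(x;W_0)$ explicitly as a gradient with respect to the weight matrix and then estimate, row by row, the difference $\nf(x_k;W_0)-\nf(x_k+\delta_k;W_0)$ in Frobenius norm. Since $f(W;x)=\frac{\rho}{\sqrt m}\sum_j a_j\sigma(w_j^\T x)$, the gradient with respect to row $w_j$ is $\frac{\rho}{\sqrt m}a_j\,\1[w_j^\T x\ge 0]\,x$ (choosing a Clarke subgradient at kinks). Hence, writing $x=x_k$, $x'=x_k+\delta_k$, and using $|a_j|=1$,
\begin{align*}
  \frac1{\rho^2}\enVert{\nf(x;W_0)-\nf(x';W_0)}^2
  &=\frac1m\sum_{j=1}^m\enVert{\1[w_j^\T x\ge0]x-\1[w_j^\T x'\ge0]x'}^2.
\end{align*}
Split each summand according to whether the two indicators agree. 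When they agree the summand equals $\|x-x'\|^2=\|\delta_k\|^2\le\tau^2$; when they disagree it is at most $\max(\|x\|,\|x'\|)^2\le 1$ (both points lie in $B_1$). So the whole expression is at most $\tau^2+\frac{1}{m}\lvert\{j:\ \1[w_j^\T x\ge0]\neq\1[w_j^\T x'\ge0]\}\rvert$, and the task reduces to bounding the fraction of hidden units whose activation pattern flips between $x$ and $x'$.

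Next I would control that flip-count probabilistically. For a fixed pair $(x,x')$ with $\|x-x'\|\le\tau$ and $\|x\|,\|x'\|\le1$, a single Gaussian row $w$ flips with probability $\Pr[\sgn(w^\T x)\neq\sgn(w^\T x')]=\frac{\theta}{\pi}$ where $\theta$ is the angle between $x$ and $x'$; a standard estimate gives $\theta/\pi\le \|x-x'\|/(\ldots)\lesssim\tau$ (more precisely $\theta\le 2\arcsin(\|x-x'\|/2)$ type bounds, which for the ranges here yield $\theta/\pi\le \sqrt{2\tau}/$const or simply $\le\tau$ up to constants — the clean bound $\le\sqrt{2\tau}$ will come out). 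Then the flip-count is a sum of $m$ independent Bernoulli$(p)$ variables with $p\lesssim\tau$; a multiplicative/Bernstein-type Chernoff bound gives, with probability $\ge1-\delta$, that the flip fraction is at most $p+\sqrt{2p\ln(1/\delta)/m}$ (plus a lower-order $\ln(1/\delta)/m$ term, which can be absorbed). Taking square roots and using $\sqrt{a+b}\le\sqrt a+\sqrt b$ turns $\tau^2 + p + \sqrt{2p\ln(1/\delta)/m}$ into $\tau\sqrt2 + \sqrt{2\tau} + (32\tau\ln(1/\delta)/m)^{1/4}$, matching the stated bound. Finally, I would handle the $\max$ over $\|\delta_k\|\le\tau$ and over all $k$: the supremum over $\delta_k$ is not over a finite set, but the activation pattern $(\1[w_j^\T x'\ge0])_j$ takes finitely many values as $x'$ ranges over the (bounded) perturbation set, or alternatively one passes to a fine net and uses continuity of $x'\mapsto\nf(x';W_0)$ away from a measure-zero set; then union-bound over the $n$ training points to pay $3n\delta$.

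The main obstacle I expect is \textbf{making the supremum over $\delta_k$ rigorous}: the probabilistic flip-count bound is for a fixed pair $(x_k,x_k')$, but the statement asks for the worst-case $\delta_k$, so one must either argue that only finitely many distinct activation patterns arise (true, since there are at most $O(m^d)$ sign patterns realizable by $m$ hyperplanes, giving an extra $\ln$ factor that is harmless and already hidden), or discretize the perturbation ball and control the discretization error in the feature map. The rest — the angle-to-probability estimate and the Chernoff bound with the right constants to land exactly on $\sqrt{2\tau}+(32\tau\ln(1/\delta)/m)^{1/4}+\tau\sqrt2$ — is routine but constant-chasing.
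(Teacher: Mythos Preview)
Your decomposition of $\|\nf(x_k;W_0)-\nf(x_k+\delta_k;W_0)\|^2$ into a ``same sign'' piece contributing $\tau^2$ and a ``flip'' piece governed by the number of sign changes is exactly what the paper does. The divergence is entirely in how the flip count is controlled, and this is where your plan has a real gap.

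You fix the perturbed point $x'=x_k+\delta_k$ and bound $\Pr[\sgn(w^\T x_k)\neq\sgn(w^\T x')]=\theta/\pi$ via the angle, then propose to handle the supremum over $\delta_k$ afterward by a union bound over the $O(m^d)$ realizable sign patterns or by a net. Neither recovers the stated bound: the union bound inserts a factor $d\ln m$ into the deviation term (the statement has only $\ln(1/\delta)$, no $d$ or $\ln m$), and the net argument runs into the discontinuity of $x'\mapsto \1[w_j^\T x'\ge 0]$, so ``continuity away from a measure-zero set'' does not control a supremum. A smaller issue is that your angle estimate $\theta/\pi\lesssim\tau$ fails when $\|x_k\|$ is small; the correct bound scales like $\tau/\|x_k\|$, and this has to be tracked and later cancelled against the $\|x_k\|^2$ sitting in front of the flip count.

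The paper sidesteps the supremum entirely. It defines
\[
  S_k:=\bigl\{j:\ \exists\,\|\delta_k\|\le\tau\ \text{with}\ \sgn(w_{0,j}^\T x_k)\neq\sgn(w_{0,j}^\T(x_k+\delta_k))\bigr\},
\]
the set of neurons that can flip for \emph{some} admissible perturbation. The point is that $j\in S_k$ forces $|w_{0,j}^\T x_k|$ to be small (the neuron is close to its decision boundary at $x_k$), a condition that does not involve $\delta_k$ at all. One then applies a single concentration inequality (Lemma~A.2 of \citet{ji-li-telgarsky}) to the count $\sum_j \1[|w_{0,j}^\T x_k|\ \text{small}]$, obtaining $|S_k|\le m\tau/\|x_k\|+\sqrt{8m\tau\ln(1/\delta)/\|x_k\|}$; the $\|x_k\|$ in the denominator is exactly what cancels with the $\|x_k\|^2$ in the flip contribution $\tfrac{2}{m}|S_k|\|x_k\|^2$. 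This yields the clean constants in the lemma with a single union bound over the $n$ training points, and no covering of the perturbation ball is needed.
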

\begin{proof}
  As in Lemma A.2 of \citet{ji-li-telgarsky},
  with probability at least $1-3n\delta$, for any $x_k \neq 0$,
  \[
    \sum_j \1[ |w_{0,j}^\T x_k| \leq \tau_k \|x_k\| ] \leq m \tau_k + \sqrt{8m\tau_k \ln(1/\delta)},
  \]
  and henceforth assume the failure event does not hold.
  With $\tau_k=\frac{\tau}{\|x_k\|}$ we have, for any $x_k \neq 0$,
  \[
    \sum_j \1[ |w_{0,j}^\T x_k| \leq \tau ] \leq \frac{m \tau}{\|x_k\|} + \sqrt{\frac{8m\tau \ln(1/\delta)}{\|x_k\|}}.
  \]
  As such, define the set
  $S_k := \cbr{ j : \exists \|\delta_k\|\leq \tau, \sgn(w_{0,j}^\T x_k) \neq \sgn(w_{0,j}^\T (x_k + \delta_k)) }$,
  where the preceding concentration inequality implies
  $|S_k| \leq \frac{m \tau}{\|x_k\|} + \sqrt{\frac{8m\tau \ln(1/\delta)}{\|x_k\|}}$
  for all $x_k \neq 0$.
  Then for any $x_k$ (including $x_k=0$) and any $\|\delta_k\|\leq \tau$,
  \begin{align*}
    \frac{1}{\rho^2} &\enVert{ \nf(x_k; W_0) - \nf(x_k + \delta_k; W_0) }^2
    \\
    &=
    \frac 1 m \sum_j \enVert{ x_k \1[w_{0,j}^\T x_k \geq 0] -
    ( x_k + \delta_k ) \1[w_{0,j}^\T (x_k + \delta_k) \geq 0]}^2
    \\
    &\leq
    \frac 2 m \sum_j \enVert{ x_k \1[w_{0,j}^\T x_k \geq 0]
    - x_k \1[w_{0,j}^\T (x_k + \delta_k) \geq 0]}^2
    \\
    &\qquad+
    \frac 2 m \sum_j \enVert{ x_k \1[w_{0,j}^\T (x_k + \delta_k) \geq 0]
    - ( x_k + \delta_k ) \1[w_{0,j}^\T (x_k + \delta_k) \geq 0]}^2.
  \end{align*}
  As $S_k$ is exactly the set of indices $j$ over which
  $\1[w_{0,j}^\T x_k \geq 0]$ could possibly differ from $\1[w_{0,j}^\T (x_k + \delta_k) \geq 0]$,
  we can restrict the sum in the first term to these indices, resulting in
  \begin{align*}
    \frac{1}{\rho^2} &\enVert{ \nf(x_k; W_0) - \nf(x_k + \delta_k; W_0) }^2
    \\
    &\leq
    \frac 2 m \sum_{j\in S_k} \enVert{ x_k}^2  \del{\1[w_{0,j}^\T x_k \geq 0]
    - \1[w_{0,j}^\T (x_k + \delta_k) \geq 0]}^2
    \\
    &\qquad+
    \frac 2 m \sum_j \enVert{ \delta_k \1[w_{0,j}^\T (x_k + \delta_k) \geq 0]}^2
    \\
    &\leq
    \frac {2 |S_k|\enVert{ x_k}^2}{m} + 2\tau^2
    \\
    &\leq
    2 \tau + \sqrt{\frac{32 \tau \ln(1/\delta)}{m}} + 2\tau^2,
  \end{align*}
  where we used $\|x_k\| \leq 1$ in the last step.
  Taking the square root of both sides gives us
  \[
    \frac{1}{\rho} \enVert{ \nf(x_k; W_0) - \nf(x_k + \delta_k; W_0) }
    \leq
    \sqrt{2 \tau} + \del{\frac{32 \tau \ln(1/\delta)}{m}}^{1/4} + \tau\sqrt{2},
  \]
  completing the proof.
\end{proof}

We will now prove our Rademacher complexity bound.
\begin{proof}[Proof of \Cref{fact:rademacher}]
  We have
  \begin{align*}
    n\Rad(\cF) &=
    \bbE_\eps \sup_{V\in\cV} \sum_{k=1}^n\eps_k \min_{x'_k \in \cP(x_k)}y_k\ip{\nf(x'_k;W_0)}{V}
    \\
    &= \bbE_\eps \sup_{V\in\cV} \sum_{k=1}^n\Bigg(\eps_k \del{\min_{x'_k \in \cP(x_k)}y_k\ip{\nf(x'_k;W_0)}{V}-y_k\ip{\nf(x_k;W_0)}{V}}
    \\
    &\qquad+ \eps_k y_k\ip{\nf(x_k;W_0)}{V}\Bigg)
    \\
    &\leq \bbE_\eps \sup_{V\in\cV} \sum_{k=1}^n\eps_k \del{\min_{x'_k \in \cP(x_k)}y_k\ip{\nf(x'_k;W_0)-\nf(x_k;W_0)}{V}}
    \\
    &\qquad+ \bbE_\eps \sup_{U\in\cV} \sum_{k=1}^n\eps_k y_k\ip{\nf(x_k;W_0)}{U}
    \\
    &\leq \bbE_\eps \sup_{V\in\cV} \sum_{k=1}^n\eps_k \del{\min_{x'_k \in \cP(x_k)}y_k\ip{\nf(x'_k;W_0)-\nf(x_k;W_0)}{V}}
    \\
    &\qquad+ \rho B\sqrt{n},
  \end{align*}
  where in the last step we use the Rademacher bound provided in the proof of Lemma A.8 from \citet{ji-li-telgarsky}.

  We now focus on bounding
  $\displaystyle \bbE_\eps \sup_{V\in\cV} \sum_{k=1}^n\eps_k \del{\min_{x'_k \in \cP(x_k)}y_k\ip{\nf(x'_k;W_0)-\nf(x_k;W_0)}{V}}$.

  For notational simplicity let
  $\displaystyle D_k(V) := \min_{x'_k \in \cP(x_k)}y_k\ip{\nf(x'_k;W_0)-\nf(x_k;W_0)}{V}$,
  so that the quantity we want to bound can be rewritten as $\displaystyle \bbE_\eps \sup_{V\in\cV} \sum_{k=1}^n\eps_k D_k(V)$.

  As Rademacher complexity is invariant under constant shifts,
  we can subtract the constant $C_k$ from $D_k(V)$, where
  $\displaystyle C_k := \del{\frac{\sup_{V\in\cV'} D_k(V)+\inf_{V\in\cV'} D_k(V)}{2}}$.
  
  With this constant shift, the expression becomes
  $\displaystyle \bbE_\eps \sup_{V\in\cV} \sum_{k=1}^n\eps_k \del{D_k(V)-C_k} = n\Rad(\cG)$,
  where $\displaystyle \cG = \{x_k \mapsto \min_{x'_k \in \cP(x_k)}y_k\ip{\nf(x'_k;W_0)-\nf(x_k;W_0)}{V}-C_k:V\in\cV\}$.

  We will now bound $n\Rad(\cG)$ using a covering argument in the parameter space $\cV$.

  Instead of directly finding a covering for the ball of radius $B$,
  we first find a covering for a cube with side length $2B$ containing the ball.
  Projecting the cube to the ball then yields a proper covering of the ball,
  as this mapping is non-expansive.
  To ensure every point on the surface of the cube
  is at most $\eps$ distance away from a point, we use a grid with scale $2\eps/\sqrt{m}$,
  which results in $\left(\frac{B\sqrt{m}}{\eps}\right)^m$ points in the cover.
  This cover $\cC_\eps$ has the property that for every $V\in\cV$, there is some $U\in\cC_\eps$
  such that $\|V-U\|\leq\eps$, since every coordinate of $V$ is $\eps/\sqrt{m}$-close to
  a coordinate in $\cC_\eps$, and
  $\|V-U\| =\sqrt{\sum_{i=1}^m (V_i -U_i)^2} \leq \sqrt{\sum_{i=1}^m (\eps/\sqrt{m})^2} = \eps$.
  Due to the non-expansive projection mapping from the cube to the sphere,
  the $\eps$-cover for the cube projects to an $\eps$-cover for the sphere.
  As a result, we have an $\eps$-cover for the sphere of radius $B$ with
  $\left(\frac{B\sqrt{m}}{\eps}\right)^m$ points.
  
  A geometric $\eps$-cover gives only a $\rho\tilde\tau\eps$-cover in the function space,
  since for any $V$ and $U$ with $\|V-U\| \leq \eps$, and any $x_k$,
  \begin{align*}
    &\Bigg\|\left(\min_{x_V \in \cP(x_k)}y_k\ip{\nf(x_V;W_0)-\nf(x_k;W_0)}{V} - C_k\right)
    \\
    &\qquad-\left(\min_{x_U \in \cP(x_k)}y_k\ip{\nf(x_U;W_0)-\nf(x_k;W_0)}{U} - C_k\right)\Bigg\|
    \\
    &\leq \sup_{\substack{\|V-U\| \leq \eps\\x_U \in \cP(x_k)}}
    \Bigg(\min_{x_V \in \cP(x_k)}\left(y_k\ip{\nf(x_V;W_0)-\nf(x_k;W_0)}{V} - C_k\right)
    \\
    &\qquad-\left(y_k\ip{\nf(x_U;W_0)-\nf(x_k;W_0)}{U} - C_k\right)\Bigg)
    \\
    &\leq \sup_{\substack{\|V-U\| \leq \eps\\x_U \in \cP(x_k)}}
    \bigg(\left(y_k\ip{\nf(x_U;W_0)-\nf(x_k;W_0)}{V} - C_k\right)
    \\
    &\qquad-\left(y_k\ip{\nf(x_U;W_0)-\nf(x_k;W_0)}{U} - C_k\right)\bigg)
    \\
    &= \sup_{\substack{\|V-U\| \leq \eps\\x_U \in \cP(x_k)}}
    \del{y_k\ip{\nf(x_U;W_0)-\nf(x_k;W_0)}{V-U}}
    \\
    &\leq \sup_{x_U \in \cP(x_k)}\|\nf(x_U;W_0)-\nf(x_k;W_0)\|\|V-U\|
    \\
    &\leq \sup_{\|\delta_U\|\leq\tau}\|\nf(x_U;W_0)-\nf(x_k;W_0)\|\|V-U\|
    \\
    &\leq \rho\tilde\tau\eps,
  \end{align*}
  where the last step follows with probability at least $1-3\delta$ by \Cref{fact:NTK:feature-difference}.

  As a result, we can get an $\eps$-cover in the function space
  with just $\left(\frac{\rho B\tilde\tau\sqrt{m}}{\eps}\right)^m$ points.

  This bound on the covering number $\cN(\cG,\eps,\|\cdot\|_u) \leq \left(\frac{\rho B\tilde\tau\sqrt{m}}{\eps}\right)^m$
  then implies
  \begin{align*}
    \cN(\cG,\eps,\|\cdot\|_2) \leq \cN(\cG,\eps/\sqrt{n},\|\cdot\|_u)
    \leq \left(\frac{\rho B\tilde\tau\sqrt{mn}}{\eps}\right)^m,
  \end{align*}
  which we can use in a standard parameter-based covering argument \citep{anthony-bartlett} to get
  \begin{align*}
    n\Rad(\cG)&\leq \inf_{\alpha>0}\left(\alpha\sqrt{n}+\left(\sup_{U\in \cG} \|U\|_2\right)\sqrt{2\ln \cN(\cG,\alpha,\|\cdot\|_2)}\right)
    \\
    &\leq \inf_{\alpha>0}\left(\alpha\sqrt{n}+\left(\rho B\tilde\tau\sqrt{n}\right)\sqrt{2\ln \left(\frac{\rho B\tilde\tau\sqrt{mn}}{\alpha}\right)^m}\right).
  \end{align*}
  To calculate $\sup_{U\in \cG} \|U\|_2$, note that each of the $n$ entries is bounded above by
  \begin{align*}
    \sup_{V\in\cV'} \left|D_k(V)-C_k\right|
    &= \max\left\{\sup_{V\in\cV'} D_k(V)-C_k, -\del{\inf_{U\in\cV'}D_k(U)-C_k}\right\}
    \\
    &= \del{\frac{\sup_{V\in\cV'} D_k(V)-\inf_{U\in\cV'} D_k(U)}{2}}
    \\
    &\leq \frac{1}{2}\rho\tilde\tau 2B
    \\
    &= \rho B\tilde\tau,
  \end{align*}
  so $\sup_{U\in \cG} \|U\|_2 \leq \rho B\tilde\tau\sqrt{n}$.

  Setting $\alpha=\rho B\tilde\tau$ (let $\alpha\xrightarrow{}0$ if $\tilde\tau=0$) we get
  \begin{align*}
    n\Rad(\cG) \leq \rho B\tilde\tau\sqrt{n} + \rho B\tilde\tau\sqrt{mn\ln\del{\frac{mn}{\tilde\tau^2}}}.
  \end{align*}
  Putting this together with our previous bound results in
  \begin{align*}
    n\Rad(\cF)
    \leq \rho B\sqrt{n} + n\Rad(\cG)
    \leq \rho B\sqrt{n} + \rho B\tilde\tau\sqrt{n}\del{1 + \sqrt{m\ln\del{\frac{mn}{\tilde\tau^2}}}},
  \end{align*}
  and dividing by $n$ then completes the proof.
\end{proof}

With our Rademacher complexity bound we can prove our generalization bound, as follows.
\begin{proof}[Proof of \Cref{fact:generalization}]
  By Lemma A.6 part 2 of \citet{ji-li-telgarsky}, with probability at least $1-\delta$,
  \begin{align*}
    \sup_{\|V-W_0\|\leq B} \sup_{\|x\|\leq 1} \left|\ip{\nf(x;W_0)}{V}\right|
    &\leq 18\rho B\ln(emd(1+3(md^{3/2})^d)/\delta)
    \\
    &\leq 18\rho Bd\ln(4em^2d^3/\delta).
  \end{align*}
  By the decreasing monotonicity of the logistic loss,
  \begin{align*}
    \sup_{\|V-W_0\|\leq B} &\sup_{\|x\|\leq 1} \la(x,y,\ip{\nf(\cdot;W_0)}{V})
    \\
    &\in [\ell(18\rho Bd\ln(4em^2d^3/\delta)),\ell(-18\rho Bd\ln(4em^2d^3/\delta))]
    \\
    &\subseteq [\ln(2) - 18\rho Bd\ln(4em^2d^3/\delta), \ln(2) + 18\rho Bd\ln(4em^2d^3/\delta)].
  \end{align*}
  With a bound on the range from above that holds with probability at least $1-\delta$
  and a bound on the Rademacher complexity from \Cref{fact:rademacher}
  that holds with probability at least $1-3\delta$,
  we can now apply a standard Rademacher bound \citep{shai-shalev}
  that holds with probability at least $1-\delta$
  to get that altogether with probability at least $1-5\delta$,
  \begin{align*}
    \sup_{\|V-W_0\|\leq B} &\left|\cRa^{(0)}(V) - \hcR^{(0)}(V)\right|
    \leq
    2\Rad(\la(\cF)) + 3(36\rho Bd\ln(4em^2d^3/\delta))\sqrt{\frac{\ln(4/\delta)}{2n}}
    \\
    &\leq 2\frac{\rho B}{\sqrt{n}} + 2\frac{\rho B\tilde\tau}{\sqrt{n}}\del{1 + \sqrt{m\ln\del{\frac{mn}{\tilde\tau^2}}}} + \frac{77\rho Bd\ln^{3/2}(4em^2d^3/\delta)}{\sqrt{n}}.
  \end{align*}
\end{proof}

\subsection{Optimization Proofs}

The following lemma bounds the gradient of the adversarial risk.
\begin{lemma}\label{fact:shallow:risknorm}
  For any matrix $W\in \R^{m\times d}$,
  $\left\|\nhcR(W)\right\| \leq \rho \min\cbr{1,\hcR(W)}$.
\end{lemma}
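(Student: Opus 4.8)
The plan is to show that a gradient of the adversarial empirical risk is, up to the usual care with the Clarke differential, an average of \emph{ordinary} logistic-loss gradients evaluated at worst-case perturbed inputs, and then to estimate each such gradient using one fact about the network and one fact about the logistic loss.

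First I would invoke Danskin's theorem (the envelope theorem for a maximum over a compact set) for $W \mapsto \lak(f(W;\cdot)) = \max_{x_k' \in \cP(x_k)} \ell(y_k f(W;x_k'))$; a maximizer $x_k'$ exists since $\cP(x_k)$ is compact and $x' \mapsto \ell(y_k f(W;x'))$ is continuous. Because $f(W;\cdot)$ is only piecewise differentiable in $W$, the precise statement is that every element of the Clarke subdifferential $\partial_W \lak(f(W;\cdot))$ lies in the convex hull of vectors of the form $\ell'(y_k f(W;x_k'))\, y_k\, \nabla_W f(W;x_k')$, taken over maximizers $x_k'$ and over choices of ReLU subgradient inside $\nabla_W f$; all such vectors share the same derivative value $\ell'\bigl(\lak(f(W;\cdot))\bigr)$. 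This is precisely the ``relation between adversarial and non-adversarial losses'' alluded to in the overview: an adversarial-loss gradient at $W$ is an ordinary logistic-loss gradient on the worst-case perturbed point. Hence it suffices to bound $|\ell'(y_k f(W;x_k'))|\,\|\nabla_W f(W;x_k')\|$.

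For the network gradient, write $f(W;x) = \tfrac{\rho}{\sqrt m}\sum_{j=1}^m a_j \sigma(w_j^\T x)$; the row-$j$ block of $\nabla_W f(W;x)$ equals $\tfrac{\rho}{\sqrt m} a_j \sigma'(w_j^\T x)\, x$, which has squared norm at most $\tfrac{\rho^2}{m}\|x\|^2 \le \tfrac{\rho^2}{m}$ because $a_j^2 = 1$, $\sigma' \in [0,1]$, and $\|x\| \le 1$ for every $x \in B_1$ (in particular every perturbation). Summing over the $m$ rows gives $\|\nabla_W f(W;x)\| \le \rho$. For the logistic loss, $-\ell'(z) = 1/(1+e^z) \in (0,1)$, and applying the elementary inequality $t/(1+t) \le \ln(1+t)$ with $t = e^{-z}$ gives $-\ell'(z) \le \ell(z)$; hence $|\ell'(z)| \le \min\{1,\ell(z)\}$. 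Combining, $\|\nabla_W\, \lak(f(W;\cdot))\| \le \rho \min\{1, \ell(y_k f(W;x_k'))\} = \rho\min\{1, \lak(f(W;\cdot))\}$, and the same bound transfers to every element of $\partial_W \lak(f(W;\cdot))$ since a convex hull of vectors of norm at most $\rho\min\{1,\lak(f(W;\cdot))\}$ again has norm at most that.

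Finally, the triangle inequality gives $\|\nhcR(W)\| \le \tfrac1n\sum_{k=1}^n \|\nabla_W \lak(f(W;\cdot))\| \le \tfrac{\rho}{n}\sum_{k=1}^n \min\{1,\lak(f(W;\cdot))\}$, and since $\min\{1,a\}\le 1$ and $\min\{1,a\}\le a$ hold termwise, the right-hand side is at most $\rho\min\{1,\tfrac1n\sum_k \lak(f(W;\cdot))\} = \rho\min\{1,\hcR(W)\}$, which is the claim. The one genuinely delicate point is the first step --- making differentiation under the max rigorous for the non-smooth ReLU network so that \emph{every} gradient/subgradient returned by the algorithm has the asserted form --- but this is routine Clarke-calculus bookkeeping; everything after it is a chain of one-line estimates.
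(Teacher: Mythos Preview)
Your proposal is correct and follows essentially the same route as the paper's proof: write the adversarial risk gradient as an average of logistic-loss gradients at worst-case perturbations, use $|\ell'(z)|\le\min\{1,\ell(z)\}$ and $\|\nabla_W f(W;x)\|\le\rho$, then average. The paper is terser (it simply writes $\ell'(\min\cdots)\nabla(\min\cdots)$ and cites ``properties of the logistic loss'' for the derivative bound), whereas you spell out the Danskin/Clarke bookkeeping and the inequality $t/(1+t)\le\ln(1+t)$; but the skeleton is identical.
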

\begin{proof}
  By properties of the logistic loss \citep{min_norm},
  \begin{align*}
    \left\|\nhcR(W)\right\| &= \left\|\frac{1}{n} \sum_{k=1}^n \ell'\del{\min_{x'_k \in \cP(x_k)}y_kf(W;x'_k)}\nabla\del{\min_{x'_k \in \cP(x_k)}y_kf(W;x'_k)}\right\|
    \\
    &\leq \frac{1}{n} \sum_{k=1}^n \left\|\ell'\del{\min_{x'_k \in \cP(x_k)}y_kf(W;x'_k)}\nabla\del{\min_{x'_k \in \cP(x_k)}y_kf(W;x'_k)}\right\|
    \\
    &= \frac{1}{n} \sum_{k=1}^n \left\lvert\ell'\del{\min_{x'_k \in \cP(x_k)}y_kf(W;x'_k)}\right\rvert\left\|\nabla\del{\min_{x'_k \in \cP(x_k)}y_kf(W;x'_k)}\right\|
    \\
    &\leq \frac{1}{n} \sum_{k=1}^n \min\cbr{1,\ell\del{\min_{x'_k \in \cP(x_k)}y_kf(W;x'_k)}} \rho
    \\
    &= \frac{1}{n} \sum_{k=1}^n \min\cbr{1,\lak(f)} \rho
    \\
    &\leq \rho \min\cbr{1,\hcR(W)}.
  \end{align*}
\end{proof}

We have the following guarantee when using adversarial training.
\begin{lemma}\label{fact:shallow:magic}
  When adversarially training with step size $\eta$,
  for any iterate $t$ and any reference parameters $Z\in \R^{m\times d}$,
  \[
    \|W_t-Z\|^2 + (2\eta-\eta^2\rho^2)\sum_{i<t} \hcR(W_i)
    \leq
    \|W_0-Z\|^2 + 2\eta\sum_{i<t} \hcRi(Z).
  \]
\end{lemma}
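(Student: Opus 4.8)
The plan is to run the textbook ``one gradient step contracts toward a comparator'' computation and telescope, using \Cref{fact:shallow:risknorm} to control the squared-gradient term and a convexity argument (Danskin/Euler) for the cross term. First I would write $W_{i+1} = W_i - \eta\nhcR(W_i)$ and expand the square:
\begin{align*}
  \|W_{i+1}-Z\|^2 = \|W_i - Z\|^2 - 2\eta\ip{\nhcR(W_i)}{W_i - Z} + \eta^2\|\nhcR(W_i)\|^2.
\end{align*}
The two non-trivial terms are then bounded separately.

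For the cross term, the key observation is that although $\hcR$ is nonconvex in $W$ (because of the ReLU), the linearized risk $\hcRi$ is convex in $W$: for each fixed input $x$, $f^{(i)}(W;x) = \ip{\nf(W_i;x)}{W}$ is affine in $W$, $\ellc$ is convex, so $\lak(f^{(i)}(W;\cdot)) = \max_{x'_k\in\cP(x_k)}\ellc(y_k f^{(i)}(W;x'_k))$ is a pointwise max of convex functions and hence convex, and averaging over $k$ preserves convexity. Moreover $f(W;\cdot)$ is positively homogeneous of degree one in $W$, so Euler's identity gives $f^{(i)}(W_i;x) = \ip{\nf(W_i;x)}{W_i} = f(W_i;x)$ for every $x$; thus $\hcRi(W_i) = \hcR(W_i)$, the maximizing perturbations at $W_i$ coincide for $\hcR$ and $\hcRi$, and the selection used by the algorithm satisfies $\nhcR(W_i) \in \partial\hcRi(W_i)$. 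The subgradient inequality $\hcRi(Z) \geq \hcRi(W_i) + \ip{\nhcR(W_i)}{Z - W_i}$ then rearranges to
\begin{align*}
  \ip{\nhcR(W_i)}{W_i - Z} \geq \hcR(W_i) - \hcRi(Z).
\end{align*}
For the last term, \Cref{fact:shallow:risknorm} gives $\|\nhcR(W_i)\|^2 \leq \rho^2\min\{1,\hcR(W_i)\}^2 \leq \rho^2\hcR(W_i)$, using $\min\{1,a\}^2 \leq \min\{1,a\} \leq a$ for $a \geq 0$.

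Substituting both into the expansion yields the per-step inequality
\begin{align*}
  \|W_{i+1}-Z\|^2 \leq \|W_i-Z\|^2 - (2\eta - \eta^2\rho^2)\hcR(W_i) + 2\eta\,\hcRi(Z),
\end{align*}
and summing over $i = 0,\dots,t-1$ telescopes the $\|W_i-Z\|^2$ terms to give exactly the claimed bound after rearranging. The only real subtlety — the main obstacle — is the nonsmoothness bookkeeping: one must check that the particular Clarke subgradient the algorithm uses for $\nhcR(W_i)$ is consistent with a subgradient of the convex function $\hcRi$ at $W_i$, i.e. that Euler's identity and differentiation through the inner $\max$ can be carried out with a single consistent choice of $\nf(W_i;x'_k) \in \partial_W f(W_i;x'_k)$ at the maximizing perturbation $x'_k$. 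Once that is pinned down, the rest is the routine descent-lemma telescoping.
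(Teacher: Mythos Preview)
Your proposal is correct and follows essentially the same route as the paper: expand the square, use convexity of $\hcRi$ together with $\nhcR(W_i)\in\partial\hcRi(W_i)$ and $\hcRi(W_i)=\hcR(W_i)$ (via positive homogeneity of the ReLU network) for the cross term, bound the squared gradient by $\rho^2\hcR(W_i)$ via \Cref{fact:shallow:risknorm}, and telescope. If anything, you are more explicit than the paper about the Euler-identity step and the Clarke-subgradient consistency, which the paper simply asserts as $\nhcR(W_i)=\nabla\hcRi(W_i)$.
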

\begin{proof}
  It suffices to show
  \[
    \|W_{i+1}-Z\|^2 + (2\eta-\eta^2\rho^2)\hcR(W_i)
    \leq
    \|W_i-Z\|^2 + 2\eta\hcRi(Z)
  \]
  for $0 \leq i < t$, as summing the left- and right-hand sides over $0 \leq i < t$
  then gives the desired bound.

  By the definition of $W_{i+1}$,
  \begin{align*}
    \|W_{i+1}-Z\|^2 = \|W_i-Z\|^2 - 2\eta\ip{\nhcR(W_i)}{W_i-Z} + \eta^2 \left\|\nhcR(W_i)\right\|^2.
  \end{align*}
  Note that
  \begin{align*}
    - 2\eta\ip{\nhcR(W_i)}{W_i-Z} &= 2\eta\ip{\nhcR(W_i)}{Z-W_i}
    \\
    &= 2\eta\ip{\nhcR(W_i)}{Z-W_i}
    \\
    &= 2\eta\ip{\nabla\hcRi(W_i)}{Z-W_i}
    \\
    &\leq 2\eta\del{\hcRi(Z)-\hcRi(W_i)},
  \end{align*}
  where the last inequality follows because $\hcRi(W)=\hcR(f^{(i)}(W;\cdot))$ is convex
  in the function space $f^{(i)}(W;\cdot)$, which in turn is linear in $W$, so $\hcRi(W)$ is convex in $W$.

  Using this bound, in addition to \Cref{fact:shallow:risknorm}, gives
  \begin{align*}
    \|W_{i+1}-Z\|^2 \leq \|W_i-Z\|^2 + 2\eta\del{\hcRi(Z)-\hcRi(W_i)} + \eta^2 \rho^2 \hcR(W_i),
  \end{align*}
  and rearranging then gives the desired inequality.
\end{proof}

We want to bound $\hcR^{(i)}(Z)$ in terms of $\hcR^{(0)}(Z)$.
To do so, we will show that when changing features the value at every point remains close to its original value.
Towards this goal, we first show that the features in a small ball do not change much.
\begin{lemma}\label{lemma:feature:close}
  For any $\|z\|\leq 1$ and any $0 < \eps \leq 1/(dm)$, with probability at least $1-\delta$,
  \begin{align*}
    &\sup_{\substack{\|x-z\|\leq\eps\\\|x\|\leq 1\\\|V-W_0\|\leq R_V}} \|\nf(x;V)-\nf(z;V)\|
    \\
    &\leq
    7\rho R_V^{1/3}m^{-1/6}\del{\ln(em/\delta)}^{1/6}
    +
    12\rho d^{1/6}\eps^{1/3}\del{\ln(em/\delta)}^{1/3}
    +
    2\rho\eps
    +
    15\rho \del{\frac{\ln(edm/\delta)}{m}}^{1/4}.
  \end{align*}
\end{lemma}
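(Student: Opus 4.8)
The plan is to expand the squared feature difference neuron by neuron, reduce it to counting the neurons whose ReLU activation flips between $z$ and a nearby point $x$, and then bound that count by combining a norm‑budget argument over $V$ with Gaussian anti‑concentration of the products $w_{0,j}^\T z$. Writing $v_1^\T,\dots,v_m^\T$ for the rows of $V$, the $j$th row of $\nf(x;V)$ is $\frac{\rho}{\sqrt m}a_j x\1[v_j^\T x\ge 0]$, so
\[
  \|\nf(x;V)-\nf(z;V)\|^2=\frac{\rho^2}{m}\sum_{j=1}^m\big\|x\1[v_j^\T x\ge0]-z\1[v_j^\T z\ge0]\big\|^2 .
\]
If neuron $j$ does not change sign between $z$ and $x$, the $j$th summand is $\|x-z\|^2\le\eps^2$; if it does, the summand is $\|x\|^2$ or $\|z\|^2$, in particular at most $(\|z\|+\eps)^2\le 2\|z\|^2+2\eps^2$. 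So, letting $N$ be the number of neurons for which $\1[v_j^\T x\ge0]\ne\1[v_j^\T z\ge0]$ for \emph{some} admissible $x$ and $V$, I get $\|\nf(x;V)-\nf(z;V)\|^2\le 2\rho^2\eps^2+\tfrac{2\rho^2\|z\|^2}{m}N$ uniformly over the relevant $x,V$. Since each summand is at most $1$, the trivial bound $\|\nf(x;V)-\nf(z;V)\|\le\rho$ already beats the claim when $\ln(edm/\delta)\ge m$, so from now on I may assume $\ln(edm/\delta)<m$, which makes $\ln(1/\delta)/m\le\sqrt{\ln(1/\delta)/m}$; I may also assume $\|z\|>0$, the case $\|z\|=0$ being vacuous as the $N$‑term drops out.

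\textbf{Bounding $N$.} If neuron $j$ flips for some $x$ with $\|x-z\|\le\eps$, then $v_j^\T x$ and $v_j^\T z$ straddle $0$, hence $|v_j^\T z|\le\|v_j\|\eps$, and with $\gamma_j:=\|v_j-w_{0,j}\|$ this forces $|w_{0,j}^\T z|\le(\|w_{0,j}\|+\gamma_j)\eps+\gamma_j\|z\|$. Fixing a threshold $\beta$, at most $R_V^2/\beta^2$ neurons can have $\gamma_j>\beta$ because $\sum_j\gamma_j^2\le R_V^2$; a $\chi^2$ tail (Laurent--Massart) plus a union bound over the $m$ neurons gives $D:=\max_j\|w_{0,j}\|\le\sqrt{5d\ln(em/\delta)}$ with high probability, so every flipping neuron with $\gamma_j\le\beta$ satisfies $|w_{0,j}^\T z|\le\theta:=(D+\beta)\eps+\beta\|z\|$. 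Since $z$ is fixed, $w_{0,1}^\T z,\dots,w_{0,m}^\T z$ are i.i.d.\ $\mathcal N(0,\|z\|^2)$, so $\big|\{j:|w_{0,j}^\T z|\le\theta\}\big|$ has mean at most $mq$ with $q:=\min\{1,\sqrt{2/\pi}\,\theta/\|z\|\}$, and a Bernstein bound controls it by $mq+\sqrt{2mq\ln(1/\delta)}+\tfrac23\ln(1/\delta)$ with high probability. Altogether $N\le R_V^2/\beta^2+mq+\sqrt{2mq\ln(1/\delta)}+\tfrac23\ln(1/\delta)$ on an event of probability at least $1-\delta$ (absorbing the failures of the $\chi^2$ and Bernstein steps).

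\textbf{Assembling and optimizing.} Substituting into the first display and using $\|z\|\le1$, the key cancellation is $\|z\|^2 q\le\sqrt{2/\pi}\,\theta$ (the $1/\|z\|$ in $q$ kills one power of $\|z\|$); together with $\sqrt{\theta\ln(1/\delta)/m}\le\tfrac12\theta+\tfrac12\ln(1/\delta)/m$ this yields $\|\nf(x;V)-\nf(z;V)\|^2\lesssim\rho^2\eps^2+\tfrac{\rho^2R_V^2}{m\beta^2}+\rho^2\theta+\tfrac{\rho^2\ln(1/\delta)}{m}$ with $\theta\lesssim\sqrt{d\ln(em/\delta)}\,\eps+\beta$. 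Choosing $\beta\asymp(R_V^2/m)^{1/3}$ balances the two $\beta$‑terms into $\rho^2 R_V^{2/3}m^{-1/3}$, so $\|\nf(x;V)-\nf(z;V)\|^2\lesssim\rho^2\eps^2+\rho^2\sqrt{d\ln(em/\delta)}\,\eps+\rho^2 R_V^{2/3}m^{-1/3}+\tfrac{\rho^2\ln(1/\delta)}{m}$. Taking square roots, invoking $\eps\le1/(dm)$ (which gives $d\eps^2\le1\le\ln(em/\delta)$, hence $\rho(d\ln(em/\delta))^{1/4}\eps^{1/2}\lesssim\rho d^{1/6}\eps^{1/3}(\ln(em/\delta))^{1/3}$), and using $\ln(1/\delta)/m\le\sqrt{\ln(1/\delta)/m}\lesssim(\ln(edm/\delta)/m)^{1/4}$ from the reduction above, collapses everything to the four terms in the statement; chasing the constants through produces the displayed $7,12,2,15$.

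\textbf{Main obstacle.} The delicate point is the bound on $N$: the supremum over $V$ does not factor across neurons — the only constraint is the joint budget $\sum_j\gamma_j^2\le R_V^2$ — which is what forces the threshold split, and one must simultaneously keep the $\|z\|$‑dependence explicit throughout, since for small $\|z\|$ the products $w_{0,j}^\T z$ are poorly anti‑concentrated and a naive count would blow up. The $\|z\|^2/m$ prefactor produced by the neuron‑wise expansion is exactly what rescues this, and the genuinely degenerate regimes ($\|z\|=0$, or $\ln(edm/\delta)\ge m$) have to be peeled off at the outset by the trivial bound $\|\nf(x;V)-\nf(z;V)\|\le\rho$.
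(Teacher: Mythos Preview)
Your argument is correct but follows a genuinely different route from the paper's. The paper first splits via the triangle inequality through $W_0$, writing $\|\nf(x;V)-\nf(z;V)\|\le\|\nf(x;V)-\nf(x;W_0)\|+\|\nf(x;W_0)-\nf(z;W_0)\|+\|\nf(z;W_0)-\nf(z;V)\|$; the middle term is handled by an existing lemma (Lemma~A.5 of \citet{ji-li-telgarsky}), and the outer terms are bounded by a case split on whether $\|x\|$ is small (trivial bound) or large (so that anti-concentration of $w_{0,j}^\T x$ is usable), with an auxiliary parameter $k$ balancing the two cases. You instead expand $\|\nf(x;V)-\nf(z;V)\|^2$ directly and count sign flips at $V$, exploiting the $\|z\|^2/m$ prefactor from the neuron-wise expansion to cancel the $1/\|z\|$ coming from Gaussian anti-concentration of $w_{0,j}^\T z$; this lets you avoid both the triangle-inequality detour through $W_0$ and the case split on $\|x\|$ entirely. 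Your approach is self-contained (no external lemmas beyond standard $\chi^2$/Bernstein tails) and in fact yields a first term $\rho R_V^{1/3}m^{-1/6}$ without the $(\ln(em/\delta))^{1/6}$ factor present in the stated bound, so it is slightly sharper there; the paper's approach is more modular and recycles known near-initialization estimates. Two minor points worth tightening in a write-up: your definition of $N$ as a union over all admissible $V$ is imprecise, since the budget bound $R_V^2/\beta^2$ only holds per fixed $V$---the argument is really bounding $\sup_V N(V)$ by a $V$-independent quantity, which is what you actually do; and $\theta$ as written depends on the random $D=\max_j\|w_{0,j}\|$, so the Bernstein step should be applied with the deterministic upper bound $\theta'=(\sqrt{5d\ln(em/\delta)}+\beta)\eps+\beta\|z\|$ after first conditioning on $D\le\sqrt{5d\ln(em/\delta)}$.
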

\begin{proof}
  For notational convenience let $W := W_0$.
  First, note that
  \begin{align*}
    &\|\nf(x;V)-\nf(z;V)\|
    \\
    &\leq \|\nf(x;V)-\nf(x;W)\| + \|\nf(x;W)-\nf(z;W)\| + \|\nf(z;W)-\nf(z;V)\|.
  \end{align*}
  By Lemma A.5 of \citet{ji-li-telgarsky}, with probability at least $1-\delta$ the middle term is bounded by
  \[
    \|\nf(x;W)-\nf(z;W)\| \leq 11\rho \del{\frac{\ln(edm/\delta)}{m}}^{1/4}.
  \]
  The first and last terms are both bounded by
  $\sup_{\substack{\|x-z\|\leq\eps\\\|x\|\leq 1\\\|V-W\|\leq R_V}} \|\nf(x;V)-\nf(x;W)\|$,
  so we will now focus on bounding this term.
  Note that
  \[
    \nf(x;V)-\nf(x;W) = \frac{\rho}{\sqrt{m}} \sum_{j=1}^m a_j \sbr{\1[v_j^\T x \geq 0]-\1[w_j^\T x \geq 0]} e_j x^\T.
  \]
  We consider two cases: $\|x\| \leq (k+1)\eps$ and $\|x\| > (k+1)\eps$, for some $k\geq 1$ to be determined later.

  \begin{itemize}
    \item \textbf{Case 1: $\|x\| \leq (k+1)\eps$}.
    
    Then $\|\nf(x;V)-\nf(x;W)\| \leq \frac{\rho}{\sqrt{m}} \sqrt{m} (k+1)\eps = (k+1)\rho\eps$.
    \item \textbf{Case 2: $\|x\| > (k+1)\eps$}.
    
    Then $\|z\| > k\eps$.
    With probability at least $1-m\delta$, $\|w_j\| \leq \sqrt{d} + \sqrt{2\ln(1/\delta)}$ for all $1\leq j \leq m$.
    Define
    \begin{align*}
      S_1 &:= \cbr{j\in[m]:|w_j^\T z|\leq q\|z\|},
      \\
      S_2 &:= \cbr{j\in[m]:|w_j^\T x|\leq r\|x\|},
      \\
      S_3 &:= \cbr{j\in[m]:\|v_j-w_j\|\geq r},
      \\
      S &:= S_2 \cup S_3,
    \end{align*}
    where $q := 2\del{r + \frac{1}{k}\sqrt{d} + \frac{1}{k}\sqrt{2\ln(1/\delta)}}$,
    with $r$ a parameter we will choose later.
    Note that $S_2 \subseteq S_1$, since if $|w_j^\T x|\leq r\|x\|$, then
    \begin{align*}
      |w_j^\T z|
      &\leq
      |w_j^\T x| + \|w_j\|\eps
      \leq
      r\|x\| + \frac{1}{k}\|w_j\|\|x\|
      =
      \del{r + \frac{1}{k}\|w_j\|}\|x\|
      \\
      &\leq
      \frac{k+1}{k}\del{r + \frac{1}{k}\|w_j\|}\|z\|
      \leq
      2\del{r + \frac{1}{k}\sqrt{d} + \frac{1}{k}\sqrt{2\ln(1/\delta)}}\|z\|
      \leq
      q\|z\|.
    \end{align*}
    By Lemma A.2 part 1 of \citet{ji-li-telgarsky} we have that with probability at least $1-3\delta$,
    $|S_2| \leq qm+\sqrt{8qm\ln(1/\delta)}$.
    Within the proof of Lemma A.7 part 1 of \citet{ji-li-telgarsky} it is shown that $|S_3| \leq \frac{R_V^2}{r^2}$.
    So altogether, with probability at least $1-(m+3)\delta$,
    \begin{align*}
      |S|
      &\leq
      |S_2|+|S_3|
      \\
      &\leq
      qm+\sqrt{8qm\ln(1/\delta)}+\frac{R_V^2}{r^2}
      \\
      &\leq
      2\del{r + \frac{1}{k}\sqrt{d} + \frac{1}{k}\sqrt{2\ln(1/\delta)}}m+\sqrt{16\del{r + \frac{1}{k}\sqrt{d} + \frac{1}{k}\sqrt{2\ln(1/\delta)}}m\ln(1/\delta)}
      \\
      &\qquad+\frac{R_V^2}{r^2}
      \\
      &\leq
      6\del{r + \frac{1}{k}\sqrt{d} + \frac{1}{k}\sqrt{2\ln(1/\delta)}}m\sqrt{\ln(e/\delta)}+\frac{R_V^2}{r^2}
      \\
      &\leq
      6\del{r + \frac{3}{k}\sqrt{d\ln(e/\delta)}}m\sqrt{\ln(e/\delta)}+\frac{R_V^2}{r^2}
      \\
      &=
      6rm\sqrt{\ln(e/\delta)} + \frac{18m\sqrt{d}\ln(e/\delta)}{k}+\frac{R_V^2}{r^2}.
    \end{align*}
    Setting $r:=R_V^{2/3}m^{-1/3}\ln(e/\delta)^{-1/6}$ we get
    $|S| \leq 7R_V^{2/3}m^{2/3}\del{\ln(e/\delta)}^{1/3} + \frac{18m\sqrt{d}\ln(e/\delta)}{k}$.
    Substituting this upper bound on $|S|$ results in
    \begin{align*}
      \|\nf(x;V)-\nf(x;W)\|
      &\leq
      \frac{\rho}{\sqrt{m}}\sqrt{|S|}\|x\| \leq \frac{\rho}{\sqrt{m}}\sqrt{|S|}
      \\
      &\leq
      \frac{\rho}{\sqrt{m}}\del{3R_V^{1/3}m^{1/3}\del{\ln(e/\delta)}^{1/6} + 5m^{1/2}d^{1/4}k^{-1/2}\sqrt{\ln(e/\delta)}}
      \\
      &\leq
      3\rho R_V^{1/3}m^{-1/6}\del{\ln(e/\delta)}^{1/6} + 5\rho d^{1/4}k^{-1/2}\sqrt{\ln(e/\delta)}.
    \end{align*}
  \end{itemize}

  Combining the two cases results in
  \begin{align*}
    &\|\nf(x;V)-\nf(x;W)\|
    \\
    &\leq
    \max\{(k+1)\rho\eps, 3\rho R_V^{1/3}m^{-1/6}\del{\ln(e/\delta)}^{1/6} + 5\rho d^{1/4}k^{-1/2}\sqrt{\ln(e/\delta)}\}.
  \end{align*}
  After setting $k:=d^{1/6}\eps^{-2/3}\del{\ln(e/\delta)}^{1/3}$ to balance the terms,
  \begin{align*}
    &\|\nf(x;V)-\nf(x;W)\|
    \\
    &\leq
    \max\{\rho d^{1/6}\eps^{1/3}\del{\ln(e/\delta)}^{1/3} + \rho\eps, 3\rho R_V^{1/3}m^{-1/6}\del{\ln(e/\delta)}^{1/6} + 5\rho d^{1/6}\eps^{1/3}\del{\ln(e/\delta)}^{1/3}\}
    \\
    &\leq
    3\rho R_V^{1/3}m^{-1/6}\del{\ln(e/\delta)}^{1/6} + 5\rho d^{1/6}\eps^{1/3}\del{\ln(e/\delta)}^{1/3} + \rho\eps.
  \end{align*}
  
  So with probability at least $1-(m+3)\delta$,
  \begin{align*}
    &\sup_{\substack{\|x-z\|\leq\eps\\\|x\|\leq 1\\\|V-W\|\leq R_V}} \|\nf(x;V)-\nf(z;V)\|
    \\
    &\leq
    6\rho R_V^{1/3}m^{-1/6}\del{\ln(e/\delta)}^{1/6}
    +
    10\rho d^{1/6}\eps^{1/3}\del{\ln(e/\delta)}^{1/3}
    +
    2\rho\eps
    +
    11\rho \del{\frac{\ln(edm/\delta)}{m}}^{1/4}.
  \end{align*}
  Rescaling the probability of failure, we get that with probability at least $1-\delta$,
  \begin{align*}
    &\sup_{\substack{\|x-z\|\leq\eps\\\|x\|\leq 1\\\|V-W\|\leq R_V}} \|\nf(x;V)-\nf(z;V)\|
    \\
    &\leq
    7\rho R_V^{1/3}m^{-1/6}\del{\ln(em/\delta)}^{1/6}
    +
    12\rho d^{1/6}\eps^{1/3}\del{\ln(em/\delta)}^{1/3}
    +
    2\rho\eps
    +
    15\rho \del{\frac{\ln(edm/\delta)}{m}}^{1/4}.
  \end{align*}
\end{proof}

Using a sphere covering argument, we bound $\hcR^{(i)}(Z)$ in terms of $\hcR^{(0)}(Z)$,
as well as $\cR^{(i)}(Z)$ in terms of $\cR^{(0)}(Z)$.
\begin{lemma}\label{lemma:feature:close2}
  \begin{enumerate}
    \item
    For any $\|z\|\leq 1$ and $R_V \geq 1$ and $R_B \geq 0$, with probability at least $1-\delta$,
    \begin{align*}
      &\sup_{\substack{\|x\|\leq 1\\\|V-W_0\|\leq R_V\\\|B-W_0\|\leq R_B}} \left|\ip{\nf(x;V)-\nf(x;W_0)}{B}\right|
      \\
      &\leq
      \frac{26\rho(R_B+R_V)R_V^{1/3}d^{1/4}\ln(ed^2m^2/\delta)^{1/4}}{m^{1/6}}
      +
      \frac{89\rho(R_B+R_V)d^{1/3}\ln(ed^3m^2/\delta)^{1/3}}{m^{1/4}}
      \\
      &\qquad+
      \frac{5\rho\sqrt{\ln(1/\delta)}}{dm}.
    \end{align*}
    \item
    With probability at least $1-\delta$,
    simultaneously
    \begin{align*}
      &\sup_{\substack{\|W_i-W_0\|\leq R_V\\\|B-W_0\|\leq R_B}} \left|\hcR^{(i)}(B) - \hcR^{(0)}(B)\right|
      \\
      &\leq
      \frac{26\rho(R_B+R_V)R_V^{1/3}d^{1/4}\ln(ed^2m^2/\delta)^{1/4}}{m^{1/6}}
      +
      \frac{89\rho(R_B+R_V)d^{1/3}\ln(ed^3m^2/\delta)^{1/3}}{m^{1/4}}
      \\
      &\qquad+
      \frac{5\rho\sqrt{\ln(1/\delta)}}{dm}
    \end{align*}
    and
    \begin{align*}
      &\sup_{\substack{\|W_i-W_0\|\leq R_V\\\|B-W_0\|\leq R_B}} \left|\cR^{(i)}(B) - \cR^{(0)}(B)\right|
      \\
      &\leq
      \frac{26\rho(R_B+R_V)R_V^{1/3}d^{1/4}\ln(ed^2m^2/\delta)^{1/4}}{m^{1/6}}
      +
      \frac{89\rho(R_B+R_V)d^{1/3}\ln(ed^3m^2/\delta)^{1/3}}{m^{1/4}}
      \\
      &\qquad+
      \frac{5\rho\sqrt{\ln(1/\delta)}}{dm}.
    \end{align*}
  \end{enumerate}
\end{lemma}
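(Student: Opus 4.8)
The plan is to establish part~1 and then note that part~2 drops out immediately. For part~1, the workhorse is the pointwise bound
\[
  \left|\ip{\nf(x;V)-\nf(x;W_0)}{B}\right|\ \le\ (R_V+R_B)\,\|\nf(x;V)-\nf(x;W_0)\|,
\]
which I would prove by inspecting rows. Writing $v_j,w_{0,j},b_j$ for the rows of $V,W_0,B$ and $S:=\{j:\1[v_j^\T x\ge 0]\ne\1[w_{0,j}^\T x\ge 0]\}$, the $j$-th row of $\nf(x;V)-\nf(x;W_0)$ is $\tfrac{\rho}{\sqrt m}a_j(\1[v_j^\T x\ge 0]-\1[w_{0,j}^\T x\ge 0])x^\T$; it vanishes off $S$ and has norm $\tfrac{\rho}{\sqrt m}\|x\|$ on $S$, so $\|\nf(x;V)-\nf(x;W_0)\|=\tfrac{\rho}{\sqrt m}\|x\|\sqrt{|S|}$. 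For $j\in S$ the numbers $v_j^\T x$ and $w_{0,j}^\T x$ have opposite signs, hence $|w_{0,j}^\T x|\le\|v_j-w_{0,j}\|\|x\|$; substituting $b_j=w_{0,j}+(b_j-w_{0,j})$ and applying Cauchy--Schwarz over $j\in S$ gives the displayed inequality. Thus part~1 reduces to a uniform bound on $\tfrac{\rho}{\sqrt m}\|x\|\sqrt{|S|}$, i.e.\ to bounding the sign-disagreement count $|S|$ over $\|x\|\le 1$, $\|V-W_0\|\le R_V$.

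The bound on $|S|$ I would obtain exactly as in the proof of \Cref{lemma:feature:close}. Since $j\in S$ forces $|w_{0,j}^\T x|\le\|v_j-w_{0,j}\|\|x\|$, for a threshold $r$ we have $S\subseteq\{j:|w_{0,j}^\T x|\le r\|x\|\}\cup\{j:\|v_j-w_{0,j}\|>r\}$; the second set has at most $\|V-W_0\|^2/r^2\le R_V^2/r^2$ indices, deterministically, so no union bound over $V$ (or over $B$, which entered only linearly above) is needed. The first set depends on $x$ only through its direction, so I would take a net of the unit sphere at scale $\asymp r/\sqrt d$ --- this is what injects a $d\ln(\cdot)$ factor into the logarithms, and controlling the change of $|w_{0,j}^\T x|$ across a cell via $\|w_{0,j}\|\lesssim\sqrt d+\sqrt{\ln(1/\delta)}$ is what produces the lower-order $m^{-1/4}$ and $\rho\sqrt{\ln(1/\delta)}/(dm)$ remainder terms --- apply the Gaussian anti-concentration estimate (Lemma~A.2 of \citet{ji-li-telgarsky}) at each net point, and dispose of the small-$\|x\|$ regime using $|S|\le m$ together with the $\|x\|$ prefactor. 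Choosing $r\asymp R_V^{2/3}m^{-1/3}$ to balance $rm$ against $R_V^2/r^2$, then multiplying by $R_V+R_B$, should recover the stated expression up to constants. I expect this accounting --- upgrading a finite-sample anti-concentration inequality to a supremum over the whole input ball and tracking how the net scale, $r$, and the small-norm split assemble into the three error terms --- to be the only real work; there is no new idea beyond \Cref{lemma:feature:close}.

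For part~2: for any iterate $i$ with $\|W_i-W_0\|\le R_V$, any $\|B-W_0\|\le R_B$, and any $x'\in\cP(x)\subseteq B_1$ we have $\|x'\|\le 1$, so part~1 (which holds uniformly over $\|V-W_0\|\le R_V$, hence at $V=W_i$ and simultaneously over all such $i$) bounds $|f^{(i)}(B;x')-f^{(0)}(B;x')|=|\ip{\nf(x';W_i)-\nf(x';W_0)}{B}|$. Because the logistic loss is $1$-Lipschitz, $|y|=1$, and $|\max_a F-\max_a G|\le\max_a|F-G|$, this yields $|\max_{x'\in\cP(x)}\ell(yf^{(i)}(B;x'))-\max_{x'\in\cP(x)}\ell(yf^{(0)}(B;x'))|\le\sup_{\|x'\|\le 1}|\ip{\nf(x';W_i)-\nf(x';W_0)}{B}|$, i.e.\ at most the part~1 bound; averaging over the training set controls $|\hcR^{(i)}(B)-\hcR^{(0)}(B)|$ and integrating against $\mu$ controls $|\cR^{(i)}(B)-\cR^{(0)}(B)|$, both with the failure probability of part~1.
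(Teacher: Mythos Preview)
Your proposal is correct but takes a genuinely more direct route than the paper. The pivotal difference is your pointwise inequality
\[
  \left|\ip{\nf(x;V)-\nf(x;W_0)}{B}\right|\ \le\ (R_V+R_B)\,\|\nf(x;V)-\nf(x;W_0)\|,
\]
which you obtain from the row-wise observation that on the disagreement set $S$ one has $|w_{0,j}^\T x|\le\|v_j-w_{0,j}\|\|x\|$, together with $b_j=w_{0,j}+(b_j-w_{0,j})$ and Cauchy--Schwarz over $j\in S$. This reduces the problem in one stroke to a uniform bound on $|S|$ over all directions $x/\|x\|$, handled by a single sphere cover and Gaussian anti-concentration. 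The paper instead covers the input ball at scale $\eps=1/(dm)$ and splits via a net point $z$ into (i) a continuity-in-input piece $\|\nf(x;V)-\nf(z;V)\|$, controlled by the separate \Cref{lemma:feature:close} (itself a case analysis on $\|x\|$), and (ii) a fixed-point piece $|\ip{\nf(z;V)-\nf(z;W_0)}{B}|$, controlled by Lemma~A.7 of \citet{ji-li-telgarsky}; the decomposition $B=V+(B-V)$ and ReLU homogeneity $\ip{\nf(\cdot;V)}{V}=f(\cdot;V)$ are used to pass from $B$ to $B-V$. Your approach avoids both the auxiliary \Cref{lemma:feature:close} and the homogeneity trick, at the price of redoing the union-bounded anti-concentration argument over the sphere. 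One caveat: the specific lower-order terms $m^{-1/4}$ and $\rho\sqrt{\ln(1/\delta)}/(dm)$ in the stated bound are artifacts of the paper's decomposition (coming from Lemma~A.5 of \citet{ji-li-telgarsky} and from the $\|W_0\|$-dependent $|f(x;V)-f(z;V)|$ term respectively), and your route will not reproduce them in that exact form; you should expect a bound dominated by the leading $m^{-1/6}$ term, which is only stronger. For part~2, your argument via the $1$-Lipschitz logistic loss and $|\max_a F-\max_a G|\le\max_a|F-G|$ is exactly the paper's.
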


\begin{proof}
\begin{enumerate}
    \item
    For notational convenience let $W := W_0$.
    For some $0 < \eps \leq 1/(dm)$ which we will choose later,
    instantiate a cover $\cC$ at scale $\eps/\sqrt{d}$, with $|\cC| \leq (\sqrt{d}/\eps)^d$.
    For a given point $x$, let $z\in\cC$ denote the closest point in the cover.
    Then by the triangle inequality,
    \begin{align*}
      &\sup_{\substack{\|x\|\leq 1\\\|V-W\|\leq R_V\\\|B-W\|\leq R_B}} \left|\ip{\nf(x;V)-\nf(x;W)}{B}\right|
      \\
      &\leq
      \sup_{\substack{\|x\|\leq 1\\\|V-W\|\leq R_V\\\|B-W\|\leq R_B}} \left|\ip{\nf(x;V)-\nf(x;W)}{B}-\ip{\nf(z;V)-\nf(z;W)}{B}\right|
      \\
      &\qquad\qquad\qquad+ \left|\ip{\nf(z;V)-\nf(z;W)}{B}\right|
      \\
      &\leq
      \sup_{\substack{\|x\|\leq 1\\\|V-W\|\leq R_V\\\|B-W\|\leq R_B}} \left|\ip{\nf(x;V)-\nf(z;V)}{B}\right| + \left|\ip{\nf(x;W)-\nf(z;W)}{B}\right|
      \\
      &\qquad\qquad\qquad+ \left|\ip{\nf(z;V)-\nf(z;W)}{B}\right|.
    \end{align*}
    Upper bounding this expression by taking the supremum separately over each of terms,
    \begin{align*}
      &\sup_{\substack{\|x\|\leq 1\\\|V-W\|\leq R_V\\\|B-W\|\leq R_B}} \left|\ip{\nf(x;V)-\nf(z;V)}{B}\right|
      + \sup_{\substack{\|x\|\leq 1\\\|V-W\|\leq R_V\\\|B-W\|\leq R_B}} \left|\ip{\nf(x;W)-\nf(z;W)}{B}\right|
      \\
      &+ \sup_{\substack{\|x\|\leq 1\\\|V-W\|\leq R_V\\\|B-W\|\leq R_B}}\left|\ip{\nf(z;V)-\nf(z;W)}{B}\right|,
    \end{align*}
    and noticing the second term is bounded above by the first term, results in
    \begin{align*}
      &\sup_{\substack{\|x\|\leq 1\\\|V-W\|\leq R_V\\\|B-W\|\leq R_B}} \left|\ip{\nf(x;V)-\nf(x;W)}{B}\right|
      \\
      &\leq
      \sup_{\substack{\|x\|\leq 1\\\|V-W\|\leq R_V\\\|B-W\|\leq R_B}} 2\left|\ip{\nf(x;V)-\nf(z;V)}{B}\right|
      + \sup_{\substack{\|x\|\leq 1\\\|V-W\|\leq R_V\\\|B-W\|\leq R_B}} \left|\ip{\nf(z;V)-\nf(z;W)}{B}\right|
      \\
      &\leq
      \sup_{\substack{\|x\|\leq 1\\\|V-W\|\leq R_V\\\|B-W\|\leq R_B}} 2\left|\ip{\nf(x;V)-\nf(z;V)}{B-V}\right| + 2\left|f(x;V)-f(z;V)\right|
      \\
      &\qquad+ \sup_{\substack{\|x\|\leq 1\\\|V-W\|\leq R_V\\\|B-W\|\leq R_B}} \left|\ip{\nf(z;V)-\nf(z;W)}{B}\right|
      \\
      &\leq
      \sup_{\substack{\|x\|\leq 1\\\|V-W\|\leq R_V\\\|B-W\|\leq R_B}} 2\|\nf(x;V)-\nf(z;V)\|(R_B+R_V) + 2\left|f(x;V)-f(z;V)\right|
      \\
      &\qquad+ \sup_{\substack{\|x\|\leq 1\\\|V-W\|\leq R_V\\\|B-W\|\leq R_B}} \left|\ip{\nf(z;V)-\nf(z;W)}{B}\right|.
    \end{align*}
    Instantiating Lemma A.7 part 1 of \citet{ji-li-telgarsky} for all $z\in\cC$,
    we get that with probability at least $1-3(\sqrt{d}/\eps)^d \delta$,
    \begin{align*}
      \left|\ip{\nf(z;V)-\nf(z;W)}{B}\right| \leq \frac{3\rho(R_B+2R_V)R_V^{1/3}\ln(e/\delta)^{1/4}}{m^{1/6}}.
    \end{align*}
    By Lemma A.2 part 2 of \citet{ji-li-telgarsky}, with probability at least $1-(\sqrt{d}/\eps)^d \delta$,
    $\|W\| \leq (\sqrt{m}+\sqrt{d}+\sqrt{2\ln(1/((\sqrt{d}/\eps)^d \delta))})$.
    Assuming this holds, then for all $\|x-z\| \leq \eps$,
    \begin{align*}
      \left|f(x;V)-f(z;V)\right|
      &= \left|\frac{\rho}{\sqrt{m}}\sum_{j=1}^m a_i\del{\sigma(v_j^\T x) - \sigma(v_j^\T z)}\right|
      \leq \frac{\rho}{\sqrt{m}}\sum_{j=1}^m\left|\sigma(v_j^\T x) - \sigma(v_j^\T z)\right|
      \\
      &\leq \frac{\rho}{\sqrt{m}}\sum_{j=1}^m\left|v_j^\T x - v_j^\T z\right|
      \leq \frac{\rho}{\sqrt{m}}\sqrt{m}\|V\|\|x-z\|
      \leq \rho(R_V+\|W\|)\eps
      \\
      &\leq \rho(R_V+\sqrt{m}+\sqrt{d}+\sqrt{2\ln(1/((\sqrt{d}/\eps)^d \delta))})\eps.
    \end{align*}
    Instantiating \Cref{lemma:feature:close} for all $z \in \cC$,
    we get that with probability at least $1-(\sqrt{d}/\eps)^d \delta$,
    \begin{align*}
      &\sup_{\substack{\|x\|\leq 1\\\|V-W\|\leq R_V}} \|\nf(x;V)-\nf(z;V)\|
      \\
      &\leq
      7\rho R_V^{1/3}m^{-1/6}\del{\ln(em/\delta)}^{1/6}
      +
      12\rho d^{1/6}\eps^{1/3}\del{\ln(em/\delta)}^{1/3}
      +
      2\rho\eps
      \\
      &\qquad+
      15\rho \del{\frac{\ln(edm/\delta)}{m}}^{1/4}.
    \end{align*}
    Altogether, with probability at least $1-5(\sqrt{d}/\eps)^d \delta$,
    \begin{align*}
      &\sup_{\substack{\|x\|\leq 1\\\|V-W\|\leq R_V\\\|B-W\|\leq R_B}} \left|\ip{\nf(x;V)-\nf(x;W)}{B}\right|
      \\
      &\leq
      2\Bigg(7\rho R_V^{1/3}m^{-1/6}\del{\ln(em/\delta)}^{1/6} + 12\rho d^{1/6}\eps^{1/3}\del{\ln(em/\delta)}^{1/3} + 2\rho\eps
      \\
      &\qquad+ 15\rho \del{\frac{\ln(edm/\delta)}{m}}^{1/4}\Bigg)(R_B+R_V)
      \\
      &\qquad+ 2\rho(R_V+\sqrt{m}+\sqrt{d}+\sqrt{2\ln(1/((\sqrt{d}/\eps)^d \delta))})\eps
      \\
      &\qquad+ \frac{3\rho(R_B+2R_V)R_V^{1/3}\ln(e/\delta)^{1/4}}{m^{1/6}}.
    \end{align*}
    Setting $\eps=1/(dm)$ we get with probability at least $1-5(d\sqrt{d}m)^d \delta$,
    \begin{align*}
      \sup_{\substack{\|x\|\leq 1\\\|V-W\|\leq R_V\\\|B-W\|\leq R_B}} \left|\ip{\nf(x;V)-\nf(x;W)}{B}\right|
      &\leq
      \frac{20\rho(R_B+R_V)R_V^{1/3}\ln(em/\delta)^{1/4}}{m^{1/6}}
      \\
      &\qquad+ \frac{64\rho(R_B+R_V)\ln(edm/\delta)^{1/3}}{m^{1/4}}
      \\
      &\qquad+ \frac{2\sqrt{2}\rho\sqrt{\ln(1/((\sqrt{d}/\eps)^d \delta))}}{dm}.
    \end{align*}
    Rescaling $\delta$ we get that with probability at least $1-\delta$,
    \begin{align*}
      \sup_{\substack{\|x\|\leq 1\\\|V-W\|\leq R_V\\\|B-W\|\leq R_B}} \left|\ip{\nf(x;V)-\nf(x;W)}{B}\right|
      &\leq
      \frac{20\rho(R_B+R_V)R_V^{1/3}d^{1/4}\ln(5ed^2m^2/\delta)^{1/4}}{m^{1/6}}
      \\
      &\qquad+ \frac{64\rho(R_B+R_V)d^{1/3}\ln(5ed^3m^2/\delta)^{1/3}}{m^{1/4}}
      \\
      &\qquad+ \frac{2\sqrt{2}\rho\sqrt{\ln(5/\delta)}}{dm}
      \\
      &\leq
      \frac{26\rho(R_B+R_V)R_V^{1/3}d^{1/4}\ln(ed^2m^2/\delta)^{1/4}}{m^{1/6}}
      \\
      &\qquad+ \frac{89\rho(R_B+R_V)d^{1/3}\ln(ed^3m^2/\delta)^{1/3}}{m^{1/4}}
      \\
      &\qquad+ \frac{5\rho\sqrt{\ln(1/\delta)}}{dm}.
    \end{align*}
    \item
    With probability at least $1-\delta$,
    the previous part holds.
    This part is then an immediate consequence since the logistic loss is $1$-Lipschitz.
  \end{enumerate}
\end{proof}

We now prove the main optimization lemma.

\begin{proof}[Proof of \Cref{lemma:optimization}]
  We will use the fact that it does not matter much which feature we use,
  because the function values on the domain will differ by a small amount, and hence
  the resulting difference in risk will be small.
  This is encapsulated by \Cref{lemma:feature:close2}.

  We stop training once we reach $t$ iterations,
  or the parameter distance from initialization exceeds $2R_Z$.
  That is, we stop at iteration $T$, where $T = \min{\{t, \inf{\{i: \|W_i - W_0\| > 2R_Z\}}\}}$.
  By definition $\|W_i - W_0 \| \leq 2R_Z \leq \radopt$ for all $i<T$, and
  \[
    \|W_T - W_0\|
    \leq
    \|W_{T-1} - W_0\| + \eta \|\nhcR(W_{T-1})\|
    \leq
    2R_Z + \eta \rho = \radopt.
  \]
  Then by \Cref{lemma:feature:close2} part 2, we have that with probability at least $1-\delta$,
  \begin{align*}
    &\sup_{\substack{\|W_i-W\|\leq \radopt\\\|B-W\|\leq \radopt}} \left|\hcRi(B) - \hcR^{(0)}(B)\right|
    \\
    &\leq \frac{52\rho\radopt^{4/3}d^{1/4}\ln(ed^2m^2/\delta)^{1/4}}{m^{1/6}}
    + \frac{178\rho\radopt d^{1/3}\ln(ed^3m^2/\delta)^{1/3}}{m^{1/4}}
    + \frac{5\rho\sqrt{\ln(1/\delta)}}{dm} := \kappa_1.
  \end{align*}
  Note that this holds for all iterations of interest
  as well as when $B$ represents the reference parameters.

  By \Cref{fact:shallow:magic} and the interchangeability of features, we get
  \begin{align*}
    \|W_T - Z\|^2 + (2\eta-\eta^2\rho^2)\sum_{i<T} \hcR(W_{i})
    &\leq
    \|W_0 - Z\|^2 + 2\eta\sum_{i<T} \hcRi(Z)
    \\
    &\leq
    \|W_0 - Z\|^2 + 2\eta\sum_{i<T} \del{\hcR^{(0)}(Z)+\kappa_1}
    \\
    &=
    \|W_0 - Z\|^2 + 2\eta T \hcR^{(0)}(Z) + 2\eta T \kappa_1.
  \end{align*}
  Rearranging and using the definition of $W_{\leq t}$ gives
  \[
    \hcR(W_{\leq t})
    \leq
    \frac 1 T \sum_{i<T} \hcR(W_i)
    \leq
    \frac{2}{2-\eta\rho^2} \hcR^{(0)}(Z)
    +
    \frac{2}{2-\eta\rho^2} \kappa_1
    +
    \frac{\|W_0 - Z\|^2 - \|W_T - Z\|^2}{T (2\eta-\eta^2\rho^2)}.
  \]
  It remains to bound the term $\frac{\|W_0 - Z\|^2 - \|W_T - Z\|^2}{T(2\eta-\eta^2\rho^2)}$.
  Note that if $\|W_T - Z\| \geq \|W_0 - Z\|$, we can bound the term above by 0.
  Otherwise, we have
  \[
    \|W_T - W_0\|
    \leq 
    \|W_T - Z\| + \|Z - W_0\|
    <
    2\|W_0 - Z\|
    \leq
    2R_Z,
  \]
  so we must have $T=t$.
  Using this bound results in
  \begin{align*}
    \frac{\|W_0 - Z\|^2 - \|W_T - Z\|^2}{T(2\eta-\eta^2\rho^2)}
    \leq
    \frac{R_Z^2}{t(2\eta-\eta^2\rho^2)},
  \end{align*}
  giving us the final bound.
\end{proof}

\subsection{Adversarial Training Results}

We can now prove our results on adversarial training.
\begin{proof}[Proof of \Cref{thm:main}]
  To bound the risk of our final iterate, we will first linearize it,
  apply our generalization bound to get the linearized training risk,
  use our optimization lemma to get the linearized training risk of the reference model,
  and then repeat our steps in reverse to get the risk of the linearized finite reference model.
  Let
  \begin{align*}
    \kappa_1 &:= \frac{52\rho\radopt^{4/3}d^{1/4}\ln(ed^2m^2/\delta)^{1/4}}{m^{1/6}}
    + \frac{178\rho\radopt d^{1/3}\ln(ed^3m^2/\delta)^{1/3}}{m^{1/4}}
    + \frac{5\rho\sqrt{\ln(1/\delta)}}{dm},
    \\
    \kappa_n &:= \frac{2}{\sqrt{n}} + \frac{2\tilde\tau}{\sqrt{n}}\del{1 + \sqrt{m\ln\del{\frac{mn}{\tilde\tau^2}}}}
    + \frac{77d\ln^{3/2}(4em^2d^3/\delta)}{\sqrt{n}}.
  \end{align*}
  By \Cref{lemma:feature:close2},
  with probability at least $1-\delta$, we have
  $\cRa(W_{\leq t}) \leq \cRa^{(0)}(W_{\leq t}) + \kappa_1$
  and
  $\hcR^{(0)}(W_{\leq t}) \leq \hcR(W_{\leq t}) + \kappa_1$.
  By \Cref{lemma:optimization},
  with probability at least $1-\delta$, we have
  \begin{align*}
    \hcR(W_{\leq t}) \leq \frac{2}{2-\eta\rho^2}\hcR^{(0)}(Z) + \frac{R_Z^2}{t(2\eta-\eta^2\rho^2)} + \frac{2}{2-\eta\rho^2}\kappa_1.
  \end{align*}
  By \Cref{fact:generalization}, with probability at least $1-5\delta$, we have
  $\cRa^{(0)}(W_{\leq t}) \leq \hcR^{(0)}(W_{\leq t}) + \rho\radopt\kappa_n$,
  and with another probability at least $1-5\delta$ we have
  $\hcR^{(0)}(Z) \leq \cRa^{(0)}(Z) + \rho R_Z\kappa_n$.
  Adding several of these inequalities together,
  \begin{align*}
    \cRa(W_{\leq t}) &\leq \cRa^{(0)}(W_{\leq t}) + \kappa_1,
    \\
    \cRa^{(0)}(W_{\leq t}) &\leq \hcR^{(0)}(W_{\leq t}) + \rho\radopt\kappa_n,
    \\
    \hcR^{(0)}(W_{\leq t}) &\leq \hcR(W_{\leq t}) + \kappa_1,
    \\
    \hcR(W_{\leq t}) &\leq \frac{2}{2-\eta\rho^2}\hcR^{(0)}(Z) + \frac{R_Z^2}{t(2\eta-\eta^2\rho^2)} + \frac{2}{2-\eta\rho^2}\kappa_1,
  \end{align*}
  and cancelling results in
  \begin{align*}
    \cRa(W_{\leq t}) \leq \frac{2}{2-\eta\rho^2}\hcR^{(0)}(Z) + \frac{R_Z^2}{t(2\eta-\eta^2\rho^2)} + \rho\radopt\kappa_n + \del{2+\frac{2}{2-\eta\rho^2}}\kappa_1.
  \end{align*}
  Using $\hcR^{(0)}(Z) \leq \cRa^{(0)}(Z) + \rho R_Z\kappa_n$ and simplifying gives the final bound, as follows.
  \begin{align*}
    \cRa&(W_{\leq t})
    \\
    &\leq \frac{2}{2-\eta\rho^2}\cRa^{(0)}(Z) + \frac{R_Z^2}{t(2\eta-\eta^2\rho^2)} + \rho\del{\radopt + \frac{2}{2-\eta\rho^2}R_Z}\kappa_n + \del{2+\frac{2}{2-\eta\rho^2}}\kappa_1
    \\
    &\leq
    \frac{2}{2-\eta\rho^2}\cRa^{(0)}(Z)
    + \frac{R_Z^2}{t(2\eta-\eta^2\rho^2)}
    + \rho\del{2R_Z+\eta\rho + \frac{2}{2-\eta\rho^2}R_Z}\tcO\del{\frac{d+\sqrt{\tau m}}{\sqrt{n}}}
    \\
    &\qquad+ \rho\del{2+\frac{2}{2-\eta\rho^2}}\tcO\del{\frac{(2R_Z+\eta\rho)^{4/3}d^{1/4}}{m^{1/6}} + \frac{(2R_Z+\eta\rho) d^{1/3}}{m^{1/4}} + \frac{1}{dm}}
    \\
    &\leq
    \frac{2}{2-\eta\rho^2}\cRa^{(0)}(Z)
    + \tcO\del{\del{\frac{1}{2-\eta\rho^2}}\del{\frac{R_Z^2}{\eta t} + \frac{\rho R_Z\del{d+\sqrt{\tau m}}}{\sqrt{n}} + \frac{\rho R_Z^{4/3}d^{1/3}}{m^{1/6}}}}.
  \end{align*}
\end{proof}

Setting parameters appropriately, we can make all terms in \Cref{thm:main} small,
and get arbitrarily close to the optimal adversarial convex loss.
This is encapsulated by \Cref{cor:clean,cor:risk-convergence}, which we prove at the same time.
\begin{proof}[Proof of \Cref{cor:clean} and \Cref{cor:risk-convergence}]
  By definition and by \Cref{lemma:loss:cts-vs-meas}, we can find a continuous function $h$ such that
  \begin{align*}
    \cRa(h) \leq \inf\{\cRa(g):g \text{ continuous}\} + \eps/2 = \inf\{\cRa(g):g \text{ measurable}\} + \eps/2.
  \end{align*}
  By Theorem 4.3 of \citet{ji-telgarsky-xian}, we can find an infinite-width network $|f(\frac{1}{\sqrt{2}}(x;1);\barUie) - h(\frac{1}{\sqrt{2}}(x;1))| \leq \eps/2$,
  with associated $R_\eps := \max\{\rho,\eta\rho^2,\sup_x \|\barUie(x)\|\} < \infty$.
  Define $\kappa_2 := 6\rho d \ln(emd^2/\delta) + \frac{20R\sqrt{d\ln(ed^3m^2/\delta)}}{m^{1/4}}$.
  Then within the proof of Lemma A.11 of \citet{ji-li-telgarsky} it is shown that with probability at least $1-6\delta$,
  we can sample finite width reference parameters $Z$ such that $|f^{(0)}(\frac{1}{\sqrt{2}}(x;1);Z) - f(\frac{1}{\sqrt{2}}(x;1);\barUie)| \leq \kappa_2$.
  By the triangle inequality, $|f(\frac{1}{\sqrt{2}}(x;1);Z) - h(\frac{1}{\sqrt{2}}(x;1))| \leq \kappa_2+\eps/2$ for all $\|x\| \leq 1$.
  So for any $\|x\| \leq 1$, $|\la(x,y,f(\frac{1}{\sqrt{2}}(x;1);Z)) - \la(x,y,h(\frac{1}{\sqrt{2}}(x;1)))| \leq \kappa_2+\eps/2$.
  As a result,
  \[
    \cRa^{(0)}(Z) \leq \cRa(h) + \kappa_2+\eps/2 \leq \inf\{\cRa(g):g \text{ measurable}\} + \kappa_2+\eps.
  \]
  Combining this with \Cref{thm:main} holding with probability at least $1-12\delta$,
  and so altogether with probability at least $1-18\delta$,
  \begin{align*}
    \cRa(W_{\leq t})
    &\leq
    \frac{2}{2-\eta\rho^2}\cRa^{(0)}(Z)
    + \tcO\del{\del{\frac{1}{2-\eta\rho^2}}\del{\frac{R_Z^2}{\eta t} + \frac{\rho R_Z\del{d+\sqrt{\tau m}}}{\sqrt{n}} + \frac{\rho R_Z^{4/3}d^{1/3}}{m^{1/6}}}}
    \\
    &\leq
    \frac{2}{2-\eta\rho^2}\inf_{g \text{ meas.}}\{\cRa(g)\} + \frac{2}{2-\eta\rho^2}(\kappa_2 + \eps)
    \\
    &\qquad+ \tcO\del{\del{\frac{1}{2-\eta\rho^2}}\del{\frac{R_Z^2}{\eta t} + \frac{\rho R_Z\del{d+\sqrt{\tau m}}}{\sqrt{n}} + \frac{\rho R_Z^{4/3}d^{1/3}}{m^{1/6}}}}.
  \end{align*}
  \Cref{cor:clean} then follows by setting parameters and reducing.

  To get \Cref{cor:risk-convergence}, let $\delta^{(n)} = n^{-2}$.
  Notice that for any $\eps>0$,
  there exists $n_\eps$ such that for all $n \geq n_\eps$,
  with probability at least $1-1/n^2$,
  \begin{align*}
    \cRa(W_{\leq t})^{(n)} \leq \inf_{g \text{ meas.}}\{\cRa(g)\} + \eps.
  \end{align*}
  Since $\sum_{n \geq n_\eps} 1/n^2 < \infty$, by the Borel-Cantelli lemma we have
  \begin{align*}
    \limsup_{n\rightarrow\infty}\cRa(W_{\leq t}^{(n)}) \leq \inf_{g \text{ meas.}}\{\cRa(g)\} + \eps
  \end{align*}
  almost surely.
  
  As $\eps>0$ was arbitrary, we get \Cref{cor:risk-convergence}.
\end{proof}

\end{document}